\definecolor{dlow}{RGB}{225,225,240}
\definecolor{low}{RGB}{255,245,240}
\definecolor{med}{RGB}{252,146,114}
\definecolor{high}{RGB}{165,21,22}
\definecolor{codegreen}{rgb}{0,0.6,0}
\definecolor{codegray}{rgb}{0.5,0.5,0.5}
\definecolor{codepurple}{rgb}{0.58,0,0.82}
\definecolor{backcolour}{rgb}{0.95,0.95,0.92}
\lstdefinestyle{mystyle}{
    backgroundcolor=\color{backcolour},   
    commentstyle=\color{codegreen},
    keywordstyle=\color{magenta},
    numberstyle=\tiny\color{codegray},
    stringstyle=\color{codepurple},
    basicstyle=\ttfamily\footnotesize,
    % breakatwhitespace=false,         
    breaklines=true,                 
    captionpos=b,                    
    keepspaces=true,                 
    % numbers=left,                    
    % numbersep=5pt,                  
    showspaces=false,                
    % showstringspaces=false,
    showtabs=false,                  
    % tabsize=2
}
\newcommand{\maz}[1]{{\color{red}Maz: #1}}
\newcommand{\maz}[1]{}
\newcommand{\coticl}{\emph{CoT-ICL Lab}}
\newcommand{\coticlnew}{\emph{CoT-ICL Lab-2.0}}
\renewcommand{\paragraph}{%
  \@startsection{paragraph}{4}%
  {\z@}{1.5ex \@plus 1ex \@minus .2ex}{-1em}%
  {\normalfont\normalsize\bfseries}%
}
\def\mC{{\mathbf{C}}}
\def\mD{{\mathbf{D}}}
\def\mE{{\mathbf{E}}}
\def\mM{{\mathbf{M}}}
\def\mN{{\mathbf{N}}}
\def\va{{\mathbf{a}}}
\def\ve{{\mathbf{e}}}
\def\vh{{\mathbf{h}}}
\def\vm{{\mathbf{m}}}
\def\vp{{\mathbf{p}}}
\def\vt{{\mathbf{t}}}
\def\vx{{\mathbf{x}}}
\def\vy{{\mathbf{y}}}
\def\vz{{\mathbf{z}}}
\def\gD{{\mathcal{D}}}
\def\gF{{\mathcal{F}}}
\def\gG{{\mathcal{G}}}
\def\gH{{\mathcal{H}}}
\def\gN{{\mathcal{N}}}
\def\gU{{\mathcal{U}}}
\def\gV{{\mathcal{V}}}
\def\sI{{\mathbb{I}}}
\def\sR{{\mathbb{R}}}
\def\tTF{{\texttt{TF}}}
\def\tCE{{\texttt{CE}}}
\def\tacc{{\texttt{accuracy}}}
\newtheorem{theorem}{Theorem}[section]
\title{To Think or Not to Think: The Hidden Cost of Meta-Training with Excessive CoT Examples}
\author{
 \textbf{Vignesh Kothapalli\textsuperscript{1}},
 \textbf{Ata Fatahibaarzi\textsuperscript{2}},
 \textbf{Hamed Firooz}, %\textsuperscript{3}}
 \textbf{Maziar Sanjabi}%\textsuperscript{4}}
\\
 \textsuperscript{1}Stanford University,
 \textsuperscript{2}LinkedIn AI
 % \textsuperscript{3}Meta AI,
 % \textsuperscript{4}Amazon AGI
 % \textsuperscript{4}Affiliation 4,
% \\
%  \small{
%    \textbf{Correspondence:} \href{mailto:vigneshk@cs.stanford.edu}{vigneshk@cs.stanford.edu}
%  }
}
\begin{document}
\maketitle

\begin{abstract}

Chain-of-thought (CoT) prompting combined with few-shot in-context learning (ICL) has unlocked significant reasoning capabilities in large language models (LLMs). 
However, ICL with CoT examples is ineffective on novel tasks when the pre-training knowledge is insufficient. We study this problem in a controlled setting using the \coticl~\citep{kothapalli2025cot} framework, and propose meta-training techniques to learn novel abstract reasoning tasks in-context. Although CoT examples facilitate reasoning, we noticed that their excessive inclusion during meta-training degrades performance when CoT supervision is limited. To mitigate such behavior, we propose \texttt{CoT-Recipe}, a formal approach to modulate the mix of CoT and non-CoT examples in meta-training sequences. We demonstrate that careful modulation via \texttt{CoT-Recipe} can increase the accuracy of transformers on novel tasks by up to $300\%$ even when there are no CoT examples available in-context. We confirm the broader effectiveness of these techniques by applying them to pretrained LLMs (Qwen2.5 series) for symbolic reasoning tasks and observing gains of up to $130\%$ in accuracy. Code is available at: \href{https://github.com/kvignesh1420/cot-icl-lab}{https://github.com/kvignesh1420/cot-icl-lab}
% We address this gap in abstract reasoning tasks by proposing meta-training techniques in a controlled setting using the \coticl~\citep{kothapalli2025cot} framework and demonstrate the effectiveness of these findings for fine-tuning LLMs. 
% In particular, we propose~\coticlnew, which introduces (1) special abstract tokens to delineate reasoning and answer segments in CoT examples and (2) \texttt{CoT-Recipe}, a formal approach to modulate the mix of CoT and non-CoT examples in training sequences. To mitigate such behavior, we leverage the \texttt{CoT-Recipe} approach and demonstrate that careful balancing of CoT and non-CoT examples can increase the accuracy of transformers by up to $300\%$ even when there are no CoT examples available in-context. We confirm the broader effectiveness of these meta-training techniques by applying them to pretrained LLMs (Qwen2.5 series) for symbolic reasoning tasks and observing gains of up to $130\%$ in accuracy. Code is available at: \href{https://github.com/kvignesh1420/cot-icl-lab}{https://github.com/kvignesh1420/cot-icl-lab}
\end{abstract}

% \begin{abstract}
% Chain-of-thought (CoT) prompting combined with few-shot in-context learning (ICL) has unlocked significant reasoning capabilities in large language models (LLMs). However, when the pre-training knowledge is insufficient for the tasks at hand, there are no principled fine-tuning data recipes that facilitate reasoning with limited in-context CoT supervision. We address this gap in a controlled setting using the \coticl~\citep{kothapalli2025cot} framework and demonstrate the effectiveness of these findings for supervised fine-tuning of LLMs. In particular, we propose~\coticlnew, which introduces (1) special abstract tokens that delineate reasoning and answer segments and (2) \texttt{CoT-Recipe}, a formal approach to modulate the mix of CoT and non-CoT examples in training sequences. Although CoT examples facilitate reasoning, we noticed that their excessive inclusion in few-shot prompts during training degrades model performance at inference when CoT supervision is limited. To mitigate such behavior, we leverage the \texttt{CoT-Recipe} approach and demonstrate that careful balancing of CoT and non-CoT examples during training can increase accuracy by up to $300\%$. On symbolic reasoning tasks using pretrained LLMs (Qwen2.5 series), we observe gains of up to $130\%$ in accuracy, confirming the broader effectiveness. Code is available at: \href{https://github.com/kvignesh1420/cot-icl-lab}{https://github.com/kvignesh1420/cot-icl-lab}
% \end{abstract}

\section{Introduction}

\begin{figure*}[h]
    \centering
    \includegraphics[width=0.9\linewidth]{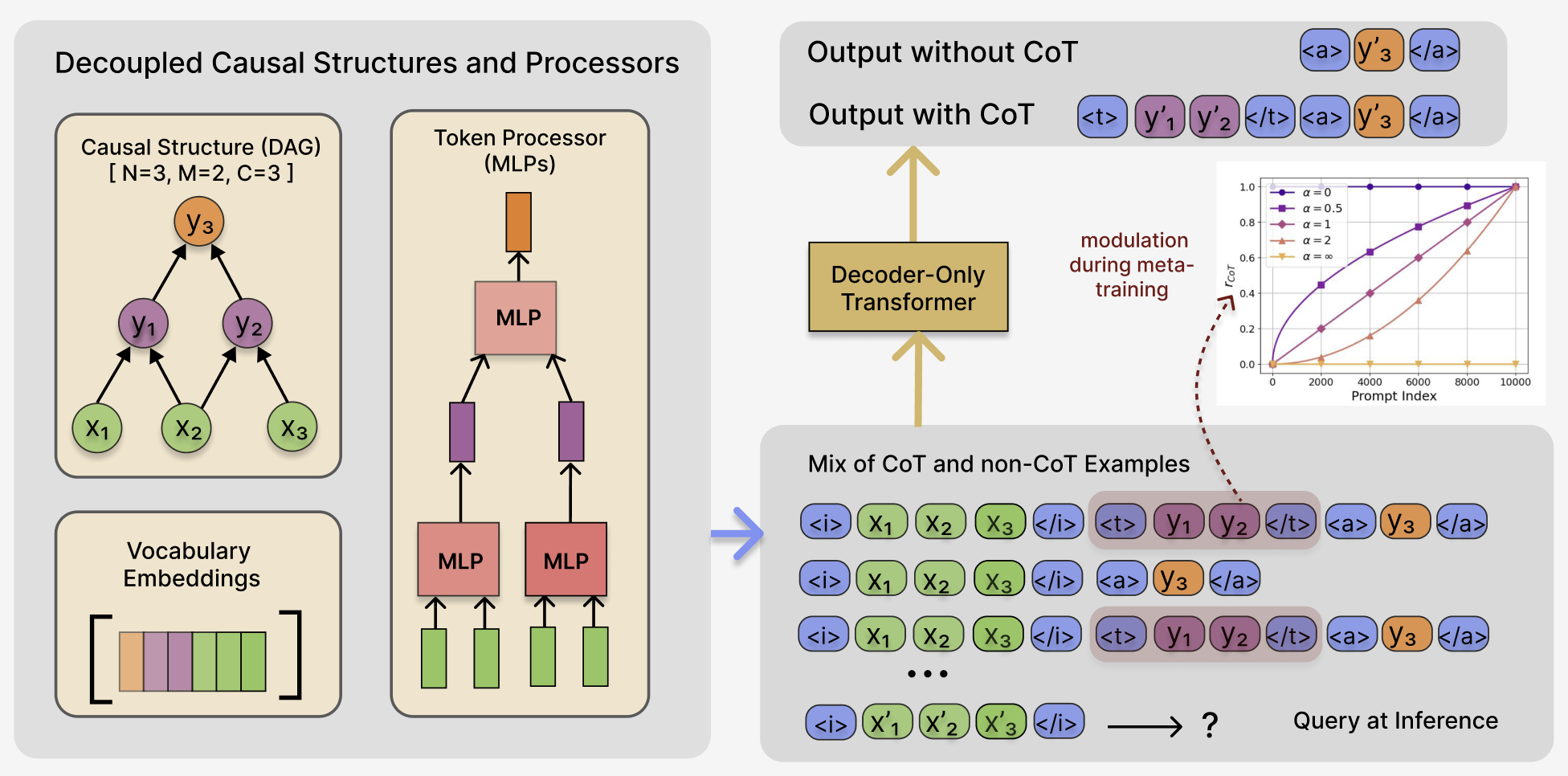}
   \caption{The \coticlnew~framework. (1) We incorporate special tokens (marked in blue) to act as delimiter tokens between input, intermediate/thinking, and answer tokens. (2) Each sequence is constructed using a specific DAG, which determines the number of input and chain tokens per example. (3) The choice of $r_{CoT}$ is varied across sequences as per the \texttt{CoT-Recipe} and modulates the mix of \textit{CoT/standard examples} for meta-training.}
    \label{fig:cot_icl_lab_extended}
\end{figure*}

% \begin{figure}
%     \centering
%     \includegraphics[width=\linewidth]{images/intro.png}
%     \caption{Design philosophy of~\coticl. (Left) Vocabularies corresponding to natural language, image patches and audio snippets are generalized using an abstract vocabulary, that is devoid of semantics. (Right) The abstract tokens are randomly sampled to create a sequence of input tokens ($2$ in this case) for each in-context example. The underlying causal structure and CoT token generation (for the $3$ CoT tokens in this case) is determined by a randomly sampled DAG and MLP token embedding processor respectively. \maz{why do we need to emphasize multimodal?} }
%     \label{fig:intro}
%     \vspace{-4mm}
% \end{figure}

%  \begin{figure}
%     \centering
%          \includegraphics[width=\linewidth]{images/power_law_recipes.jpg}
%     \caption{Systematic control over the probability of \textit{CoT examples} in few-shot training prompts. Here $r_{CoT}$, determined by the \texttt{CoT-Recipe} parameter $\alpha$, represents the probability that an in-context example is a \textit{CoT example} instead of a non-CoT or \textit{standard example}. }
%     \label{fig:power_law_recipes}
%     \vspace{-3mm}
% \end{figure}

Recent advances in large language models (LLMs) have demonstrated remarkable reasoning abilities when prompted to generate step-by-step solutions to problems. A prime example is \emph{chain-of-thought} (CoT) prompting~\citep{Wei2022, Kojima2022}, where appending a prompt with \textit{``Let's think step by step"} can induce an LLM to generate intermediate ``thought'' steps~\cite{nye2022show}, and enable them to tackle multi-step problems. CoT prompting, specially when combined with \textit{in-context learning} (ICL)~\citep{brown2020language} has yielded impressive gains on arithmetic, commonsense, and symbolic reasoning benchmarks \citep{Wei2022, Kojima2022}, and has become a key technique for eliciting reasoning in LLMs.

Despite its successes, CoT in-context learning (CoT-ICL) faces several limitations. First, models often require carefully chosen exemplars~\cite{min2022rethinking, zhao2021calibrate} for effective ICL. Next, few-shot CoT prompting uses handcrafted demonstrations of question-answer pairs with rationales, which can be labor-intensive to create for each task \citep{Wei2022, kim2023the}.
% Zero-shot CoT prompting \citep{Kojima2022} offers a shortcut by using a generic trigger phrase, but not all models or scenarios respond equally well to this prompt\maz{citation?}.
Moreover, the benefits of CoT prompting tend to emerge with model scale; while smaller models struggle to produce answers with good reasoning unless fine-tuned to do so~\citep{Li2023, huang2023large, ho2023large, kim2023the}. 
%Second, always producing a detailed chain-of-thought can be inefficient or unnecessary for simpler queries, and it may introduce opportunities for the model to go off-track. In standard CoT prompting, the model decides implicitly when to stop reasoning and provide an answer, which could lead to overly long or even irrelevant explanations. 
% Moreover, if a model is exclusively trained or used with CoT-style prompts, it might become overly reliant on them, failing to solve problems directly when no rationale is present in exemplar. This over-reliance and lack of reasoning generalization is undesirable and could degrade performance in settings where generating exemplars with a rationale is infeasible \maz{citation? if this is our result, we should mention it.}. 
%Finally, while having the model think aloud is useful, it is not guaranteed to be \emph{correct} or optimal on the first attempt. Different chains of thought can lead to different answers, so harnessing multiple reasoning paths could be beneficial.
% \maz{Rest of Introduction needed}

These issues are exacerbated when the tasks are entirely novel and the pre-training knowledge of LLMs is insufficient to generate the correct responses. For example, prompting LLMs to answer domain-specific queries whose background knowledge is not included in the pre-training data. In such scenarios, the models have to rely solely on the (possibly limited) task descriptions and the in-context examples to generate a response. Having CoT examples aids in revealing more information about the task, but their availability might be limited due to data curation constraints. 

While previous works have explored meta-training approaches~\cite{min2022metaicl, chen2022meta} with ICL as an objective, the role of CoT exemplars in the data recipes and inference prompts has been largely overlooked. By addressing this gap, our work aims to understand if models can be meta-trained to effectively leverage the (limited) CoT examples at inference for solving novel tasks. In particular, we study this problem in a controlled setting using the~\coticl~ framework~\cite{kothapalli2025cot} for abstract reasoning with transformers. Although CoT exemplars can aid in learning about the task, we find that their excessive inclusion during meta-training can be detrimental to the model's performance when such supervision is limited (during inference). We propose principled data curation recipes to modulate the mix of CoT and non-CoT examples in sequences to address this issue (Figure~\ref{fig:cot_icl_lab_extended}). We also create a novel symbolic reasoning dataset called \texttt{CIL-LangSym} and meta-train LLMs (Qwen-2.5 series) with our data recipes to show that they can reason effectively on these domain-specific queries (1) even in the absence of CoT exemplars and (2) limited task descriptions. In summary, our key contributions are as follows:

% While there has been significant interest in curating diverse reasoning datasets, the focus has largely been on zero-shot CoT prompts via notions of problem complexity and reasoning length~\cite{lambert2024tulu, guo2025deepseek,abdin2025phi, muennighoff2025s}. 
% However, the few-shot CoT prompts have been largely overlooked during training. In this work, we show that excessive inclusion of CoT examples in few-shot prompts during training is detrimental to the model's reasoning capability during inference.
% In particular, they fail to reason when supervision via CoT examples is limited. Our work systematically studies this behavior for abstract reasoning tasks and proposes techniques to carefully curate the training data and mitigate such issues. 

% by leveraging and enhancing the recently proposed~\coticl~framework~\cite{kothapalli2025cot} for abstract reasoning tasks.
% Figure~\ref{fig:intro} illustrates the design philosophy, where vocabularies corresponding to different modalities can be generalized to an `abstract vocabulary' simply by disassociating semantics from the token ids. This flexibility allows us to generate ICL examples using only the token indices where input tokens are sampled randomly and chain tokens are generated using pre-determined token processing functions such as MLPs. Furthermore, the causal dependencies between tokens can be controlled via DAGs, thus facilitating a mechanistically interpretable analysis of transformer learning dynamics.

\begin{enumerate}
    \item We introduce~\coticlnew, an extension of~\coticl~by~\citet{kothapalli2025cot} for meta-training on abstract reasoning tasks. It incorporates special tokens to isolate the `input', `thinking', and `answer' segments of examples, and allows \textit{reasoning control} in trained transformers by enabling dynamic invocation of multi-step reasoning or direct final answers as needed. 
    % Thus, allowing investigation on the role of task diversity and CoT for facilitating ICL.
    % a systematic study on (a) modulating the mix of CoT and non-CoT examples during training and (b) inference strategies that force a transformer to either `Think' (i.e, generate CoT tokens) or directly `Answer' (i.e, generate only the answer token) for a given abstract reasoning query.
    \item We introduce \texttt{CoT-Recipe}, a formal approach to modulate the mix of CoT and non-CoT examples in sequences during meta-training. In essence, datasets curated with this approach allow the models to reason even in the absence of CoT exemplars. 
    % \maz{this is better wording. let's use something similar in abstract too.}
    % \item To validate the effectiveness of \texttt{CoT-Recipe}s, we employ the design principles of~\coticlnew~and create a new NLP based symbolic reasoning dataset called \texttt{CIL-LangSym}. Our results show that the pretrained Qwen-2.5 series of LLMs can indeed be fine-tuning based on \texttt{CoT-Recipe}'s for improving reasoning control. 
    \item We leverage the insights from systematic experiments with~\coticlnew~to improve the CoT-ICL capabilities of real-world LLMs (\texttt{Qwen-2.5} series) on symbolic reasoning tasks. Especially when their pre-trained knowledge is insufficient for reasoning.
\end{enumerate}

% Especially, we formalize the diversification of few-shot prompts (i.e modulating a mix of CoT and non-CoT examples), and study its impact on the transformer model's over-reliance on CoT examples for ICL.

\section{Related Work}

% in-context learning has garnered extensive attention for its surprising ability to generalize with just a few example prompts. Many investigations center on how transformers might implicitly perform gradient descent or implement other adaptation mechanisms in their hidden activations \citep{garg2022can,akyurek2024incontext,von2023transformers, bai2023transformers}. See \cite{} for surveys on the topic. However, these analyses often assume real-valued examples and very simple data distributions, leaving room to explore richer compositional structures that can align with natural language tasks.

\paragraph{Chain-of-Thought Prompting.} CoT prompting with~\citep{Wei2022} and without~\citep{Kojima2022} ICL has been an effective strategy to improve model performance in complex reasoning tasks. However, CoT prompting's effectiveness is strongly dependent on model scale and the quality of exemplars~\cite{Li2023}. 
% As noted by \citet{Li2023}, CoT reasoning reliably emerges only in models above a certain parameter threshold.
% (often cited around 50B or more).
% Smaller models often ignore the reasoning prompt or produce incoherent chains, resulting in little to no gain. 
% This has driven research into techniques for teaching smaller models to utilize CoT.
Additionally, designing good CoT exemplars for few-shot prompting can be non-trivial~\cite{min2022rethinking, zhao2021calibrate} as the exemplars need to be representative and align with the task at hand~\cite{wang2023towards, chu2024navigate}. This highlights the brittleness of the current models in utilizing the CoT exemplar\footnote{For instance: The Tulu-3 report~\cite{lambert2024tulu} describes the effectiveness of zero-shot CoT over 1-shot CoT prompts for evaluations on MATH~\cite{hendrycks2021measuring}.}. 
% Considering these limitations, \citet{Zhang2022} proposed \emph{Auto-CoT}, an approach to automatically generate diverse and effective CoT exemplars using a large model itself: they sample multiple questions and have the model answer them with CoT (using a zero-shot prompt like~\cite{Kojima2022}), then filter or select exemplars to create a few-shot prompt for a given task. While our work does not focus on exemplar selection, such techniques are complementary and could be integrated with our approach in the future. 
% This automated method often matches the performance of manual exemplars. 
Beyond the basic paradigm of CoT prompting, the `ReAct' framework by \citet{Yao2023} blends CoT prompting with actions that interface with an external environment. In this framework, the model is prompted in a format where it alternates between generating a thought (reflection on the task state) and an action (like a command to a tool or an environment). This is a departure from pure CoT, but it underscores a prompt design principle: by structuring the model's output into labeled sections (thought vs. action), one can guide it to perform complex interactive tasks. Our design of the~\coticlnew~framework by introducing `special' tokens has a similar flavor in that we isolate different parts of the output (reasoning vs. answer), which in turn allows us to modulate the mix of CoT and non-CoT examples in-context. 
% \maz{want to mention ``wait" work?}
% The success of ReAct on interactive tasks suggests that clearly delineating the model's internal reasoning from external outputs can lead to more controllable and interpretable behavior. 

\paragraph{In-Context Learning and Meta-Training.} The ICL capability of LLMs~\cite{brown2020language} to recognize and generalize to unseen patterns during inference has been extensively studied and explored in many practical and theoretical settings \cite{achiam2023gpt, team2024gemini, firooz2025360brew, dong2024survey, zhou2024mystery}. Beyond the theoretical exploration of this behaviour using single-layer transformers~\cite{oko2024pretrained,chang2025provable} and real-valued inputs~\cite{garg2022can,bai2023transformers,li2023dissecting}, a recently developed framework called~\coticl~\cite{kothapalli2025cot} leveraged a vocabulary of abstract tokens to train transformer models from scratch and shed light on the emergent ICL capabilities.
% In particular, \cite{kothapalli2025cot} studied the role of vocabulary size, model size, chain length and data embeddings for learning the underlying causal dependencies between input and chain tokens. 
The prompt design employed by~\citet{kothapalli2025cot} for the abstract reasoning tasks is similar to the meta-training approaches~\cite{min2022metaicl, chen2022meta, min2022rethinking} used in the broader literature with NLP prompts. However, as with the design of any few-shot prompt~\cite{kim2023the, biderman2024lessons}, the sequences are usually limited to either including/excluding the `chain' tokens (i.e, the thinking tokens) in \textit{all} the in-context examples. Thus, the current understanding of the proportion of CoT vs non-CoT examples needed to meta-train these models and facilitate ICL is rather limited. Our work sheds light on this underexplored aspect and demonstrates the effectiveness of such modulation when the tasks (at inference) have limited CoT examples.

\section{Preliminaries and Setup}
\label{sec:prelim_and_setup}

\paragraph{Notation.} Let $[K] = \{1, \cdots, K\}$. We consider a vocabulary $\gV$ of `abstract' tokens that is associated with a data embedding matrix $\mE_{\texttt{data}} \in \sR^{|\gV| \times d}$. Here $d$ denotes the data embedding dimension and the entries of $\mE_{\texttt{data}}$ are sampled i.i.d from $\gN(0,1)$. Let $\gG, \gH$ denote the causal structure and token processing function classes respectively. Formally, $\gG$ represents functions that select $M$ tokens from an arbitrary number of input tokens. $\gH$ represents functions that process the data embeddings (i.e, rows of $\mE_{\texttt{data}}$) of $M$ tokens and output a single token. In this setup, we are interested in learning a class of functions $f \in \gF$ that are compositional $f = f_C \circ f_{C-1} \cdots \circ f_1$ and can be formulated as $f_c = h_c \circ g_c$, where $g_c \in \gG, h_c \in \gH$. Given $N$ input tokens $\vx = (x_1, \cdots, x_N)$,  $f$ recursively generates $C$ \textit{chain tokens} $\vy = (y_1, \cdots, y_C)$ as:
\begin{align}
\begin{split}
\label{eq:comp_f_dag}
    y_c = h_c\left(g_c(\vx, y_1, \cdots, y_{c-1})\right), \forall c \in [C].
\end{split}
\end{align}
Here $\vy_{:C-1} = (y_1, \cdots, y_{C-1})$ denote the \textit{intermediate/thinking tokens}, and $y_C$ represents the final \textit{answer token}.

\paragraph{Special tokens.} We reserve a subset of tokens ($\gV_{\texttt{special}} \in \gV$) to serve as delimiters. The rest of the vocab is denoted by $\gV_{\texttt{normal}}$. These special tokens comprise of $t_{\texttt{bos}}$, $t_{\texttt{eos}}$, $t_{\texttt{pad}}$, $t_{\texttt{inp\_start}}$, $t_{\texttt{inp\_end}}$, $t_{\texttt{think\_start}}$, $t_{\texttt{think\_end}}$, $t_{\texttt{ans\_start}}$, and $t_{\texttt{ans\_end}}$. The role of these tokens are similar to the ones used in NLP. For example: $t_{\texttt{think\_start}}$, $t_{\texttt{think\_end}}$ here correspond to the \texttt{<think>, </think>} tokens in NLP (see Table~\ref{tab:notations}).

% \subsection{Multi-Input ICL and CoT with Special Tokens}

\paragraph{In-context example design.} We design a \textit{standard} \textit{example} $\ve(f)$  using tokens from $\gV_{\texttt{special}}$, the $N$ \textit{input tokens} $\vx \in \gV_{\texttt{normal}}^N$ and the \textit{answer token} $y_C \in \gV_{\texttt{normal}}$ from \eqref{eq:comp_f_dag} as:
\begin{align}
\begin{split}
\label{eq:standard_example}
\vx' &= \left( t_{\texttt{inp\_start}}, \vx, t_{\texttt{inp\_end}} \right), \\
\va' &= \left( t_{\texttt{ans\_start}}, y_{C}, t_{\texttt{ans\_end}} \right), \\
    \ve(f) &= \left( \vx', \va', t_{\texttt{eos}} \right).
\end{split}
\end{align}
Similarly, we include tokens $t_{\texttt{think\_start}}, t_{\texttt{think\_end}}$ along with the $C-1$ \textit{intermediate tokens} $\vy_{:C-1} \in \gV_{\texttt{normal}}^{C-1}$ in a \textit{standard example} to obtain a \textit{CoT example} $\ve_{\text{CoT}}(f)$ as follows:
\begin{align}
\begin{split}
\label{eq:cot_example}
\vx' &= \left( t_{\texttt{inp\_start}}, \vx, t_{\texttt{inp\_end}} \right), \\
\vt' &= \left( t_{\texttt{think\_start}}, \vy_{:C-1},  t_{\texttt{think\_end}} \right), \\
\va' &= \left( t_{\texttt{ans\_start}}, y_{C}, t_{\texttt{ans\_end}} \right), \\
    \ve_{\text{CoT}}(f) &= \left( \vx', \vt', \va', t_{\texttt{eos}} \right).
\end{split}
\end{align}

\paragraph{Tokenized sequence design.} Instead of designing a sequence with only \textit{standard} or \textit{CoT examples}, we employ a CoT probability parameter $r_{\text{CoT}}$ and diversify the sequences with a mix of both types (Figure~\ref{fig:cot_icl_lab_extended}). Formally, a sequence $\vp^K(f, r_{\text{CoT}})$ with $K$ examples is given by:
\begin{align}
\begin{split}
\label{eq:seq_design}
    \vp^K(f, r_{\text{CoT}}) &= \left(t_{\texttt{bos}}, \left( \vz^{(i)} \right)_{i=1}^K \right), \quad \\
    \text{where } 
    \vz^{(i)} &= 
    \begin{cases}
        \ve^{(i)}_{\text{CoT}}(f) & \text{if } r_{\text{CoT}} \ge u^{(i)}, \\
        \ve^{(i)}(f) & \text{otherwise}.
    \end{cases}
\end{split}
\end{align}
Here $u^{(i)} \sim \mathcal{U}(0,1)$ denotes a scalar sampled from the uniform distribution $\mathcal{U}(0,1)$ for deciding the $i^{th}$ example design. Note that choosing $r_{\text{CoT}}=1$ gives us the special case of all \textit{CoT examples} in-context, whereas $r_{CoT}=0$ gives a sequence with all \textit{standard examples}.

% \paragraph{Remark.} Here $t_{\texttt{think\_start}}$ and $t_{\texttt{ans\_start}}$ act as `trigger' tokens which conditions the transformer model (during training) to either start thinking or just provide the final answer. As a result, this design allows us to `force' a model to (1) think before answering by appending $t_{\texttt{think\_start}}$ or (2) directly provide an answer by appending the $t_{\texttt{ans\_start}}$ token to the evaluation query respectively.

\subsection{Model Training and Evaluation}

\paragraph{Models.} We follow the setup of~\citet{kothapalli2025cot} and train $3$ custom transformer models based on the Llama-3~\cite{dubey2024llama} architecture with depth $l\in\{4,8,12\}$ (Table~\ref{tab:model_card}). For notational consistency, we denote a model with $L$ layers as $\texttt{TF-}L$ throughout the paper. Appendix~\ref{app:hardware_hyperparams} presents details about the experiment settings.

\paragraph{Training objective.} We employ the Cross-Entropy ($\tCE$) based next-token prediction loss with masking for training the $\tTF$ models~\cite{kothapalli2025cot}. The $\tCE$ loss is computed only on the $4$ tokens: $t_{\texttt{ans\_start}}$, $t_{\texttt{ans\_end}}$,$y_C^{(i)}$ and $t_{\texttt{eos}}$ per \textit{standard example}. Similarly, the loss is computed only on the $C+5$ tokens per \textit{CoT example}, i.e, $t_{\texttt{think\_start}}$, $t_{\texttt{think\_end}}$, $t_{\texttt{ans\_start}}$, all the \textit{chain tokens} $\vy^{(i)}$, $t_{\texttt{ans\_end}}$, and $t_{\texttt{eos}}$.

\paragraph{Evaluation prompts.} 

Considering a test function $\tilde{f} \in \gF$ and the query input tokens $\tilde{\vx} = (\tilde{x}_1, \cdots, \tilde{x}_N) \in \gV_{\texttt{normal}}^N$, the evaluation prompt $\tilde{\vp}^{K}(\tilde{f}, r_{CoT})$ is defined as:
\begin{align}
\begin{split}
\tilde{\vx}' &= \left( t_{\texttt{inp\_start}}, \tilde{\vx}, t_{\texttt{inp\_end}}\right), \\
     \tilde{\vp}^{K}(\tilde{f}, r_{CoT}) &:= \left( \vp^{K-1}(\tilde{f}, r_{CoT}), \tilde{\vx}' \right).
\end{split}
\end{align}

% \maz{too complicated. needs simplification. Do we really need it?}
% Considering a test function $\tilde{f} \in \gF$, we first generate $K-1$ in-context examples and prepare the prefix sequence as  $\vp^{K-1}(\tilde{f},r_{\text{CoT}})$. Next, we append the special token $t_{\texttt{inp\_start}}$, the query input tokens $\tilde{\vx} = (\tilde{x}_1, \cdots, \tilde{x}_N) \in \gV_{\texttt{normal}}^N$, and the special token $t_{\texttt{inp\_end}}$. We denote the ground truth chain tokens for $\tilde{\vx}$ as $\tilde{\vy} = (\tilde{y}_1, \cdots, \tilde{y}_C) \in \gV_{\texttt{normal}}^C$, which are generated by the formulation given in \eqref{eq:comp_f_dag} using $\tilde{f} \in \gF$. The evaluation prompt $\tilde{\vp}^{K}(\tilde{f}, r_{CoT})$ is now defined as:
% \begin{align}
% \begin{split}
% \tilde{\vx}' &= \left( t_{\texttt{inp\_start}}, \tilde{\vx}, t_{\texttt{inp\_end}}\right), \\
%      \tilde{\vp}^{K}(\tilde{f}, r_{CoT}) &:= \left( \vp^{K-1}(\tilde{f}, r_{CoT}), \tilde{\vx}' \right).
% \end{split}
% \end{align}

% \begin{align}
% \begin{split}
%     \hat{y}_{pred} &:= \tTF\left( \vp^{K-1}(\tilde{f}), \tilde{\vx} \right) \hspace{10pt} \textit{ w/o CoT,} \\
%     \hat{y}_{pred} &:= \tTF^{\circ C}\left( \vp_{CoT}^{K-1}(\tilde{f}), \tilde{\vx} \right) \hspace{10pt} \textit{ w/ CoT}. 
% \end{split}
% \end{align}
\paragraph{Forcing strategies.} Incorporating special tokens in the prompt design allows us to measure the performance of the trained \texttt{TF} models using the following $3$ approaches: (1) `Force Think', (2) `Force Answer' and (3) `No Forcing'. Consider $C_{eos}$ as the number of tokens generated by the model, and $\tTF(\cdot)$ as greedy auto-regressive single token generation.
% Based on the above-mentioned forcing strategies, we tabulate the output generation formulations in Table~\ref{tab:eval_output_variants}.
The recursive formulation $\hat{y_o}, \forall o \le C_{eos}$ with the `Force Think' strategy is as follows:
\begin{align}
\label{eq:force_strategy}
    \tTF\left( \underbrace{\tilde{\vp}^{K}(\tilde{f}, r_{CoT})}_{\textit{evaluation seq}},\underbrace{t_{\texttt{think\_start}}}_{\textit{force token}}, \underbrace{\hat{y}_1, \cdots, \hat{y}_{o-1}}_{\textit{previous step outputs}} \right).
\end{align}
By replacing $t_{\texttt{think\_start}}$ with  $t_{\texttt{ans\_start}}$ in \eqref{eq:force_strategy}, we condition the model to directly provide the final answer, whereas `No Forcing' completely removes the \textit{force token} suffix and allows the model to choose between thinking and no thinking modes.

\paragraph{Measuring \tacc.} We denote the ground truth chain tokens for $\tilde{\vx}$ in the evaluation prompt as $\tilde{\vy} = (\tilde{y}_1, \cdots, \tilde{y}_C) \in \gV_{\texttt{normal}}^C$, which are generated by the formulation given in \eqref{eq:comp_f_dag} using $\tilde{f} \in \gF$. Considering the model's output sequence as: $\hat{\mathbf{y}} = ( \hat{y}_1, \hat{y}_2, \dots, \hat{y}_{C_{eos}})$, we search for the following pattern: $(t_{\texttt{ans\_start}}, \hat{y}_{k}, t_{\texttt{ans\_end}})$ in $\hat{\mathbf{y}}$ and treat $\hat{y}_{k}$ as the predicted answer $\hat{y}_{pred}$. If the pattern does not exist then we set $\sI_{\hat{y}_{pred}=\tilde{y}_C} = 0$ since the model failed to follow the output format\footnote{This is similar to measuring the instruction following capability of LLMs where the output should be formatted in a certain fashion based on the in-context examples.}. Given $\widetilde{T}$ evaluation sequences, we measure the overall $\tacc$ as $\frac{1}{\widetilde{T}} \sum_{t=1}^{\widetilde{T}} \sI_{\hat{y}_{pred}=\tilde{y}_C}$.
% \begin{align}
% \begin{split}
% \hat{\mathbf{y}} &= \left( \hat{y}_1, \hat{y}_2, \dots, \hat{y}_{C_{eos}} \right) \\
% \hat{s} &= \max \left\{ i \mid \hat{y}_i = t_{\texttt{ans\_start}} \right\}\text{\maz{why max?}} \\
% \hat{e} &= \max \left\{ j > \hat{s} \mid \hat{y}_j = t_{\texttt{ans\_end}} \right\}.
% \end{split}
% \end{align}

% Then the token $\hat{y}_{\hat{s} + 1}$ is considered as the predicted answer $\hat{y}_{\text{pred}}$ if $\hat{e} == \hat{s} + 2$. Else, it is treated as None or Invalid and we assign $\sI_{\hat{y}_{pred}=\tilde{y}_C} = 0$ since the model failed to follow the output format\footnote{This is similar to measuring the instruction following capability of LLMs where the output should be formatted in a certain fashion based on the in-context examples.}. Given $\widetilde{T}$ evaluation sequences, we measure the overall $\tacc$ as $\frac{1}{\widetilde{T}} \sum_{t=1}^{\widetilde{T}} \sI_{\hat{y}_{pred}=\tilde{y}_C}$. \maz{can we simplify with words instead of math?}

% \begin{table*}[t]
%     \centering
%     \begin{tabular}{|c|c|c|c|}
%     \hline
%       \textbf{Strat} & Force Think  & Force Answer \\
%       \hline
%        \textbf{Pred} & $\tTF^{\circ C_{eos}}\left( \tilde{\vp}^{K}(\tilde{f},r_{CoT}), t_{\texttt{think\_start}}  \right)$  & $\tTF^{\circ C_{eos}}\left( \tilde{\vp}^{K}(\tilde{f}, r_{CoT}), t_{\texttt{ans\_start}}  \right)$ \\
%          \hline
%     \end{tabular}
%     \caption{Variations in model (eval) inputs based on the `forcing' strategies using special tokens.}
%     \label{tab:eval_output_variants}
% \end{table*}

\subsection{Choices of $\gG, \gH$ in~\coticl}

The~\coticl~framework is designed with the above formalization and employs the following function classes $\gG, \gH$ to generate chain tokens \eqref{eq:comp_f_dag}.

% \paragraph{Data embeddings for abstract tokens.} A vocabulary $\gV$ of abstract tokens (i.e not grounded in any semantic information) is associated with a data embedding matrix $\mE_{\texttt{data}} \in \sR^{|\gV| \times d}$, whose entries are sampled from $\gN(0,1)$.

\paragraph{DAG representation of causal dependencies.}  $\gG$ is a class of topologically sorted DAGs whose structure is determined by the choice of $(N,M,C)$. See Figure~\ref{fig:cot_icl_lab_extended}, which represents a DAG sampled from $\gG(N=3, M=2, C=3)$.

\paragraph{Process data embeddings with MLPs.} $\gH$ is a class of MLPs of depth $l$ with the activation $\phi$; denoted as $\gH(l, \phi)$\footnote{We choose $l=1$, $\phi=\texttt{LeakyReLU}$ based on the token distribution analysis of sequences by~\citet{kothapalli2025cot}.}.
% Since the weights of these MLPs are randomly initialized, the cardinality of $\gH(1,\texttt{LeakyReLU})$ tends to be unbounded. Although such a complex setting was considered by \citet{kothapalli2025cot}, we focus on scenarios where the cardinality is finite yet sufficient for creating challenging datasets for the $\tTF$ models.
% Although~\cite{kothapalli2025cot} sampled $h \sim \gH(l, \phi)$ for every sequence, 
We maintain a \texttt{TokenProcessorCache} of finite MLPs with random weights and sample from it accordingly. For a given value of $C$, we sample $C$ MLPs from this cache (one for each chain token) and use them to generate the chain tokens of all $K$ examples within that sequence.

\paragraph{Remark.} As the DAG and MLPs are unique to every sequence, one can intuitively think of meta-training as teaching the model to figure out the underlying DAG and approximate the MLP transformations solely from the in-context examples.

% \paragraph{Process data embeddings with MLPs.} $\gH$ is a class of MLPs of depth $1$ with the \texttt{LeakyReLU} activation; denoted as $\gH(1,\texttt{LeakyReLU})$\footnote{NOTE: We do not consider MLPs of varying depth and activations since \citet{kothapalli2025cot} has shown that the choice of $l=1$, $\phi=\texttt{LeakyReLU}$ tends to provide a rich enough diversity in the token distribution of sequences. }.
% % Since the weights of these MLPs are randomly initialized, the cardinality of $\gH(1,\texttt{LeakyReLU})$ tends to be unbounded. Although such a complex setting was considered by \citet{kothapalli2025cot}, we focus on scenarios where the cardinality is finite yet sufficient for creating challenging datasets for the $\tTF$ models.
% We maintain a \texttt{TokenProcessorCache} of a finite count of MLPs with random weights and sample from it accordingly. For a given value of $C$, we sample $C$ MLPs from this cache (one for each chain token) and use them to generate the chain tokens of all $K$ examples within that sequence.

% \section{Data Generation}
\section{The \coticlnew~}

Following the formalization in the above section and the design choices for $\gG, \gH$ as per~\coticl, we (1) incorporate special tokens, (2) diversify the sequences in a dataset by randomly choosing $N,M,C$ per sequence from a list of choices, and (3) modulating the mix of \textit{CoT/standard examples} based on \texttt{CoT-Recipes}. The algorithms to generate the entire dataset based on these techniques are formalized in Appendix~\ref{app:datagen_algo}.

% For the reader's convenience, we revisit these key design aspects introduced in \citet{kothapalli2025cot}, and follow up with the details of our $3$ extensions.

% \subsection{Diversifying Synthetic Datasets}

% \begin{figure}
%     \centering
%     \includegraphics[width=\linewidth]{images/cot_icl_lab_2.png}
%     \caption{The \coticlnew~design. (1) We incorporate special tokens (marked in gray) to act as delimiter tokens between input, intermediate/thinking, and answer tokens. (2) Each individual sequence is diversified based on different choices of $N,M,C$ ($K$ is omitted in the figure). (3) We drop the intermediate tokens per in-context example based on $r_{CoT}$. The choice of $r_{CoT}$ is varied across sequences as per the \texttt{CoT-Recipe}.  }
%     \label{fig:cot_icl_lab_extended}
%     \vspace{-4mm}
% \end{figure}

% Based on the existing design of \coticl~described above, we present an overview of our key extensions in Figure~\ref{fig:cot_icl_lab_extended} and discuss them in detail in the following sections. 

\subsection{Ext 1: Special Tokens}

We extend the in-context examples with special tokens as formulated by \eqref{eq:standard_example} for a \textit{standard example} and \eqref{eq:cot_example} for a \textit{CoT example}. Without delimiters, a \textit{CoT example} is formulated as $\ve_{CoT}(f) = \left(\vx, \vy\right)$, and a \textit{standard example} as $\ve(f) = \left(\vx, y_C\right)$, where the loss is computed only on $\vy$ or $y_C$ respectively. Such a design has the following limitation that if one were to mix \textit{CoT} and \textit{standard examples} in a single sequence, the model cannot differentiate between the first intermediate token of a \textit{CoT example} from the answer token of a \textit{standard example}.
% \begin{enumerate}
    % \item Any single tokenized sequence is restricted to employ either $K$ \textit{CoT} or $K$ \textit{standard examples} in-context. If one were to mix \textit{CoT} and \textit{standard examples} with this design, the model cannot differentiate between the first intermediate token of a \textit{CoT example} from the answer token of a \textit{standard example}.
    % \item Due to a lack of delimiter tokens, we cannot condition the model to forcefully think or directly provide the final answer during inference. Thus, preventing us from gaining richer insights on the proportion of \textit{CoT examples} needed to facilitate ICL.
% \end{enumerate}

% To better understand the sequence design with special tokens, notice the $6$ extra tokens that are added to the tokenized sequences in Figure~\ref{fig:cot_icl_lab_extended}. With such explicit demarcation of input, intermediate and answer tokens, we can train the model to differentiate between the different phases of a response and use the forcing strategies to elicit the thinking process on-demand during inference.

\subsection{Ext 2: Diversification with $N,M,C$}

To enhance the diversity of the dataset comprising $T$ sequences, we introduce variability through randomized sampling of the parameters $N$, $M$, $C$. Specifically, we define discrete sets of available choices: $\mN$, $\mM$, and $\mC$ for each sequence, and sample one value from each set: $N \sim \mN$, $M \sim \mM$, $C \sim \mC$. These values are used to construct a DAG using $\gG(N, M, C)$ for all the $K$ examples in the sequence. 
% A visualization of such diversity is presented in Figure~\ref{fig:cot_icl_lab_extended}, where ``Sequence 1'' is generated using $N=2,M=1,C=4$, whereas ``Sequence 2'' employs $N=3,M=1,C=5$.
Such a sequence design is not possible in the older~\coticl~design as the same $(N,M,C)$ are used for all $T$ sequences.
% In this case, one can consider $\mN=\{2,3\}, \mM=\{1\}, \mC=\{4,5\}$, based on which the two different $N,M,C$ tuples were sampled. We omitted the illustration with randomly sampled $K$ for brevity. On the contrary, observe that the older design of \coticl~relies on a fixed tuple of $N=2,M=1,C=4$ for all sequences.

\subsection{Ext 3: Diversification with \texttt{CoT-Recipe}}

% \begin{figure}
%     \centering
%          \includegraphics[width=\linewidth]{images/power_law_recipes.jpg}
%     \caption{Systematic control over the probability of \textit{CoT examples} in few-shot training prompts. Here $r_{CoT}$, determined by the \texttt{CoT-Recipe} parameter $\alpha$, represents the probability that an in-context example is a \textit{CoT example} instead of a non-CoT or \textit{standard example}. }
%     \label{fig:power_law_recipes}
%     \vspace{-3mm}
% \end{figure}

By leveraging the richer sequence design of \eqref{eq:seq_design}, we define \texttt{CoT-Recipe}, a parameterized approach to systematically assign $r_{CoT}^{(j)}$ based on the sequence index $j \in [0, T{-}1] $. This formulation allows us to control the expected proportion of \textit{CoT} versus \textit{standard examples} in context. Formally, a \texttt{CoT-Recipe} is a partial power-law function with offset as:
\begin{align}
\label{eq:cot_recipe}
\begin{split}
    \texttt{CoT-Recipe}(\alpha, a, b)(u) &= a \cdot u^{\alpha} + b,
\end{split}
\end{align}
where \( \alpha \in \mathbb{R}_{\ge 0} \) governs the shape (e.g., linear, sublinear), while \( a, b \in \mathbb{R} \) scale and shift the curve respectively (see the illustration in Figure~\ref{fig:cot_icl_lab_extended} and Appendix~\ref{app:expected_token_count_alpha} for calculations on the expected token counts). Using this formalization, we set $r_{CoT}^{(j)}$ for a sequence/prompt with index $j \in [0, T{-}1] $ as $ r_{CoT}^{(j)} = \texttt{CoT-Recipe}(\alpha, a, b)\left( j/T \right)$, which gives:
\begin{align}
\begin{split}
    r_{CoT}^{(j)} &= a \cdot \left( j/T \right)^{\alpha} + b.
\end{split}
\end{align}
\paragraph{Design Rationale.} Although $r_{\text{CoT}}^{(j)}$ is deterministically computed based on sequence index  $j \in [0, T-1]$, we note that the sequences are randomly shuffled prior to training. The formulation in \eqref{eq:cot_recipe} allows us to have a clear mental model for designing the \texttt{CoT-Recipe} partial function, while explicitly avoiding a curriculum-like effect during training.

\section{To Think or Not To Think?}
\label{sec:role_of_cot_recipe}

In this section, we systematically meta-train the models with varying \texttt{CoT-Recipe} parameter $\alpha$, and evaluate them on datasets with varying fractions of \textit{CoT examples}. In essence, we aim to understand which choices of $\alpha$ can lead to effective meta-training and allow the models to solve novel tasks even with limited \textit{CoT examples}.

\paragraph{Meta-training Setup.} We choose $|
\gV|=1024$, $d=10, \mN=\mM=\mC=\{4\}, K=40$ and vary $\alpha \in \{0, 0.5, 1, 2, \infty\}$ with $a=1,b=0$, for creating the training ($T=64 \times 10^5$) datasets.

\paragraph{Evaluation Setup.} We create a separate evaluation dataset $\widetilde{\gD}$ with $|\widetilde{\gD}| = \widetilde{T}=10^4$ sequences using $\widetilde{\mN}=\widetilde{\mM}=\widetilde{\mC}=\{4\}$, $\widetilde{K}=40$ and $r_{CoT}^{(j)} = 1, \forall j \in [0, \widetilde{T}-1]$. Since $r_{CoT}^{(j)} = 1$, all the evaluation sequences contain $K-1$ \textit{CoT examples} along with the query input tokens. Next, we transform $\widetilde{\gD}$ by randomly choosing $K'$ \textit{CoT examples} (i.i.d) per sequence and dropping the intermediate tokens along with the special tokens $t_{\texttt{think\_start}}$ and $t_{\texttt{think\_end}}$. Thus converting them into $K'$ \textit{standard examples}. 

\paragraph{Limiting CoT supervision by increasing $K'$.} By varying $K'\in\{0, 10, 20, 30, 39\}$ and applying different forcing strategies, we evaluate the $\tTF$ model's performance when CoT supervision is limited, i.e as $K'$ increases, the \textit{standard examples} outnumber the \textit{CoT examples}.
% We denote the resulting dataset as $\widetilde{D}^{\downarrow K'}. By varying $K'\in\{0, 10, 20, 30, 39\}$ and applying different forcing strategies, we evaluate the $\tTF$ model's reliance on \textit{CoT/standard examples}.

\begin{figure*}[t!]
    \centering
    % First subfigure
    \begin{subfigure}[b]{0.32\textwidth}
        \centering
        \includegraphics[width=\textwidth]{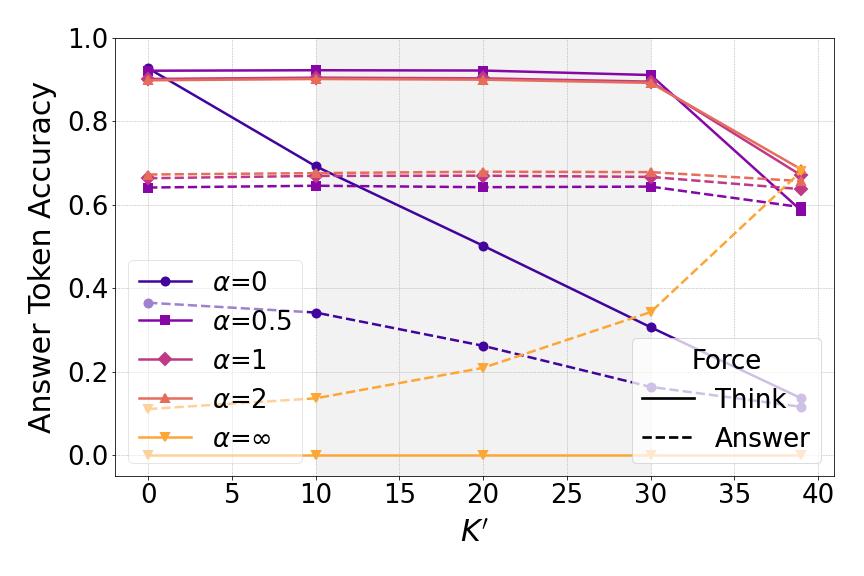}
        \caption{\texttt{TF-4}}
        \label{fig:bulk_eval_cot_recipes_N_4_M_4_C_4_L_4}
    \end{subfigure}
    \hfill
    % Second subfigure
    \begin{subfigure}[b]{0.32\textwidth}
        \centering
        \includegraphics[width=\textwidth]{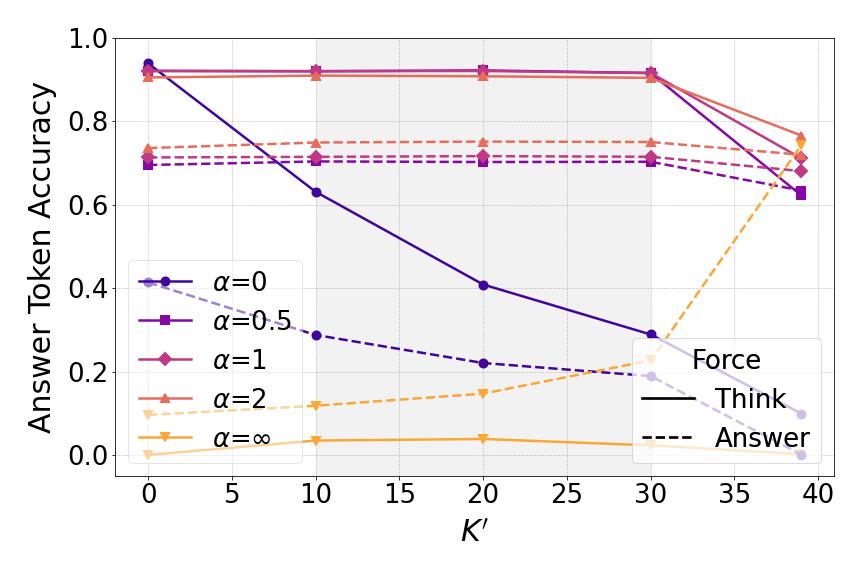}
        \caption{\texttt{TF-8}}
        \label{fig:bulk_eval_cot_recipes_N_4_M_4_C_4_L_8}
    \end{subfigure}
    \hfill
    % \vskip\baselineskip
    % Third subfigure
    \begin{subfigure}[b]{0.32\textwidth}
        \centering
        \includegraphics[width=\textwidth]{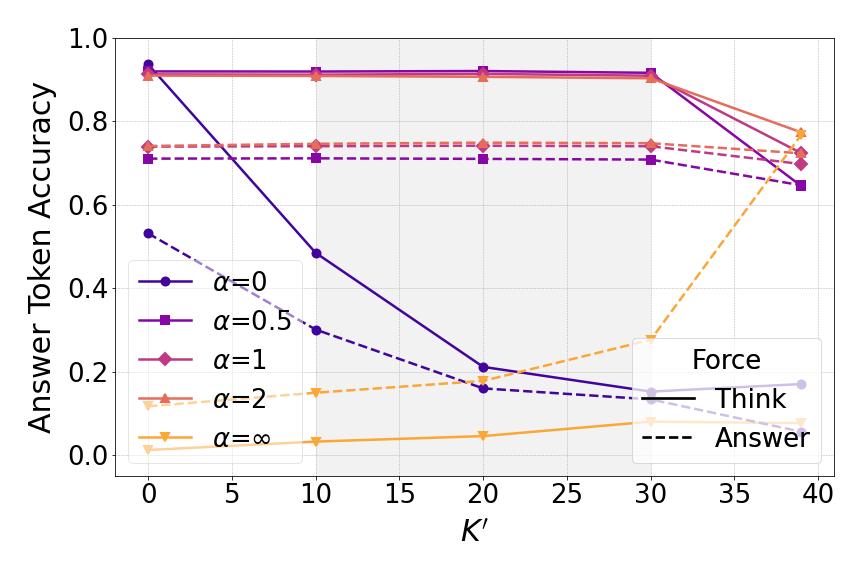}
        \caption{\texttt{TF-12}}
        \label{fig:bulk_eval_cot_recipes_N_4_M_4_C_4_L12}
    \end{subfigure}
    \caption{$\tacc$ of models trained with varying $\alpha$, $\mN=\mM=\mC=\{4\}$ and evaluated on datasets with  $\widetilde{\mN}=\widetilde{\mM}=\widetilde{\mC}=\{4\}$. Here $K'$ indicates the number of \textit{standard examples} in test prompts. }
    \label{fig:bulk_eval_cot_recipes_N_4_M_4_C_4}
    \vspace{-4mm}
\end{figure*}

\subsection{Over-Reliance on \textit{CoT/standard Examples}}
\label{subsec:diversity_overreliance}

\paragraph{Lack of modulation during meta-training leads to over-reliance on \textit{CoT/standard examples}.} As the $\tTF$ models trained using $\alpha=0$ have never encountered \textit{standard examples} in a training sequence, they tend to rely on the \textit{CoT examples} for generating the outputs. Observe from Figure~\ref{fig:bulk_eval_cot_recipes_N_4_M_4_C_4} that such a recipe leads to a gradual reduction of $\tacc$ across all model sizes and forcing strategies as $K'$ increases. In particular, the (largest) \texttt{TF-12} model evaluated with the `Force Think' strategy exhibits a reduction in $\tacc$ from $\approx 0.93$ (when $K'=0$) to $\approx 0.18$ (when $K'=39$). On the contrary, models trained using $\alpha=\infty$ are incapable of producing the intermediate chain tokens. Thus, the `Force Think' strategy leads to $\approx 0$ $\tacc$ across all models. However, the `Force Answer' strategy results in a gradual increase in $\tacc$ as $K'$ increases and the fraction of \textit{CoT examples} reduces. Thus indicating an over-reliance on \textit{standard examples}. Such behavior is not desirable since we want the model to generalize the thinking process.
% \maz{We should mention that this is not a good generalization behavior since we want the model to be able to generalize its thinking strategy.}

% Nonetheless, notice that for any model size, the $\tacc$ with `Force Think' strategy and $\alpha=0, K'=0$ is always higher than `Force Answer' strategy with $\alpha=\infty, K'=39$, with the gap being relatively larger in the smaller \texttt{TF-4} model.

\paragraph{Careful selection of $\alpha$ facilitates thinking even without \textit{CoT examples}.} Training datasets created with $\alpha=\{0.5, 1, 2\}$ modulate the proportion of \textit{CoT examples} in the sequences and thus prevent the model's over-reliance as seen with $\alpha=\{0, \infty\}$. Especially, observe from Figure~\ref{fig:bulk_eval_cot_recipes_N_4_M_4_C_4} that the `Force Think' strategy can be applied to all models for maintaining a high $\tacc$ even as the proportion of \textit{standard examples} increases (i.e, $K'$ increases). Thus, avoiding the gradual collapse as observed with $\alpha=0$. Surprisingly, notice that the models trained using $\alpha=2$ can leverage the `Force Think' strategy even when there are no \textit{CoT examples} in-context (i.e, $K'=39$), and perform on-par with the counterparts that were trained using $\alpha=\infty$ and evaluated with the `Force Answer' strategy. This observation signifies that with careful selection of the recipes, one can train models to be good at thinking even when there are no \textit{CoT examples} available in-context. In particular, the $\tacc$ improvements over $\alpha=0$ tend to be larger as $K'$ increases and can be greater than $300\%$ with \texttt{TF-12} for $K'=39$ i.e, a increase from $\approx 0.18$ with $\alpha=0$ to $\approx 0.78$ with $\alpha=2$.
% \maz{we should highlight this more. even mention it in the abstract and contributions.}
% Similarly, when we apply the `Force Answer' strategy with all \textit{CoT examples} (i.e, $K'=0$), the models do not exhibit an over-reliance on the \textit{standard examples} and significantly outperform the counterparts trained using $\alpha=\infty$. .

\section{Task Diversity and Length Generalization}
\label{subsec:diversity_len_gen}
In the previous section, we have seen that the \texttt{CoT-Recipe} formalization allows us to modulate the mix of CoT examples and facilitate reasoning control in the models. As a next step, we show that increasing the diversity of training data via $\mN,\mM,\mC$ can aid the models to generalize to out-of-distribution (OOD) settings with longer inputs.

% This showcases that directly answering the query can be as effective as elaborate reasoning in OOD settings.

\begin{figure}
    \centering
        \includegraphics[width=\linewidth]{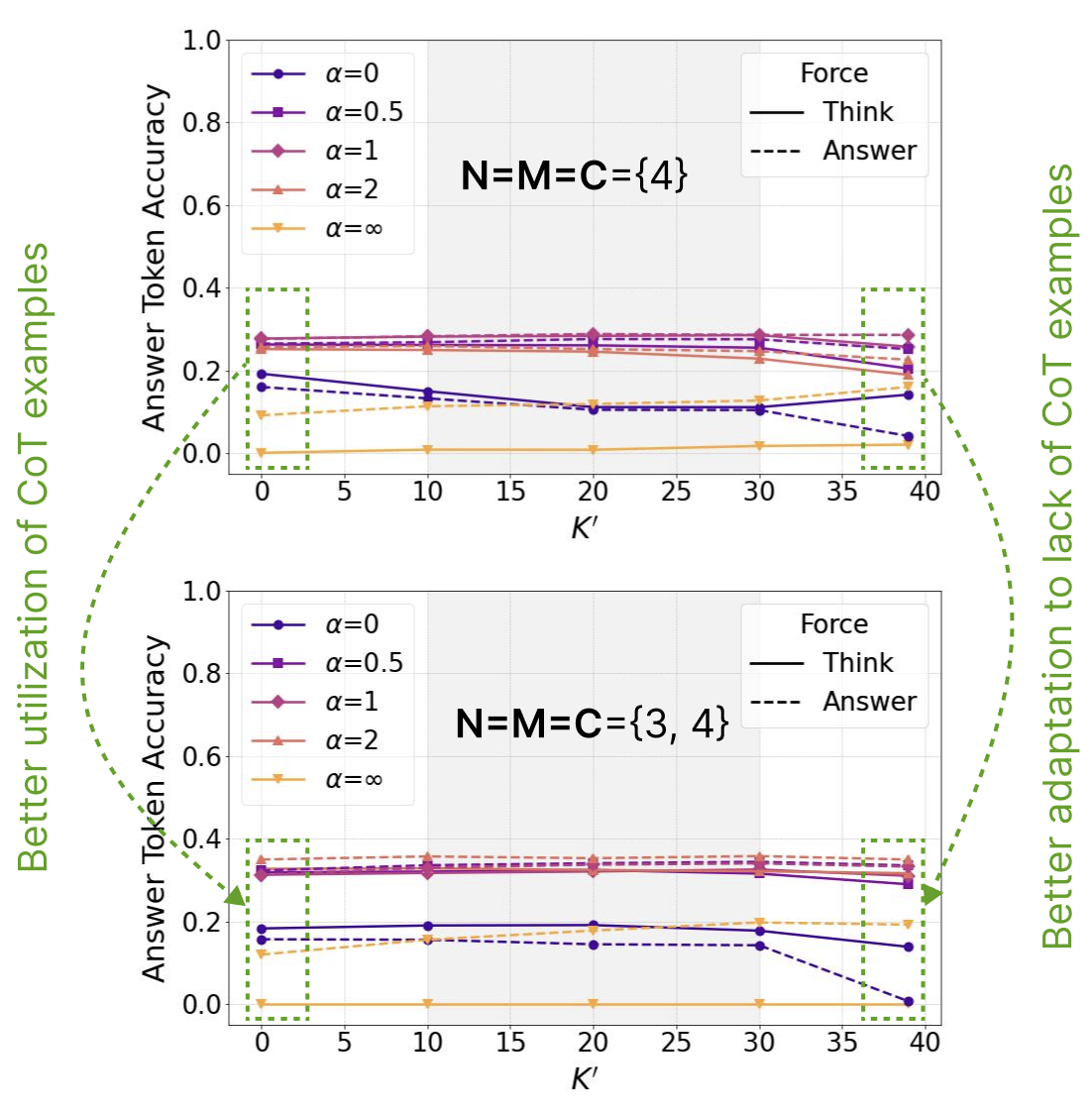}
% \label{fig:cot_recipes_train_N_4_M_4_C_4_K_40_bulk_eval_N_5_M_4_C_4_K_40_L12}
    \caption{Input length generalization of \texttt{TF-12} models when tested with $\widetilde{\mN}=\{5\}, \widetilde{\mM}=\widetilde{\mC}=\{4\}$.}
\label{fig:cot_recipes_len_gen_bulk_eval_N_5_M_4_C_4_K_40_L_12}
    \vspace{-4mm}
\end{figure}

\paragraph{Setup.} We train additional \texttt{TF} models on new diverse datasets created using $\mN=\mM=\mC=\{3, 4\}, K=40$ (with rest of the parameters for \texttt{CoT-Recipe}, $\gG, \gH$ being the same as Section~\ref{sec:role_of_cot_recipe}). For evaluation, we consider $\widetilde{\mN}=\{5\}, \widetilde{\mM}=\widetilde{\mC}=\{4\}, \widetilde{K}=40$ and employ the same process as Section~\ref{sec:role_of_cot_recipe} to create the evaluation datasets.

\paragraph{Diversity with $\mN, \mM, \mC$ improves length generalization.} Figure~\ref{fig:cot_recipes_len_gen_bulk_eval_N_5_M_4_C_4_K_40_L_12} illustrates that the \texttt{TF-12} models leverage the diversity of $\mN=\mM=\mC=\{3, 4\}$ and attain a peak evaluation $\tacc$ of $\approx 0.38$, when compared to $\approx 0.28$ with models trained using $\mN=\mM=\mC=\{4\}$. Furthermore, models trained with $\alpha=\{0.5, 1, 2\}$ consistently outperform the $\alpha=\{0, \infty\}$ cases across all sizes (see Figure~\ref{fig:cot_recipes_len_gen_bulk_eval_N_5_M_4_C_4_K_40_L_4} for \texttt{TF-4} and Figure~\ref{fig:cot_recipes_len_gen_bulk_eval_N_5_M_4_C_4_K_40_L_8} for \texttt{TF-8}). More importantly, the `Force Think' strategy with $\alpha=0$ models on $K'=0$ is not as effective as $\alpha=\{0.5, 1, 2\}$, unlike the in-domain generalization setting (see Figure~\ref{fig:bulk_eval_cot_recipes_N_4_M_4_C_4}). 

\paragraph{Forcing strategies and OOD tasks.} Surprisingly, Figure~\ref{fig:cot_recipes_len_gen_bulk_eval_N_5_M_4_C_4_K_40_L_12} also shows that for any $K'$, the peak $\tacc$ with `Force Answer' strategy across all $\alpha$ is comparable and sometimes even higher than the `Force Think' strategy. This is a failure mode where the model is unable to generate the required number of thinking steps ($5$ in this case) even with the `Force Think' strategy and fails to arrive at the right answers.
% \maz{Can we expand this? Try to show why this is happening? Is the model misunderstanding how many intermediate tokens are needed? or something else?}
In terms of recipes, $\alpha=2$ tends to be the best choice for \texttt{TF-12}. While we observe that the gap in $\tacc$ for both the strategies is quite narrow across model sizes and $K'$ in Figure~\ref{fig:cot_recipes_len_gen_bulk_eval_N_5_M_4_C_4_K_40_L_12}, Figure~\ref{fig:cot_recipes_len_gen_bulk_eval_N_5_M_4_C_4_K_40_L_4}, and Figure~\ref{fig:cot_recipes_len_gen_bulk_eval_N_5_M_4_C_4_K_40_L_8}, the gap was observed to be relatively wider for the in-domain generalization setting (see Figure~\ref{fig:bulk_eval_cot_recipes_N_4_M_4_C_4}). Thus, it sheds light on the limitations of enforcing thinking behavior and the brittleness of OOD generalization.
% \maz{Do we see something similar in the symbolic dataset in the later sections? If so, we should highlight this.}

% \paragraph{A note on evaluating reasoning in LLMs.} Our experiments surface an overlooked aspect in evaluating the effectiveness of reasoning in LLMs. As shown above, the `Force Answer' strategy can be more effective than the `Force Think' strategy for OOD tasks when the models are trained with diverse tokenized sequences. Since the open-source pre-trained LLMs generally satisfy the diverse training criteria, we conjecture that the ID/OOD nature of tasks play a major role in determining if lengthy CoT reasoning is desirable or not~\cite{hassid2025don}.

\begin{figure*}[h]
    \centering
    \includegraphics[width=\linewidth]{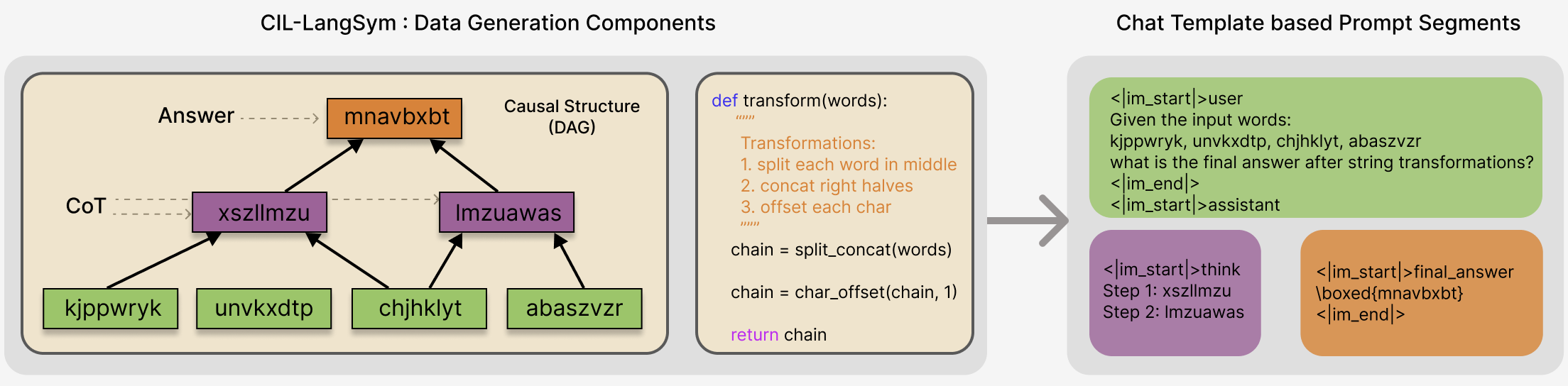}
    \caption{Chat template of a \textit{CoT/standard example} in \texttt{CIL-LangSym} based on the \texttt{Qwen-2.5-1.5B-Instruct} tokenizer. Given $N=4, M=2, C=3$ and word length $W=8$, the DAG determines the ground truth causal dependencies, and the \texttt{transform} function illustrates the string processing of the $M$ parent words. We apply the chat template to differentiate the question, thinking, and final answer segments of the examples and also ensure that the task description does not reveal the underlying string transformation in natural language. }
    \label{fig:cil_symbolic}
\end{figure*}

\section{Symbolic Reasoning with LLMs}

Our analysis above highlighted the importance of \texttt{CoT-Recipe} for reasoning with abstract tokens. To verify if these insights can be transferred to pretrained LLMs, we leverage the design patterns of~\coticlnew~to create a fully interpretable symbolic reasoning dataset called \texttt{CIL-LangSym}. In particular, we aim to understand (1) if \texttt{CoT-Recipe} parameters obtained in the previous sections can still be effective when each intermediate step and answer spans multiple tokens and (2) how forcing strategies affect length generalization when the model is pre-trained on a much larger and diverse natural language data.

% Unlike existing work on symbolic reasoning~\cite{boix-adsera2024when}, \texttt{CIL-LangSym} leverages the design patterns of~\coticlnew~to generate complex, yet fully interpretable intermediate reasoning steps for each in-context example.

\begin{figure*}[t!]
    \centering
    \begin{subfigure}[b]{0.32\textwidth}
        \centering
        \includegraphics[width=\textwidth]{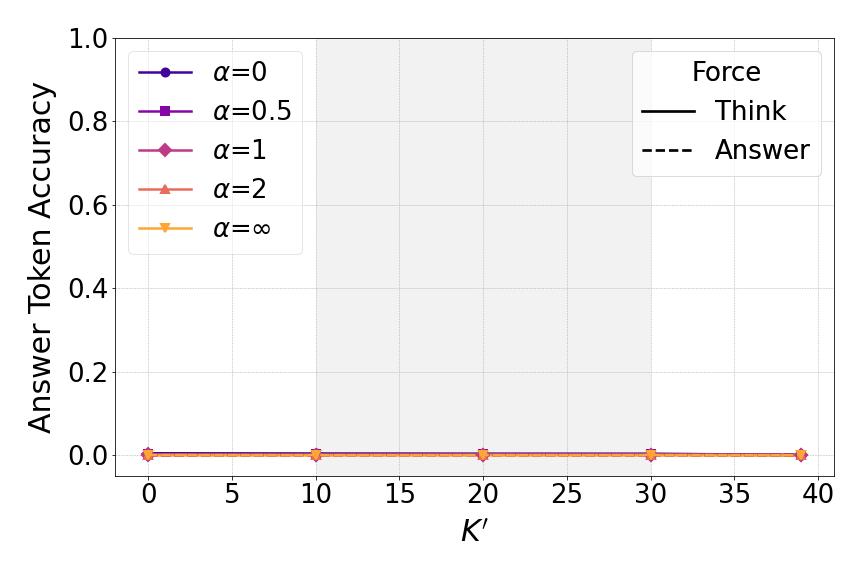}
        \caption{\texttt{Qwen-2.5-0.5B-Instruct}}
        \label{fig:cil_langsym_qwen_0.5B_instruct_cot_recipes_N_4_M_2_C_3}
    \end{subfigure}
    \hfill
    \begin{subfigure}[b]{0.32\textwidth}
        \centering
        \includegraphics[width=\textwidth]{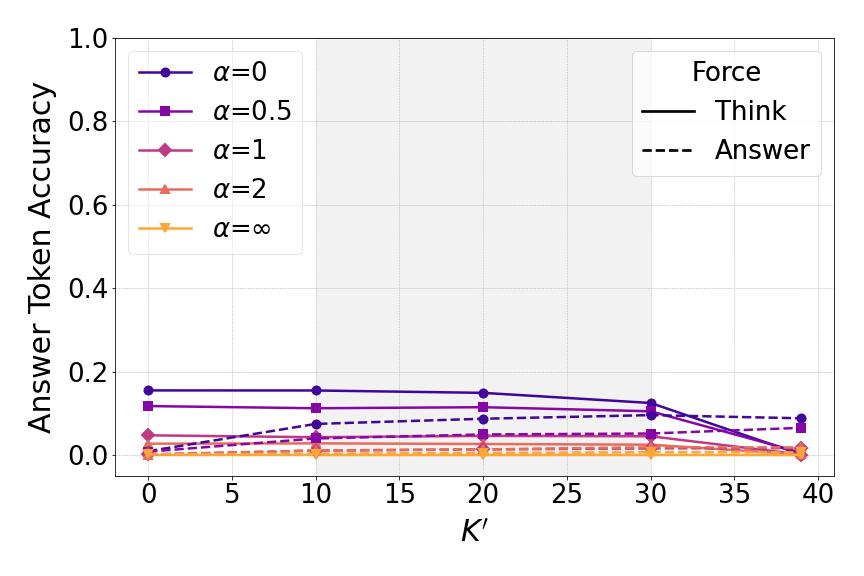}
        \caption{\texttt{Qwen-2.5-1.5B-Instruct}}
        \label{fig:cil_langsym_qwen_1.5B_instruct_cot_recipes_N_4_M_2_C_3}
    \end{subfigure}
    \hfill
    \begin{subfigure}[b]{0.32\textwidth}
        \centering
        \includegraphics[width=\textwidth]{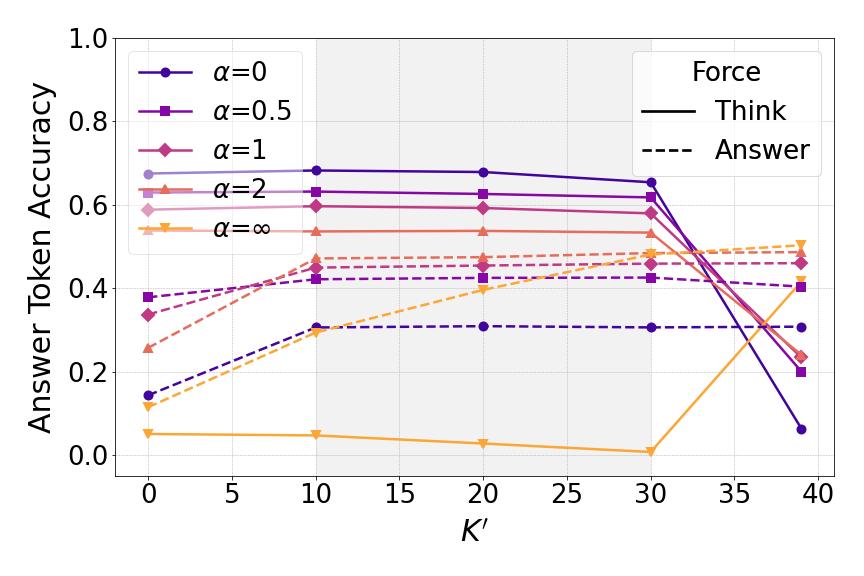}
        \caption{\texttt{Qwen-2.5-7B-Instruct}}
        \label{fig:cil_langsym_qwen_7B_instruct_cot_recipes_N_4_M_2_C_3}
    \end{subfigure}
    \caption{$\tacc$ of models trained with varying $\alpha$, $\mN=\{4\}, \mM=\{2\}, \mC=\{3\}$ and evaluated on datasets with $\widetilde{\mN}=\{4\}, \widetilde{\mM}=\{2\}, \widetilde{\mC}=\{3\}$. }
    \label{fig:cil_langsym_qwen_cot_recipes_N_4_M_2_C_3}
    \vspace{-4mm}
\end{figure*}

\subsection{Data Generation}

% \subsubsection{Design Patterns Based on~\coticlnew}
\paragraph{Random words with ASCII lowercase.} We adhere to the setup in Section~\ref{sec:prelim_and_setup} and generate $N$ \textit{input words} per in-context example, each comprising of $W$ ASCII lowercase characters (a-z). In essence, we transition from $N$ abstract tokens in~\coticlnew~to $N$ words per in-context example.

\paragraph{Causal structure via DAGs.} Similar to~\coticlnew~, we consider the function class $\gG$ of topologically sorted DAGs to implant the causal dependencies between words.

\paragraph{String processing function.} Unlike the token processing function class $\gH$ in~\coticlnew~that relied on the data embeddings $\mE_{\texttt{data}}$ and MLPs to generate the abstract chain tokens, we employ a string processing function $s$ based on string slicing and character offset operations for \texttt{CIL-LangSym}. Formally, the function $s$ takes the $M$ filtered words (from $\gG$) as inputs and outputs a single \textit{intermediate word} (Figure~\ref{fig:cil_symbolic}). 
% The mathematical formulation is exactly the same as~\eqref{eq:comp_f_dag} with $h_c \in \gH, \forall c \in [C]$ replaced by $s$ and abstract tokens replaced by random words 
See Appendix~\ref{app:cil_langsym_details} for the algorithmic description of these design aspects and Figure~\ref{fig:cil_langsym_example} for an example prompt.

\paragraph{Remark.} Notice that the task description does not reveal the underlying string transformation, and the model is required to figure out the task solely from the in-context \textit{CoT/standard examples}.
% \maz{what additional information does this experiment give us?}
% \maz{are intermediate stuff and final answer guranteed to be one token?}

\subsection{Experiments}
\label{subsec:cil_langsym_exps}

\paragraph{Setup.} The \texttt{CIL-LangSym} dataset is created with the following parameters: $\mN=\widetilde{\mN}=\{4\}, \mM=\widetilde{\mM}=\{2\}, \mC=\widetilde{\mC}=\{3\}, K=\widetilde{K}=40$ for training ($T=1000$), and evaluation ($\widetilde{T}=10,000$). We consider the \texttt{Qwen-2.5} series models to highlight the difficulty of the tasks and to study the role of \texttt{CoT-Recipes} and forcing strategies. We allocate a budget of $1000$ and $100$ tokens for the `Force Think' and `Force Answer' strategies respectively.

\paragraph{Insufficiency of pre-trained knowledge.} We evaluate $8$ \texttt{Qwen-2.5} series models~\cite{Yang2024Qwen25TR}: \texttt{0.5B-Instruct}, \texttt{1.5B}, \texttt{1.5B-Instruct}, \texttt{Math-1.5B-Instruct}, \texttt{7B}, \texttt{7B-Instruct} along with the DeepSeek distilled reasoning models: \texttt{DeepSeek-R1-Distill-Qwen-1.5B, DeepSeek-R1-Distill-Qwen-7B} on evaluation datasets for $K'=\{0,10,20,30,39\}$ and \texttt{temperature=0.6}. The evaluation datasets are prepared using the same approach as Section~\ref{sec:role_of_cot_recipe}. By relying solely on CoT-ICL prompting, we noticed that all the $8$ models scored an $\tacc$ of $0$ across all $K'$ and both the forcing strategies. This baseline establishes that the models cannot leverage their pre-trained knowledge for this symbolic reasoning task and ensures that the performance improvements are largely attributed to meta-training based on the \texttt{CoT-Recipe}\footnote{Results were also consistent across other temperature values ranging from $0$ to $1$ with $0.1$ increments.}.

\paragraph{\texttt{CoT-Recipe} facilitates reasoning control even in pretrained LLMs.} We use \texttt{CoT-Recipe} with $\alpha=\{0, 0.5,1,2,\infty\}$ for SFT (meta-training) of \texttt{0.5B-Instruct, 1.5B-Instruct}, and \texttt{7B-Instruct} models to focus solely on the model size. Given that we use only $1000$ prompts, Figure~\ref{fig:cil_langsym_qwen_cot_recipes_N_4_M_2_C_3} shows a clear benefit of model size as the peak $\tacc$ attained by \texttt{0.5B-Instruct} is only $0.004$ whereas \texttt{1.5B-Instruct, 7B-Instruct} can reach up-to $0.15$, and $0.67$ respectively. Given a large enough model such as \texttt{7B-Instruct}, notice from Figure~\ref{fig:cil_langsym_qwen_7B_instruct_cot_recipes_N_4_M_2_C_3} that as long as there are CoT examples available in-context, the \texttt{CoT-Recipes} with $\alpha \neq \infty$ facilitate the model to leverage the `Force Think' strategy. Similar to the observations with~\coticlnew, these models showcase an over-reliance on the \textit{CoT examples} and exhibit a reversal in trend of $\tacc$ as $\alpha$ increases from $0$ to $\infty$ for $K'=39$. In these scenarios with limited CoT supervision, if one were to employ the `Force Think' strategy, then choosing $\alpha=2$ can result in $\tacc$ improvements of over $130\%$ when $K'=39$ (consistent with the observations from~\coticlnew~in Section~\ref{sec:role_of_cot_recipe}). We provide guidance on choosing $\alpha$ in Section~\ref{sec:guidance_alpha}, an example prompt in Figure~\ref{fig:cil_langsym_example} and model outputs in Table~\ref{tab:cil_ft_fa}.

\paragraph{Remark.} Since these LLMs are pre-trained on a large corpus on natural language data, notice from Figure~\ref{fig:cil_langsym_qwen_7B_instruct_cot_recipes_N_4_M_2_C_3} that the $\tacc$ of the $\alpha=0$ model with `Force Answer' strategy exhibits an increasing trend rather than a deteriorating one as observed with ~\coticlnew. This behavior is unique to the pre-trained models and understanding the role of model size and pre-training datasets can be an interesting avenue for future research.
% For the `Force Answer' strategy, notice that $\alpha=\{0.5, 1,2\}$ allow the model to exhibit better reasoning performance compared to the extreme cases of $\alpha=\{0, \infty\}$ when $K'=0$ (i.e all examples are \textit{CoT examples}). This behaviour is unique to the pre-trained LLMs and was not observed with transformers trained from scratch on~\coticlnew. 
% We present an example prompt in Figure~\ref{fig:cil_langsym_example} and model outputs in Table~\ref{tab:cil_ft_fa}. 
% Overall, highlighting that the behavior of LLMs is consistent with that of models in Section~\ref{subsec:diversity_overreliance} with~\coticlnew.
% to gain transferrable insights into the ICL capabilities of LLMs when pretrained knowledge is insufficient.

\paragraph{Correct thinking steps do not necessarily imply a correct final answer.} We break down the $\tacc$ of the \texttt{Qwen-2.5-7B-Instruct} model trained with $\alpha=0$ by analyzing the correctness of the intermediate reasoning steps for the evaluation prompts. Since $\widetilde{\mC}=\{3\}$, there will be $2$ intermediate steps. Figure~\ref{fig:7b_instruct_alpha_0_step_analysis} considers the scenario with `Force Think' strategy and $K'=0$ (i.e, all examples have CoT) to highlight that: out of the $6752$ correctly predicted prompts, around $80\%$ of them have both the intermediate steps to be correct. Whereas, out of the $3428$ prompts with wrong final answers, $\approx 30\%$ of them have both the steps to be correct and $\approx 60\%$ of them have at least one step to be correct. We also present breakdowns for other models and $\alpha$, based on the inclusion of these intermediate steps in the ground truth DAG in Appendix~\ref{app:cot_reliance}. In summary, our results indicate that the output of the second thinking step (denoted as Step 2) is relatively more important than Step 1 because of the underlying causal structure. Thus, incorrect Step 2 predictions lead to relatively more errors in the final answer than Step 1. 
% \maz{What is the message here? I did not get it. Step 1 and Step 2 are not clear to me.}

\begin{figure}[h]
\centering
\begin{subfigure}{0.45\linewidth}
\centering
\begin{tikzpicture}[scale=1.2]
    % Left heatmap - Correct Final Answers
    \fill[low] (0,1) rectangle (1,2);    % 81.5%
    \fill[low] (1,1) rectangle (2,2);     % 7.8%
    \fill[high] (0,0) rectangle (1,1);     % 5.0%
    \fill[low] (1,0) rectangle (2,1);     % 5.6%

    \draw[thick] (0,0) rectangle (2,2);
    \draw[thick] (0,1) -- (2,1);
    \draw[thick] (1,0) -- (1,2);

\node at (0.5,0.5) {\textbf{82.8\%}}; % cc  
\node at (1.5,1.5) {\textbf{4.8\%}};  % ii  
\node at (1.5,0.5) {\textbf{4.9\%}};  % ic  
\node at (0.5,1.5) {\textbf{7.5\%}};  % ci

    % Axis labels
    \node[anchor=center] at (0.5,-0.25) {\footnotesize Correct};
    \node[anchor=center] at (1.5,-0.25) {\footnotesize Incorrect};
    \node[anchor=center, rotate=90] at (-0.25,0.5) {\footnotesize Correct};
    \node[anchor=center, rotate=90] at (-0.25,1.5) {\footnotesize Incorrect};

    % Step labels
    \node at (1,2.3) {\small Step1};
    \node[rotate=90] at (-0.6,1) {\small Step2};
\end{tikzpicture}
\caption{\checkmark Answer $(6752)$}
\end{subfigure}
\begin{subfigure}{0.45\linewidth}
\centering
\begin{tikzpicture}[scale=1.2]
    % Right heatmap - Incorrect Final Answers
    \fill[med] (0,1) rectangle (1,2);     % 30.5%
    \fill[high] (1,1) rectangle (2,2);     % 22.4%
    \fill[med] (0,0) rectangle (1,1);     % 11.6%
    \fill[med] (1,0) rectangle (2,1);    % 35.4%

    \draw[thick] (0,0) rectangle (2,2);
    \draw[thick] (0,1) -- (2,1);
    \draw[thick] (1,0) -- (1,2);

\node at (0.5,0.5) {\textbf{30.3\%}}; % cc  
\node at (1.5,1.5) {\textbf{36.1\%}}; % ii  
\node at (1.5,0.5) {\textbf{12.3\%}}; % ic  
\node at (0.5,1.5) {\textbf{21.3\%}}; % ci

 % Axis labels
    \node[anchor=center] at (0.5,-0.25) {\footnotesize Correct};
    \node[anchor=center] at (1.5,-0.25) {\footnotesize Incorrect};
    \node[anchor=center, rotate=90] at (-0.25,0.5) {\footnotesize Correct};
    \node[anchor=center, rotate=90] at (-0.25,1.5) {\footnotesize Incorrect};

    % Step labels
    \node at (1,2.3) {\small Step1};
    \node[rotate=90] at (-0.6,1) {\small Step2};
\end{tikzpicture}
\caption{$\times$ Answer $(3248)$}
\end{subfigure}
\caption{\texttt{Qwen-2.5-7B-Instruct} trained with $\alpha=0$ achieves $0.6752$ final answer $\tacc$ with the `Force Think' strategy for $K'=0$. Each grid in the plots indicate the percentage of prompts (out of $6752$ in (a) and $3248$ in (b)) for which the Step 1 and Step 2 predictions by the model were correct/incorrect.}
\label{fig:7b_instruct_alpha_0_step_analysis}
\vspace{-4mm}
\end{figure}
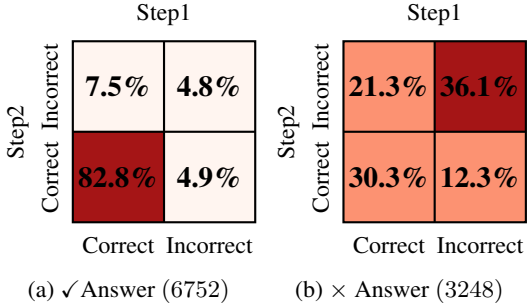

\paragraph{Input length generalization.} Figure~\ref{fig:cil_langsym_cot_recipes_input_len_gen_7B_instruct} illustrates that the $\tacc$ of the SFT'ed \texttt{Qwen-2.5-7B-Instruct} model tends to drop as the number of input words per example in the evaluation sequences is increased. This observation is consistent with the results of~\coticlnew. However, a key difference is that the $\tacc$ values on \texttt{CIL-LangSym} are relatively closer to the performance on the in-domain task (Figure~\ref{fig:cil_langsym_qwen_7B_instruct_cot_recipes_N_4_M_2_C_3}), with `Force Think' strategy outperforming `Force Answer' for attaining the peak $\tacc$ across $\alpha$ and $K'$. A similar behavior can be observed for the chain length generalization experiments in Figure~\ref{fig:cil_langsym_cot_recipes_chain_len_gen_7B_instruct}. This indicates that although pre-training is not sufficient for solving the tasks, the diversity of the data aids in OOD generalization. Thus, highlighting the key difference between training from scratch for~\coticlnew~tasks and pre-training on large natural language corpora.

\begin{figure}[h!]
    \centering
    % First subfigure
    \begin{subfigure}[b]{0.48\linewidth}
        \centering
        \includegraphics[width=\linewidth]{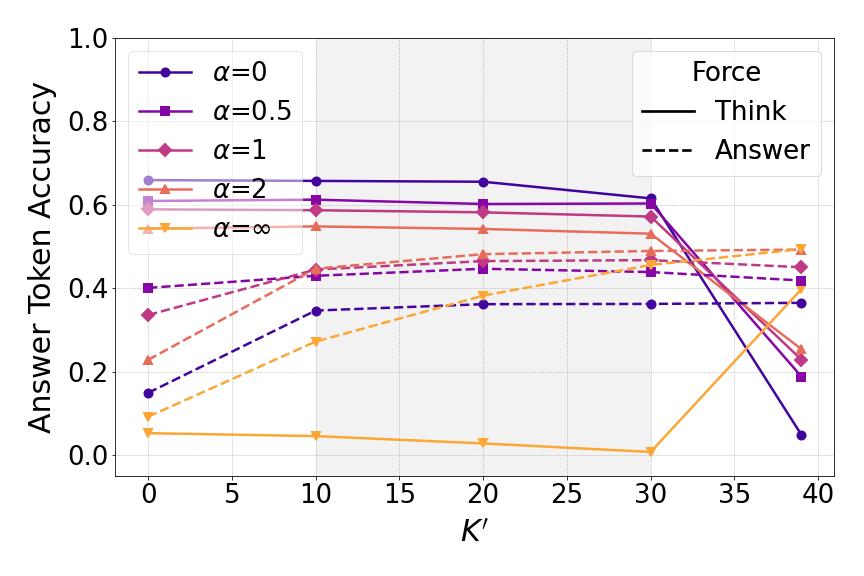}
        \caption{$\widetilde{\mN}=\{5\}$}
        \label{fig:cil_symbolic_cot_recipes_len_gen_7B_instruct_N_5}
    \end{subfigure}
    % \hfill
    % Second subfigure
    \begin{subfigure}[b]{0.48\linewidth}
        \centering
        \includegraphics[width=\linewidth]{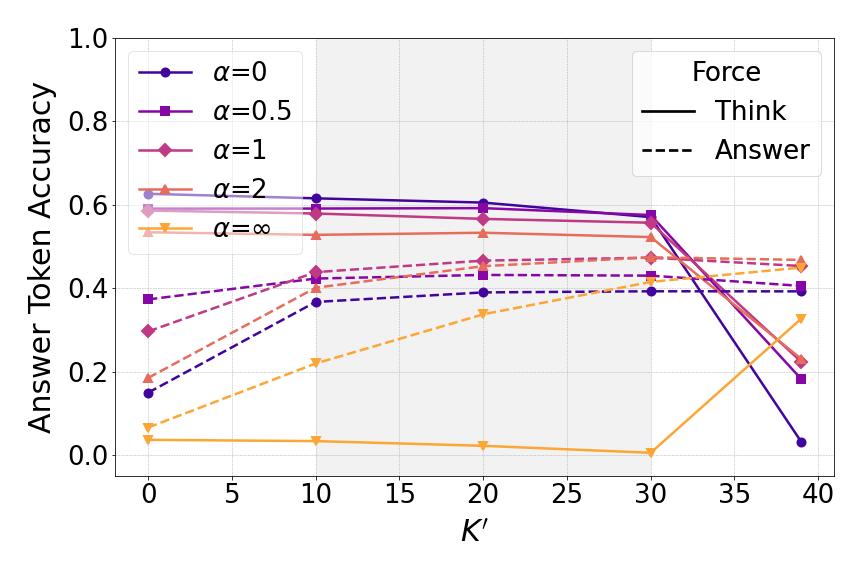}
        \caption{$\widetilde{\mN}=\{6\}$}
        \label{fig:cil_symbolic_cot_recipes_len_gen_7B_instruct_N_6}
    \end{subfigure}
    \caption{\texttt{Qwen-2.5-7B-Instruct}: Input length generalization with train: $\mN=\{4\}, \mM=\{2\}, \mC=\{3\}$ and test: $\widetilde{\mM}=\{2\}, \widetilde{\mC}=\{3\}$ by varying $\widetilde{\mN}$. }
    \label{fig:cil_langsym_cot_recipes_input_len_gen_7B_instruct}
    % \vspace{-4mm}
\end{figure}

\section{Guidance on Choosing $\alpha$}
\label{sec:guidance_alpha}

Throughout the paper, $\alpha \in \{0, 0.5, 1, 2, \infty\}$ was varied to simulate constant, linear, sub-linear, and super-linear growth of $r_{\texttt{Cot}}$ in the dataset. Although our coverage of $\alpha$ is limited to these $5$ values, we observe clear patterns in our experiments that guide their choice for meta-training.

\paragraph{CoT rich scenarios.} When the evaluation prompts are expected to have a sufficient number of CoT examples, then it is recommended to choose $\alpha = \{0, 0.5\}$ and employ the `Force Think' strategy to maximize the performance.

\paragraph{CoT poor scenarios.} If minimal CoT examples are expected to be available during inference, then it is recommended to choose $\alpha = \{2, \infty\}$ and employ the `Force Answer' strategy so that the model is not meta-trained on sequences that causes it to over-rely on CoT supervision.

\paragraph{A single model that can reason and answer directly.} Finally, if a single model is expected to perform well in all kinds of evaluation scenarios, i.e, with and without CoT examples as well as being forced to think or answer directly, then it is recommended to meta-train with $\alpha \in \{0.5, 2\}$ to achieve the best tradeoffs. We underscore that this is not a universal selection criterion and only aims to narrow down the optimal value selection. We also analyze the role of $\alpha$ on computational overheads in Appendix~\ref{app:expected_token_count_alpha}.

% \vspace{-4mm}
\section{Conclusion}

This work introduced~\coticlnew, a data generation framework for abstract reasoning tasks to design effective meta-training techniques for transformer models. By incorporating special abstract tokens and modulating the mix of \textit{CoT/standard examples} via \texttt{CoT-Recipe}, we systematically showcased the importance of training task diversity and forcing strategies for reasoning control in transformers. By verifying the effectiveness of these insights on practical LLMs for novel symbolic reasoning tasks, we hope to encourage the formalization of the data-mixing aspects of meta-training with broader domain-specific tasks.

\section{Limitations}

The~\coticlnew~framework is designed to generate diverse sequences of abstract tokens that are devoid of semantics. In this context, we note that eliciting chain-of-thought does not exactly resemble the case with NLP datasets (especially in the zero-shot settings), since the token distribution, input and chain lengths may vary significantly with answer tokens exceeding just a single token. Although the \texttt{CIL-LangSym} addresses some of these concerns, it can be treated as a domain-specific dataset, and one should carefully consider the scenarios and training stages in which these insights can apply to real-world tasks with math/code, etc.

\bibliography{references}

\appendix

\clearpage
\section{The~\coticlnew~Dataset Generation Algorithms}
\label{app:datagen_algo}

We formalize the design aspects of~\coticlnew~and present the dataset generation process in Algorithm~\ref{alg:cot_icl_lab_2_dataset}. A single tokenized sequence in this dataset is generated using Algorithm~\ref{alg:cot_icl_lab_2_single_sequence}, especially by isolating the input and answering parts of an in-context example using special tokens. Algorithm~\ref{alg:cot_icl_lab_2_single_chain_token} uses the data embeddings $\mE_{\texttt{data}}$ to generate a single chain token for every in-context example.

% \subsection{Generate the Dataset of Sequences}

\begin{algorithm}
\caption{Generate dataset with $T$ sequences}
\begin{algorithmic}[1]
\label{alg:cot_icl_lab_2_dataset}
\REQUIRE Parameter choices $\mN, \mM, \mC, K$, the \texttt{CoT-Recipe} parameters $\alpha, a, b$, and size $T$.
\STATE Initialize empty dataset $\mD = []$
\FOR{$j = 1$ to $T$}
\STATE $r_{CoT}^{(j)} = \texttt{CoT-Recipe}(\alpha, a, b)(j/T)$
\STATE $\vp$ = Algorithm~\ref{alg:cot_icl_lab_2_single_sequence}$(\mN, \mM, \mC, K, r_{CoT}^{(j)})$.
\STATE $\mD.\texttt{append}(\vp)$
\ENDFOR
\RETURN $\mD$
\end{algorithmic}
\end{algorithm}

% \subsection{Generate Single Sequence}

\begin{algorithm}
\caption{Single sequence generation with index $j$ in the dataset.}
\begin{algorithmic}[1]
\label{alg:cot_icl_lab_2_single_sequence}
\REQUIRE Parameter choices $\mN, \mM, \mC, K$, and the CoT probability parameter $r_{CoT}^{(j)}$.

\STATE sample $N \sim \mN, M \sim \mM, C \sim \mC$.
\STATE Limit $M = \min(M, N)$.
\STATE Initialize the sequence $\vp = [t_{\texttt{bos}}]$ 
\FOR{$k = 1$ to $K$}
\STATE Initialize empty input token sequence $\vx$, chain token sequence $\vy$.
\FOR{$i = 1$ to $N$}
    \STATE $\vx[i] \overset{\text{i.i.d.}}{\sim} \gV_{\texttt{normal}}$
\ENDFOR
\STATE $\vt = \vx.\texttt{clone()}$
\FOR{$c = 1$ to $C$}
    \STATE $\texttt{parent\_tokens}=\texttt{rand.choice}(\vt, M)$
    \STATE $\vy[c]$ = Algorithm~\ref{alg:cot_icl_lab_2_single_chain_token}(\texttt{parent\_tokens})
    \STATE $\vt.\texttt{append}(\vy[c])$
\ENDFOR
\STATE $\vp.\texttt{extend}([t_{\texttt{inp\_start}} ,\vx, t_{\texttt{inp\_end}}])$
\IF{$r^{(j)}_{CoT} \ge \gU(0,1)$}
\STATE $\vp.\texttt{extend}([t_{\texttt{think\_start}} ,\vy_{:C-1}, t_{\texttt{think\_end}}])$
\ENDIF
\STATE $\vp.\texttt{extend}([t_{\texttt{ans\_start}} ,y_C, t_{\texttt{ans\_end}}, t_{\texttt{eos}}])$
\ENDFOR
\RETURN $\vp$
\end{algorithmic}
\end{algorithm}

% \subsection{Generate Single Chain Token}
\begin{algorithm}
\caption{Single \textit{chain token} $y_c$ generation}
\begin{algorithmic}[1]
\label{alg:cot_icl_lab_2_single_chain_token}
\REQUIRE $M$ row indices of $\mE_{\texttt{data}}$ corresponding to \texttt{parent\_tokens}.

\STATE MLP $h_c \in \texttt{TokenProcessorCache}(\mathcal{H}(l=1,\phi=\texttt{LeakyReLU}))$
\FOR{$i = 1$ to $M$}
    \STATE $\mathbf{h}^i \leftarrow h_c(\mE_{\texttt{data}}[\texttt{parent\_tokens}[i]])$
\ENDFOR
\STATE $\mathbf{h}_{\text{act}} \leftarrow \phi\left(\frac{1}{M} \sum_{i=1}^{M} \mathbf{h}^i\right)$
\STATE $y_c \leftarrow \texttt{argmax}(\mathbf{E}_{\texttt{data}} \mathbf{h}_{\text{act}})$
\RETURN $y_c$
\end{algorithmic}
\end{algorithm}

\section{Task Difficulty with \texttt{CoT-Recipe} and Training Token Estimates}

\subsection{Fine-grained control of task difficulty}

Considering the meta-training setup of Section~\ref{sec:role_of_cot_recipe}, we use $\widetilde{\mN}=\widetilde{\mM}=\widetilde{\mC}=\{4\}, \widetilde{K}=40$ for the evaluation ($\widetilde{T} = 10^4$) datasets corresponding to the same $\alpha$ and apply the `No Forcing' strategy to measure $\tacc$. As $\alpha$ increases, the sequences tend to contain fewer proportion of \textit{CoT examples} in-context (see Figure~\ref{fig:cot_icl_lab_extended}) and in-turn lead to consistently lower $\tacc$ values across model sizes (see Figure~\ref{fig:train_eval_cot_recipes_N_4_M_4_C_4}). In particular, when $\alpha=0$, the $\tTF$ models leverage the intermediate/thinking tokens to achieve higher $\tacc$, whereas $\alpha=\infty$ presents no such information and the model is forced to answer directly. Although such extreme cases were already studied in \citet{kothapalli2025cot}, our results highlight a fine-grained control over the difficulty of such tasks by carefully selecting the shape parameter $\alpha$.

\begin{figure*}[h!]
    \centering
    % First subfigure
    \begin{subfigure}[b]{0.32\linewidth}
        \centering
        \includegraphics[width=\textwidth]{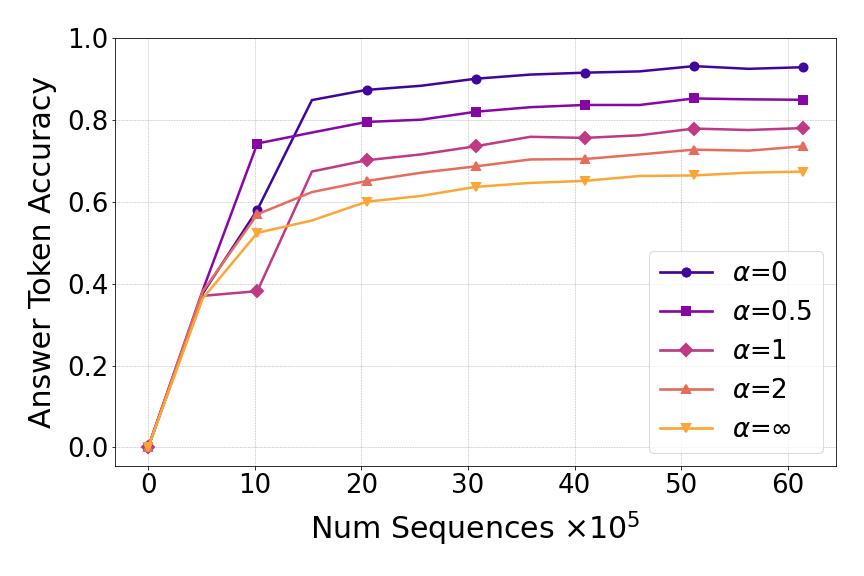}
        \caption{\texttt{TF-4}}
        \label{fig:train_eval_cot_recipes_N_4_M_4_C_4_L_4}
    \end{subfigure}
    \hfill
    % Second subfigure
    \begin{subfigure}[b]{0.32\linewidth}
        \centering
        \includegraphics[width=\textwidth]{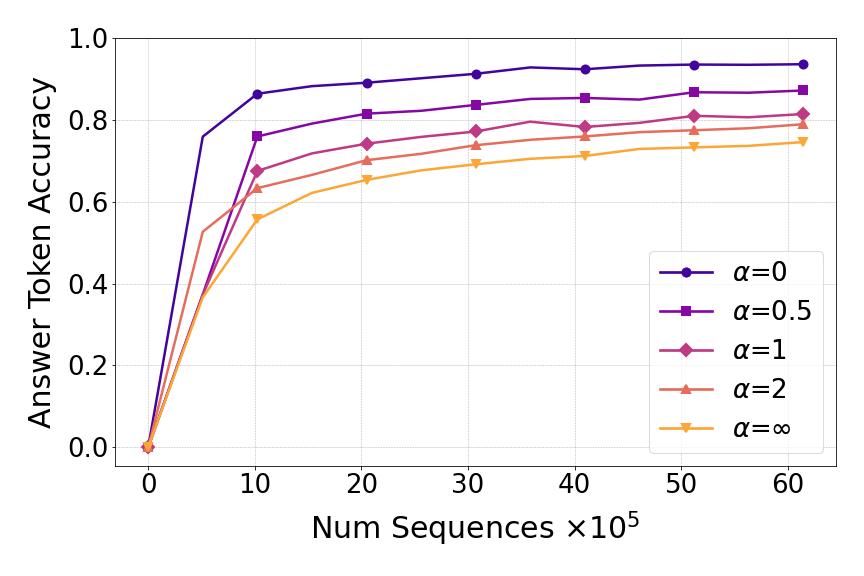}
        \caption{\texttt{TF-8}}
        \label{fig:train_eval_cot_recipes_N_4_M_4_C_4_L_8}
    \end{subfigure}
    % \hfill
    % \vskip\baselineskip
    % Third subfigure
    \begin{subfigure}[b]{0.32\linewidth}
        % \centering
        \includegraphics[width=\textwidth]{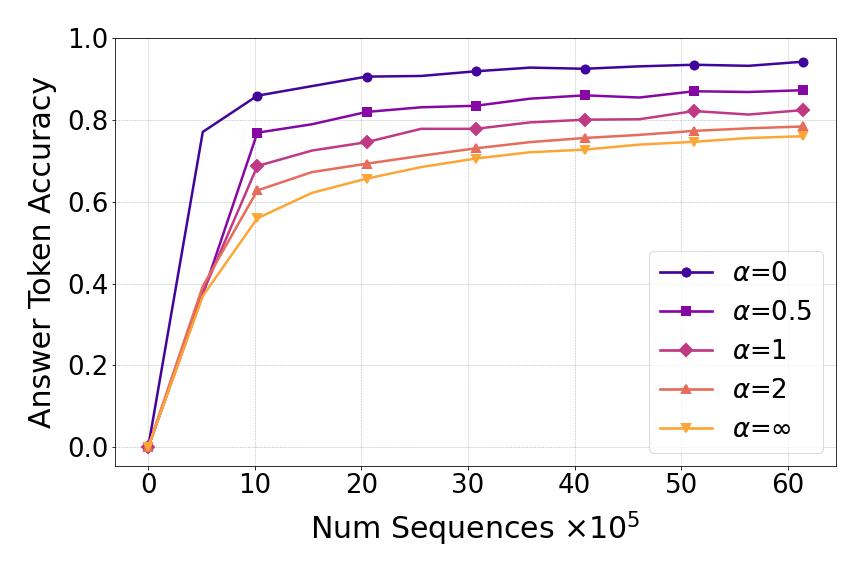}
        \caption{\texttt{TF-12}}
        \label{fig:train_eval_cot_recipes_N_4_M_4_C_4_L12}
    \end{subfigure}
    \caption{$\tacc$ with varying $\alpha$, $\mN=\mM=\mC=\{4\}$, and $\widetilde{\mN}=\widetilde{\mM}=\widetilde{\mC}=\{4\}$ .}
    \label{fig:train_eval_cot_recipes_N_4_M_4_C_4}
    \vspace{-4mm}
\end{figure*}

\subsection{Expected Token Count in Datasets}
\label{app:expected_token_count_alpha}
\begin{theorem}
\label{thm:expected_token_count}
 Consider a dataset $\gD$ of $T$ sequences created using the tuple $N,M,C,K$. Let the \texttt{CoT-Recipe}~\eqref{eq:cot_recipe} with $a=1,b=0$ determine the CoT probability parameter $r^{(j)}_{\text{CoT}}, \forall j \in [0, T-1]$ as follows: $r^{(j)}_{\text{CoT}} = \left(\frac{j}{T}\right)^{\alpha}$. Then the expected number of tokens $\mathbb{E}\left[|\gD|_{Tokens}\right]$ is:
    \begin{align*}
    \begin{split}
       \mathbb{E}\left[|\gD|_{Tokens}\right] &= T + \mathbb{E}\left[|\gD|_{CoT-ex}\right](N+C+7) \\
        &+ (KT - \mathbb{E}\left[|\gD|_{CoT-ex}\right])(N+6) 
    \end{split}
    \end{align*},
    where $\mathbb{E}\left[|\gD|_{CoT-ex}\right]$ is given by:
    \begin{align*}
    \frac{K}{T^\alpha}\left(\frac{(T-1)^{\alpha+1} - 1}{\alpha + 1} + \frac{(T-1)^\alpha + 1}{2} \right)
    \end{align*}
\end{theorem}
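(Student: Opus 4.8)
The plan is to compute $\mathbb{E}\left[|\gD|_{Tokens}\right]$ by first expressing it as a deterministic function of the (random) total number of CoT examples in the dataset, and then computing the expectation of that count. For the first part, I would count tokens per example type directly from the sequence design in \eqref{eq:standard_example}, \eqref{eq:cot_example}, and \eqref{eq:seq_design}: a sequence contributes one $t_{\texttt{bos}}$ token, then each of its $K$ examples is either a CoT example or a standard example. Reading off \eqref{eq:cot_example}, a CoT example has $\vx'$ of length $N+2$, $\vt'$ of length $(C-1)+2 = C+1$, $\va'$ of length $3$, plus the trailing $t_{\texttt{eos}}$, giving $N + C + 7$ tokens. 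Similarly from \eqref{eq:standard_example} a standard example has $(N+2) + 3 + 1 = N + 6$ tokens. Summing over the whole dataset: $T$ bos tokens, plus $(N+C+7)$ times the number of CoT examples, plus $(N+6)$ times the number of standard examples, and the number of standard examples is $KT$ minus the number of CoT examples. Taking expectations and using linearity gives exactly the displayed first formula, with $\mathbb{E}\left[|\gD|_{CoT-ex}\right]$ the only quantity left to evaluate.

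For the second part, I would note that in sequence $j$ (for $j \in [0, T-1]$), each of the $K$ examples is independently a CoT example with probability $\min(r_{\text{CoT}}^{(j)}, 1)$; since $a=1, b=0$ and $j/T \in [0,1)$ we have $r_{\text{CoT}}^{(j)} = (j/T)^\alpha \in [0,1)$, so no clipping occurs and the probability is simply $(j/T)^\alpha$. Hence $\mathbb{E}[\text{CoT examples in seq } j] = K (j/T)^\alpha$, and by linearity
\begin{align*}
\mathbb{E}\left[|\gD|_{CoT-ex}\right] = \sum_{j=0}^{T-1} K \left(\frac{j}{T}\right)^\alpha = \frac{K}{T^\alpha} \sum_{j=0}^{T-1} j^\alpha.
\end{align*}
The remaining task is to approximate $\sum_{j=0}^{T-1} j^\alpha = \sum_{j=1}^{T-1} j^\alpha$. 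The target formula suggests using the Euler–Maclaurin / trapezoidal approximation: $\sum_{j=1}^{T-1} j^\alpha \approx \int_0^{T-1} x^\alpha\, dx + \tfrac{1}{2}\big((T-1)^\alpha + 0^\alpha\big)$, wait — more precisely one writes $\sum_{j=1}^{n} f(j) \approx \int_0^{n} f + \tfrac{1}{2}(f(n) - f(0))$ via the trapezoidal rule applied on $[0,n]$ with $n = T-1$, which yields $\frac{(T-1)^{\alpha+1}}{\alpha+1} + \frac{(T-1)^\alpha + 1}{2}$ after reconciling the endpoint terms (the "$+1$" in the numerator of the second fraction and the "$-1$" in the first come from handling $0^\alpha$ and the integral's lower limit carefully). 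Substituting back gives the claimed closed form for $\mathbb{E}\left[|\gD|_{CoT-ex}\right]$.

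The honest framing here is that the second formula is an \emph{approximation} (Euler–Maclaurin with the leading correction term retained), not an exact identity — $\sum j^\alpha$ has no elementary closed form for general real $\alpha$ — so the main subtlety is stating precisely in what sense equality holds. I would present it as: the integral approximation $\int_1^{T-1} x^\alpha dx + \tfrac12\big((T-1)^\alpha + 1\big)$ up to an error term of lower order (bounded by the second Euler–Maclaurin term, $O((T-1)^{\alpha-1})$), or alternatively just derive the bounds $\int_0^{T-1} x^\alpha dx \le \sum_{j=1}^{T-1} j^\alpha \le \int_1^{T} x^\alpha dx$ and take the trapezoidal midpoint. The token-counting step (first formula) is completely routine bookkeeping from the definitions; the expectation-of-Bernoulli-sum step is immediate from linearity and the observation that clipping never triggers; so the only real content — and the main obstacle to a clean statement — is making the summation-to-integral step rigorous, which I would handle by invoking the standard trapezoidal-rule error estimate for the monotone function $x \mapsto x^\alpha$ on $[0, T-1]$.
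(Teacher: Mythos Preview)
Your proposal is correct and follows essentially the same route as the paper: count tokens per example type from \eqref{eq:standard_example}--\eqref{eq:cot_example} to get $N+6$ and $N+C+7$, use linearity to reduce to $\mathbb{E}[|\gD|_{CoT-ex}] = \frac{K}{T^\alpha}\sum_{j=1}^{T-1} j^\alpha$, and then apply the Euler--Maclaurin/trapezoidal approximation $\int_1^{T-1} x^\alpha\,dx + \tfrac12\big((T-1)^\alpha + 1\big)$. Your observation that the second displayed formula is an approximation rather than an exact identity is accurate and worth keeping---the paper's own derivation uses ``$\approx$'' at this step even though the theorem statement reads ``is given by.''
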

\begin{proof}
Notice that a \textit{standard example}~\eqref{eq:standard_example} consists of $N + 6$ tokens, and a \textit{CoT example}~\eqref{eq:cot_example} consists of $N + C + 7$ tokens.

For a sequence $\vp$ with $K$ examples and CoT probability parameter $r^{(j)}_{\text{CoT}}$, the expected number of \textit{CoT/standard examples} are:
\begin{align}
\begin{split}
    \mathbb{E}\left[|\vp|_{CoT-ex}\right] &= K\times r^{(j)}_{\text{CoT}} \\
    \mathbb{E}\left[|\vp|_{S-ex}\right] &= K\times (1-r^{(j)}_{\text{CoT}})
\end{split}
\end{align}

By the linearity of expectations, the expected number of \textit{CoT/standard examples} in the entire dataset is given by:
\begin{align}
\begin{split}
    \mathbb{E}\left[|\gD|_{CoT-ex}\right] &= \sum_{j=0}^{T-1} K\left(\frac{j}{T}\right)^{\alpha}\\
    \mathbb{E}\left[|\gD|_{S-ex}\right] &= \sum_{j=0}^{T-1} K\left(1 - \left(\frac{j}{T}\right)^{\alpha}\right).
\end{split}
\end{align}

We use the Euler-Maclaurin approximation of $\mathbb{E}\left[|\gD|_{CoT-ex}\right]$ to obtain:
\begin{align*}
    &\frac{K}{T^\alpha}\sum_{j=0}^{T-1} j^{\alpha} = \frac{K}{T^\alpha}\sum_{j=1}^{T-1}j^{\alpha} \\
    &\approx \frac{K}{T^\alpha}\left(\int_1^{T-1} x^\alpha dx + \frac{(T-1)^\alpha + 1}{2} \right) \\
    &= \frac{K}{T^\alpha}\left(\frac{(T-1)^{\alpha+1} - 1}{\alpha + 1} + \frac{(T-1)^\alpha + 1}{2} \right)
\end{align*}

Since $\mathbb{E}\left[|\gD|_{S-ex}\right] = KT - \mathbb{E}\left[|\gD|_{CoT-ex}\right]$, we consider $1$ $t_{\texttt{bos}}$ token per sequence and calculate the expected tokens (excluding $t_{\texttt{pad}}$) in $\gD$ as:
\begin{align*}
\begin{split}
    \mathbb{E}\left[|\gD|_{Tokens}\right] &= T + \mathbb{E}\left[|\gD|_{CoT-ex}\right](N+C+7) \\
    &+ \mathbb{E}\left[|\gD|_{S-ex}\right](N + 6).
\end{split}
\end{align*}
\end{proof}

\paragraph{Remark.} Based on Theorem~\ref{thm:expected_token_count} and $\alpha=0$, we get $\mathbb{E}\left[|\gD|_{CoT-ex}\right] = KT$ and $\mathbb{E}\left[|\gD|_{Tokens}\right] = T + KT(N+C+7)$. Whereas $\alpha=\infty$ results in $\mathbb{E}\left[|\gD|_{CoT-ex}\right] = 0$ and $\mathbb{E}\left[|\gD|_{Tokens}\right] = T + KT(N+6)$. By substituting: $N=C=4, K=40$ and $T=64 \times 10^5$ as per training experiments in Section~\ref{sec:role_of_cot_recipe}, the ratio $\frac{1 + K(N+C+7)}{1 + K(N+6)}$ turns out to be $1.5$. A numerical simulation of the ratio of expected tokens with $\alpha=2$ and $\alpha=\infty$ turns out to be $\approx 1.16$. Thus indicating that with a slight increase in token budget to modulate the mix of \textit{CoT/standard examples} in the training dataset, we can achieve significant improvements in reasoning control and length generalization (Section~\ref{sec:guidance_alpha}).

\section{Resources and Hyper-Parameters for Training and Inference}
\label{app:hardware_hyperparams}

\begin{table}[ht!]
\centering
\begin{tabular}{|c|c|}
\hline
\textbf{Model} & \textbf{Params w/o Embedding Layer} \\ \hline
\texttt{TF-4}      & $243,288,064$\\ \hline
\texttt{TF-8}      &  $486,574,080$ \\ \hline
\texttt{TF-12}      & $729,860,096$ \\ \hline
\end{tabular}
\caption{Model Card for the \texttt{TF} models.}
\label{tab:model_card}
\vspace{-2mm}
\end{table}

We use $8$ H100 NVIDIA GPUs for all the training experiments and employ Liger-Kernels \cite{hsu2025ligerkernel} to speed up training and vLLM \cite{kwon2023efficient} for bulk inference experiments.
\paragraph{\coticlnew.} We use a training batch size of $16$ per rank and the \texttt{AdamW} optimizer with $\eta=5\times 10^{-5}$. Training runs with the larger \texttt{TF-12} model take $\approx 18$ hours to finish. The evaluation with vLLM for $K'=\{0,10,20,30,39\}$ and both the forcing strategies take up to $2$ hours for the \texttt{TF-12} model on a single H100 GPU.

\paragraph{\texttt{CIL-LangSym}.} The SFT experiments with the \texttt{Qwen-2.5} series LLMs use a learning rate of $\eta=10^{-5}$, with a warm-up ratio of $0.05$, a cosine learning rate scheduler and a weight decay of $10^{-4}$. All experiments typically finish under $5$ minutes since the training set consists only of $1000$ prompts. Considering an inference token budget of $1000$ for `Force Think' and $100$ for the `Force Answer' strategy, the evaluation with vLLM for $K'=\{0,10,20,30,39\}$ and both the strategies take up to $5$ hours for the \texttt{7B-Instruct} model on a single H100 GPU.

\section{Length Generalization Experiments with~\coticlnew}

\begin{figure}[h!]
    \centering
    % First subfigure
    \begin{subfigure}[b]{\linewidth}
        \centering
        \includegraphics[width=\linewidth]{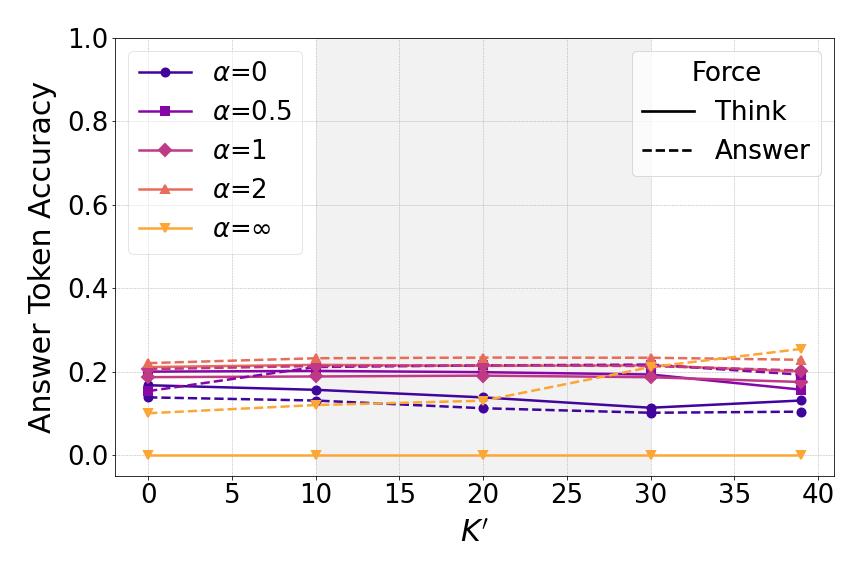}
        \caption{$\mN=\mM=\mC=\{4\}$}
        \label{fig:cot_recipes_train_N_4_M_4_C_4_K_40_bulk_eval_N_5_M_4_C_4_K_40_L_4}
    \end{subfigure}
    % \hfill
    % Second subfigure
    \begin{subfigure}[b]{\linewidth}
        \centering
        \includegraphics[width=\linewidth]{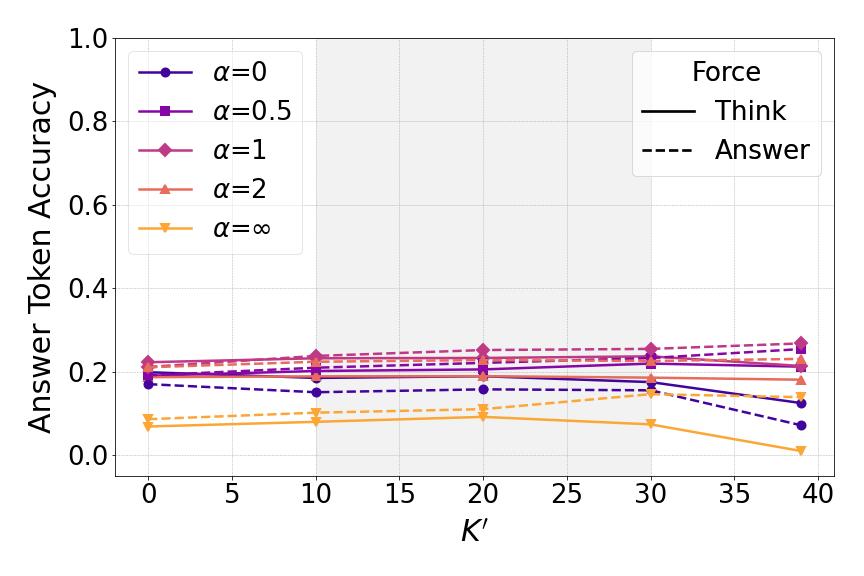}
        \caption{$\mN=\mM=\mC=\{3, 4\}$}
        \label{fig:cot_recipes_train_N_3-4_M_3-4_C_3-4_K_40_bulk_eval_N_5_M_4_C_4_K_40_L_4}
    \end{subfigure}
    \caption{Input length generalization: $\tacc$ of \texttt{TF-4} models trained with varying $\alpha$ and $K=40$ on evaluation datasets with longer inputs $\widetilde{\mN}=\{5\},\widetilde{\mM}=\widetilde{\mC}=\{4\}$ and $\widetilde{K}=40$. }
    \label{fig:cot_recipes_len_gen_bulk_eval_N_5_M_4_C_4_K_40_L_4}
    % \vspace{-4mm}
\end{figure}

\begin{figure}[h!]
    \centering
    % First subfigure
    \begin{subfigure}[b]{\linewidth}
        \centering
        \includegraphics[width=\linewidth]{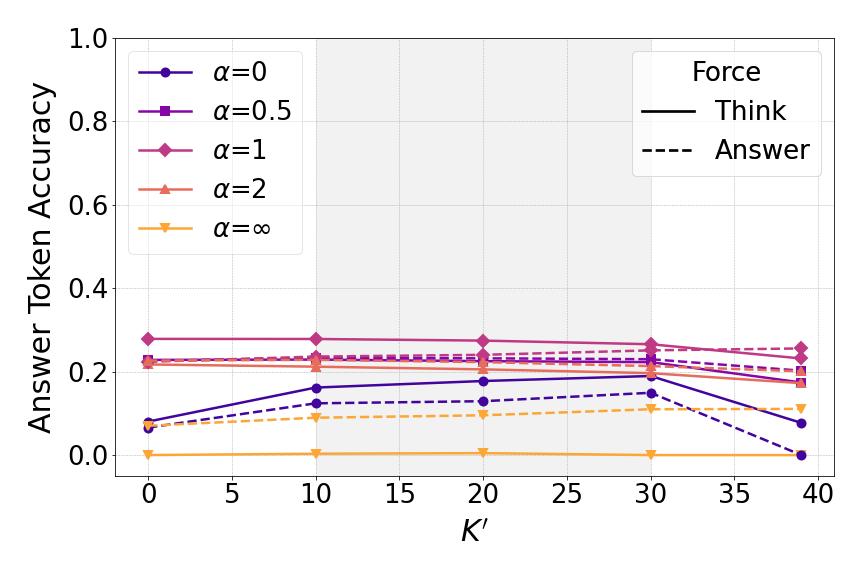}
        \caption{$\mN=\mM=\mC=\{4\}$}
        \label{fig:cot_recipes_train_N_4_M_4_C_4_K_40_bulk_eval_N_5_M_4_C_4_K_40_L_8}
    \end{subfigure}
    \hfill
    % Second subfigure
    \begin{subfigure}[b]{\linewidth}
        \centering
        \includegraphics[width=\linewidth]{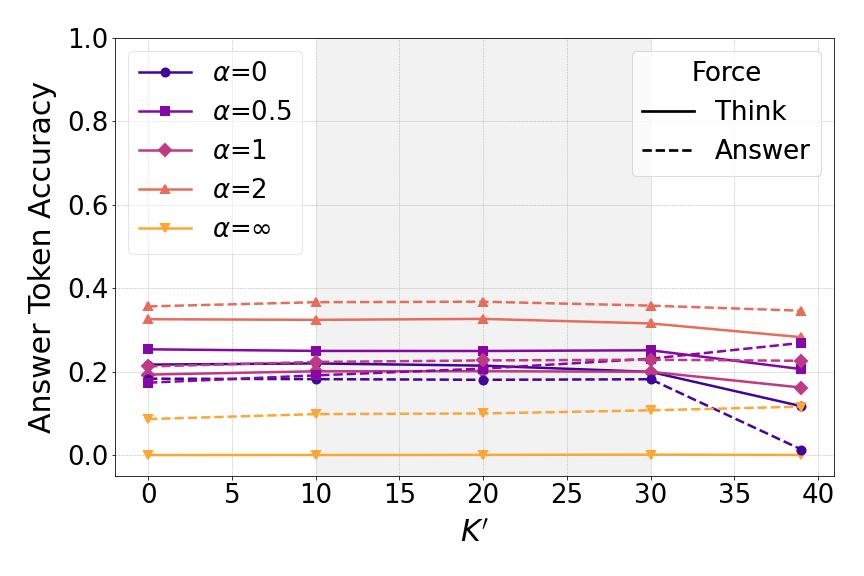}
        \caption{$\mN=\mM=\mC=\{3, 4\}$}
        \label{fig:cot_recipes_train_N_3-4_M_3-4_C_3-4_K_40_bulk_eval_N_5_M_4_C_4_K_40_L_8}
    \end{subfigure}
    \caption{Input length generalization: $\tacc$ of \texttt{TF-8} models trained with varying $\alpha$ and $K=40$ on evaluation datasets with longer inputs $\widetilde{\mN}=\{5\},\widetilde{\mM}=\widetilde{\mC}=\{4\}$ and $\widetilde{K}=40$.  }
    \label{fig:cot_recipes_len_gen_bulk_eval_N_5_M_4_C_4_K_40_L_8}
    % \vspace{-4mm}
\end{figure}

\begin{figure*}[h!]
    \centering
    % First subfigure
    \begin{subfigure}[b]{0.32\textwidth}
        \centering
        \includegraphics[width=\textwidth]{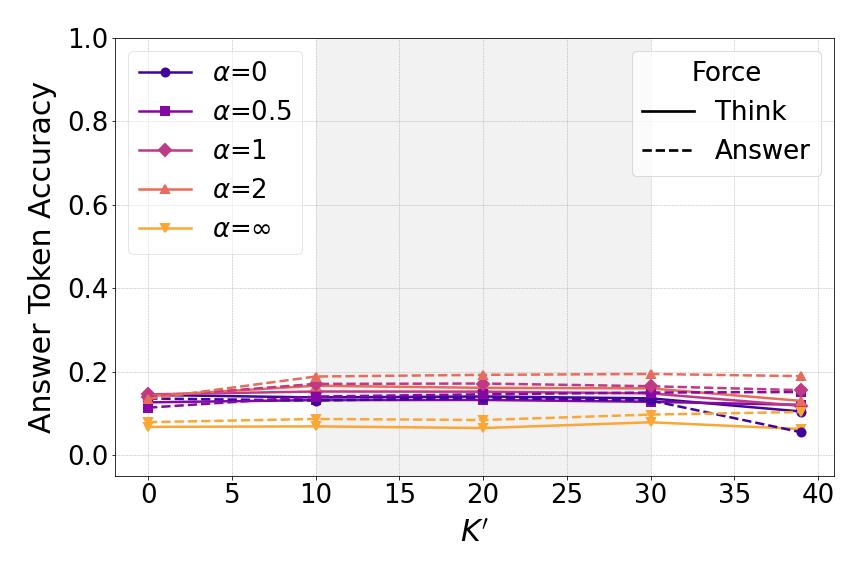}
        \caption{\texttt{TF-4}}
        \label{fig:cot_recipes_train_N_4_M_4_C_4_K_40_bulk_eval_N_6_M_4_C_4_K_40_L_4}
    \end{subfigure}
    \hfill
    % Second subfigure
    \begin{subfigure}[b]{0.32\textwidth}
        \centering
        \includegraphics[width=\textwidth]{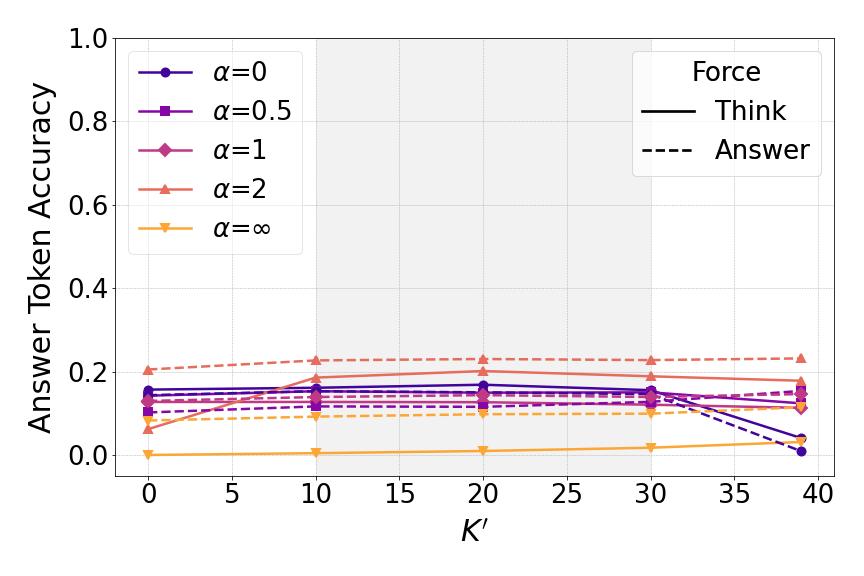}
        \caption{\texttt{TF-8}}
        \label{fig:cot_recipes_train_N_4_M_4_C_4_K_40_bulk_eval_N_6_M_4_C_4_K_40_L_8}
    \end{subfigure}
    \hfill
    % \vskip\baselineskip
    % Third subfigure
    \begin{subfigure}[b]{0.32\textwidth}
        \centering
        \includegraphics[width=\textwidth]{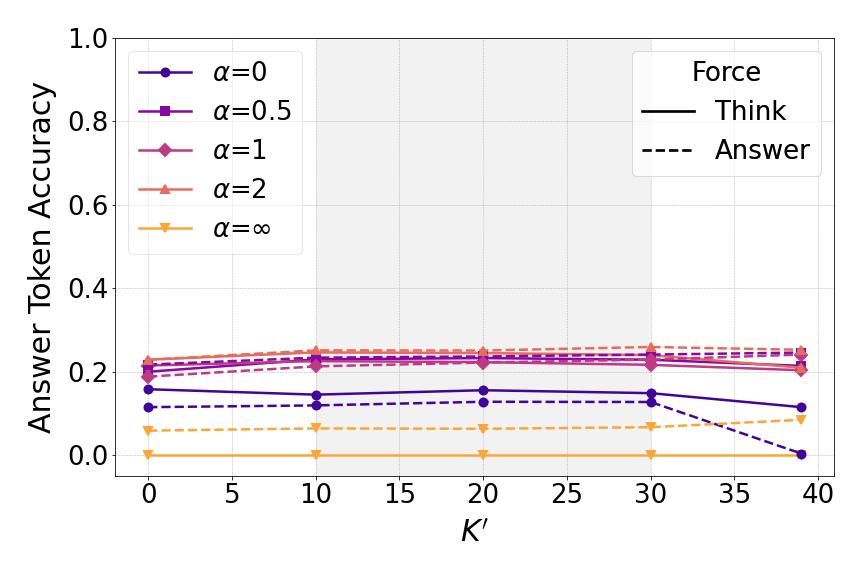}
        \caption{\texttt{TF-12}}
        \label{fig:cot_recipes_train_N_4_M_4_C_4_K_40_bulk_eval_N_6_M_4_C_4_K_40_L_12}
    \end{subfigure}
    \caption{Input length generalization: $\tacc$ of models trained with varying \texttt{CoT-Recipe}$(\alpha)$ and $\mN=\mM=\mC=\{4\}, K=40$ on evaluation datasets with $\widetilde{\mN}=\{6\}, \widetilde{\mM}=\widetilde{\mC}=\{4\}$ and $\widetilde{K}=40$. }
    \label{fig:cot_recipes_train_N_4_M_4_C_4_K_40_bulk_eval_N_6_M_4_C_4_K_40}
    % \vspace{-4mm}
\end{figure*}

\begin{figure*}[h!]
    \centering
    % First subfigure
    \begin{subfigure}[b]{0.32\textwidth}
        \centering
        \includegraphics[width=\textwidth]{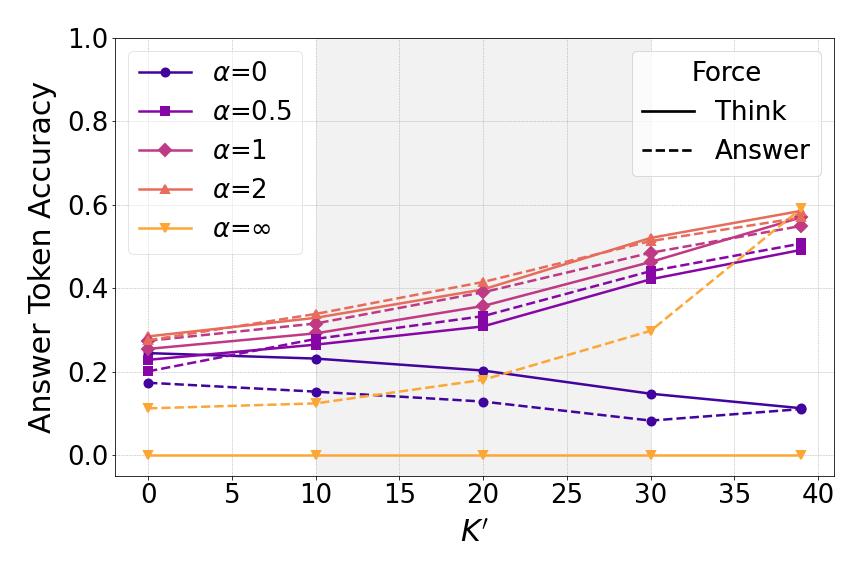}
        \caption{\texttt{TF-4}}
        \label{fig:cot_recipes_train_N_4_M_4_C_4_K_40_bulk_eval_N_4_M_4_C_5_K_40_L_4}
    \end{subfigure}
    \hfill
    % Second subfigure
    \begin{subfigure}[b]{0.32\textwidth}
        \centering
        \includegraphics[width=\textwidth]{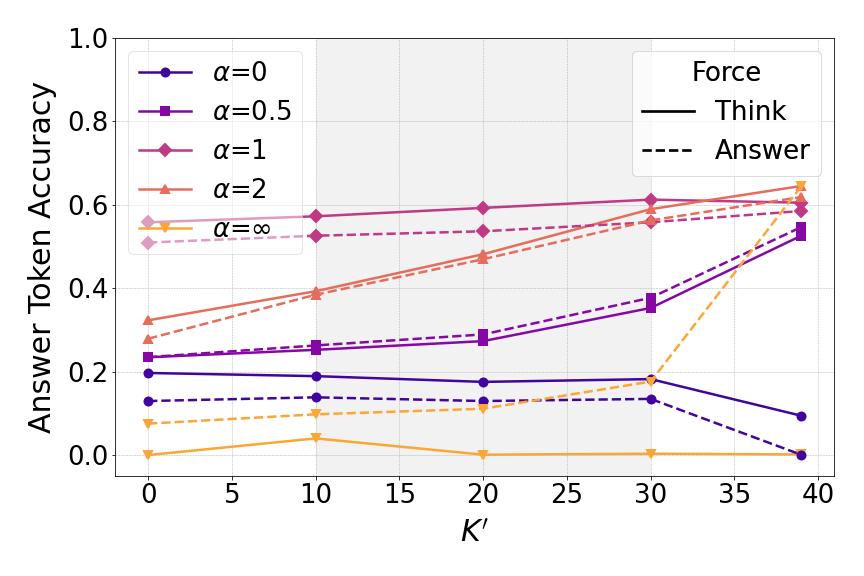}
        \caption{\texttt{TF-8}}
        \label{fig:cot_recipes_train_N_4_M_4_C_4_K_40_bulk_eval_N_4_M_4_C_5_K_40_L_8}
    \end{subfigure}
    \hfill
    % \vskip\baselineskip
    % Third subfigure
    \begin{subfigure}[b]{0.32\textwidth}
        \centering
        \includegraphics[width=\textwidth]{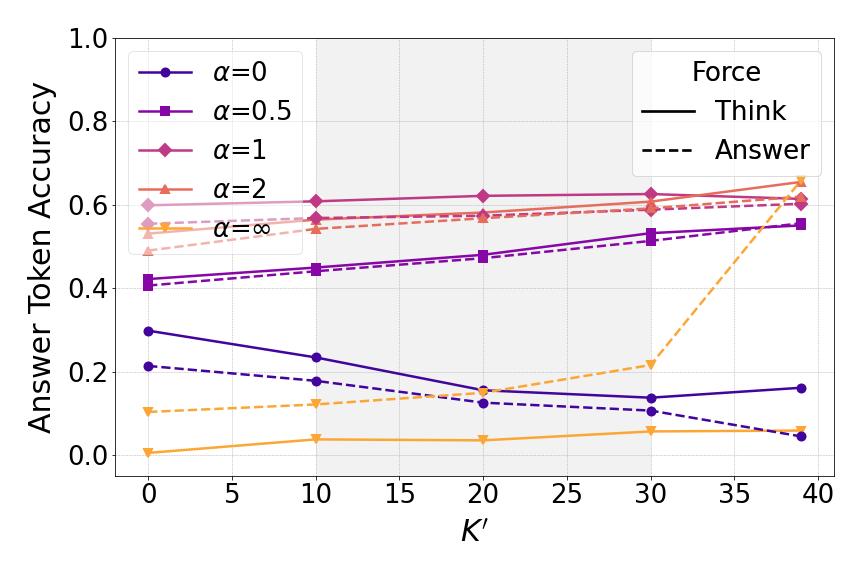}
        \caption{\texttt{TF-12}}
        \label{fig:cot_recipes_train_N_4_M_4_C_4_K_40_bulk_eval_N_4_M_4_C_5_K_40_L12}
    \end{subfigure}
    \caption{Chain length generalization: $\tacc$ of models trained with varying \texttt{CoT-Recipe}$(\alpha)$ and $\mN=\mM=\mC=\{4\}, K=40$ on evaluation datasets with $\widetilde{\mN}=\widetilde{\mM}=4, \widetilde{\mC}=\{5\}$ and $\widetilde{K}=40$. }
\label{fig:cot_recipes_train_N_4_M_4_C_4_K_40_bulk_eval_N_4_M_4_C_5_K_40}
    % \vspace{-4mm}
\end{figure*}

\begin{figure*}[h!]
    \centering
    % First subfigure
    \begin{subfigure}[b]{0.32\textwidth}
        \centering
        \includegraphics[width=\textwidth]{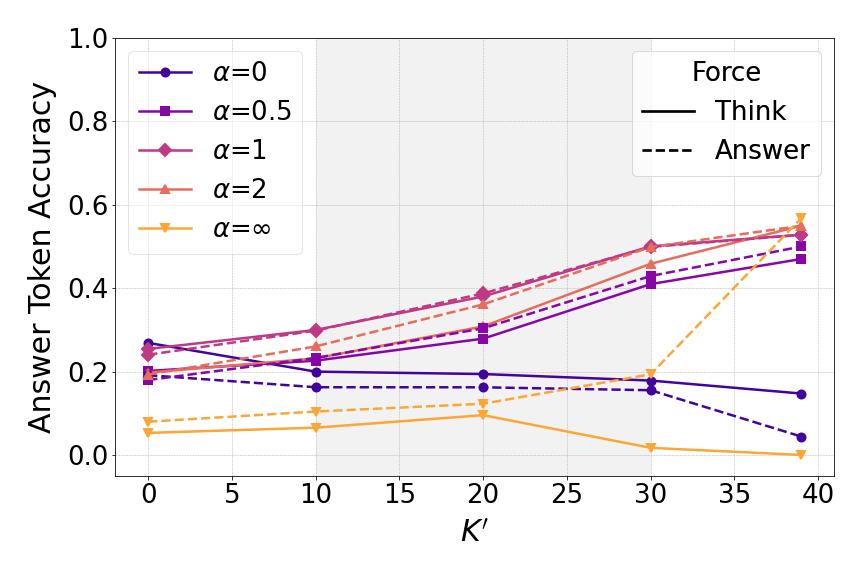}
        \caption{\texttt{TF-4}}
        \label{fig:cot_recipes_train_N_3-4_M_3-4_C_3-4_K_40_bulk_eval_N_4_M_4_C_5_K_40_L_4}
    \end{subfigure}
    \hfill
    % Second subfigure
    \begin{subfigure}[b]{0.32\textwidth}
        \centering
        \includegraphics[width=\textwidth]{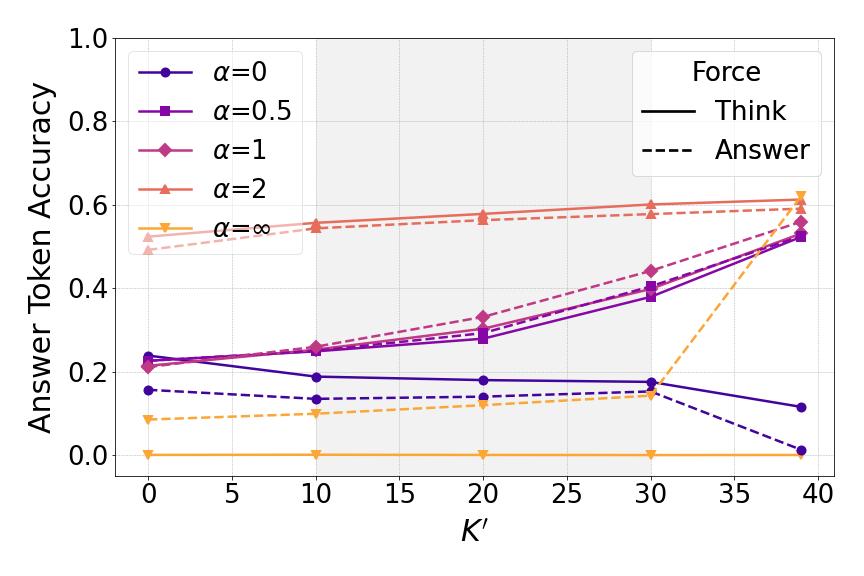}
        \caption{\texttt{TF-8}}
        \label{fig:cot_recipes_train_N_3-4_M_3-4_C_3-4_K_40_bulk_eval_N_4_M_4_C_5_K_40_L_8}
    \end{subfigure}
    \hfill
    % \vskip\baselineskip
    % Third subfigure
    \begin{subfigure}[b]{0.32\textwidth}
        \centering
        \includegraphics[width=\textwidth]{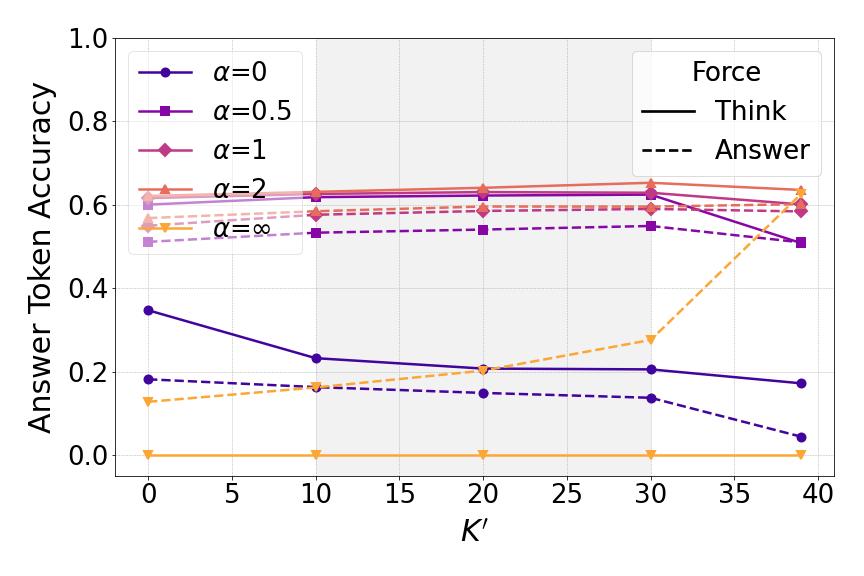}
        \caption{\texttt{TF-12}}
        \label{fig:cot_recipes_train_N_3-4_M_3-4_C_3-4_K_40_bulk_eval_N_4_M_4_C_5_K_40_L_12}
    \end{subfigure}
    \caption{Chain length generalization: $\tacc$ of models trained with varying \texttt{CoT-Recipe}$(\alpha)$ and $\mN=\mM=\mC=\{3, 4\}, K=40$ on evaluation datasets with $\widetilde{\mN}=\widetilde{\mM}=4, \widetilde{\mC}=\{5\}$ and $\widetilde{K}=40$. }
    \label{fig:cot_recipes_train_N_3-4_M_3-4_C_3-4_K_40_bulk_eval_N_4_M_4_C_5_K_40}
    % \vspace{-4mm}
\end{figure*}

\begin{figure*}[h!]
    \centering
    % First subfigure
    \begin{subfigure}[b]{0.32\textwidth}
        \centering
        \includegraphics[width=\textwidth]{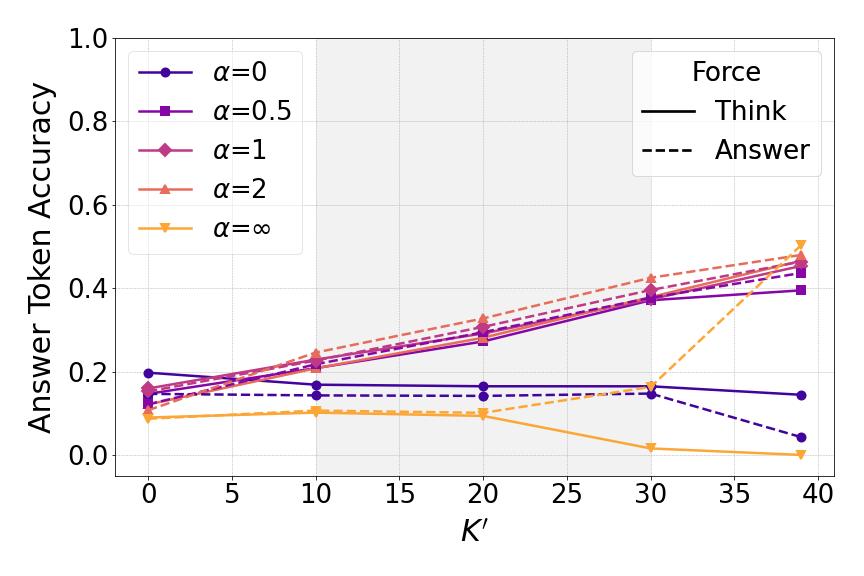}
        \caption{\texttt{TF-4}}
        \label{fig:cot_recipes_train_N_3-4_M_3-4_C_3-4_K_40_bulk_eval_N_4_M_4_C_6_K_40_L_4}
    \end{subfigure}
    \hfill
    % Second subfigure
    \begin{subfigure}[b]{0.32\textwidth}
        \centering
        \includegraphics[width=\textwidth]{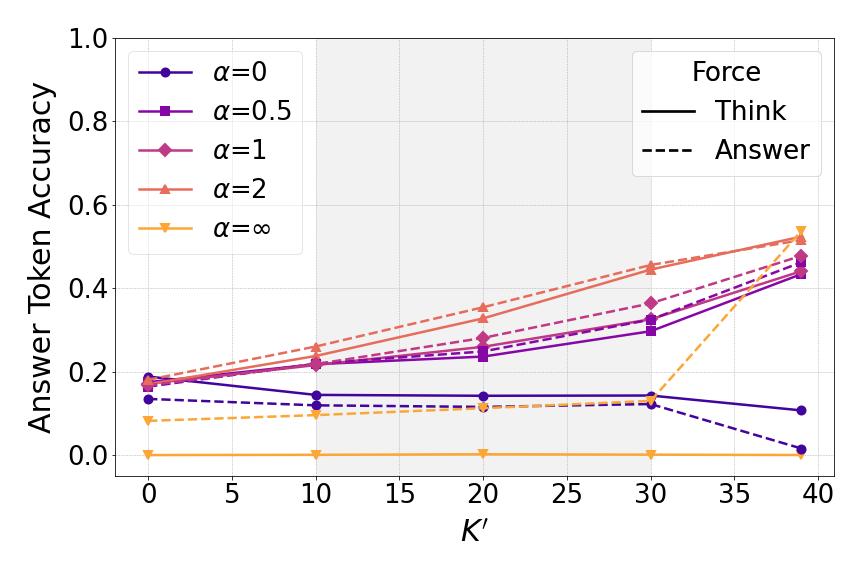}
        \caption{\texttt{TF-8}}
        \label{fig:cot_recipes_train_N_3-4_M_3-4_C_3-4_K_40_bulk_eval_N_4_M_4_C_6_K_40_L_8}
    \end{subfigure}
    \hfill
    % \vskip\baselineskip
    % Third subfigure
    \begin{subfigure}[b]{0.32\textwidth}
        \centering
        \includegraphics[width=\textwidth]{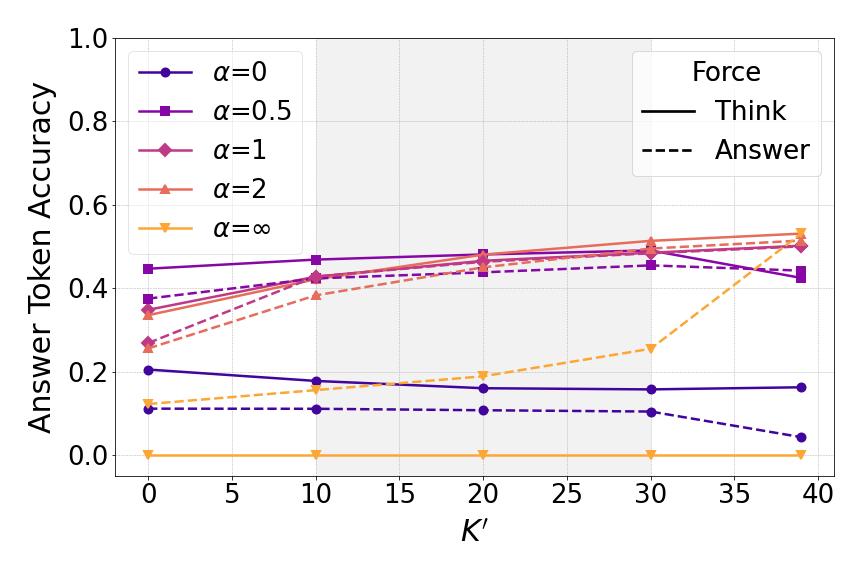}
        \caption{\texttt{TF-12}}
        \label{fig:cot_recipes_train_N_3-4_M_3-4_C_3-4_K_40_bulk_eval_N_4_M_4_C_6_K_40_L_12}
    \end{subfigure}
    \caption{Chain length generalization: $\tacc$ of models trained with varying \texttt{CoT-Recipe}$(\alpha)$ and $\mN=\mM=\mC=\{3, 4\}, K=40$ on evaluation datasets with $\widetilde{\mN}=\widetilde{\mM}=4, \widetilde{\mC}=\{6\}$ and $\widetilde{K}=40$. }
    \label{fig:cot_recipes_train_N_3-4_M_3-4_C_3-4_K_40_bulk_eval_N_4_M_4_C_6_K_40}
    % \vspace{-4mm}
\end{figure*}

\paragraph{Input Length.} In Section~\ref{subsec:diversity_len_gen}, we have noticed that the \texttt{TF-12} model was able to leverage the diversity of $\mN=\mM=\mC=\{3,4\}$ to achieve good length generalization when $\widetilde{\mN}=\{5\}$. However, such improvements are not evident in the smaller \texttt{TF-4} model as shown in Figure~\ref{fig:cot_recipes_len_gen_bulk_eval_N_5_M_4_C_4_K_40_L_4}. On the other hand, the \texttt{TF-8} model trained with $\alpha=2$ shows a significant lift with both the forcing strategies (see Figure~\ref{fig:cot_recipes_len_gen_bulk_eval_N_5_M_4_C_4_K_40_L_8}), with `Force Answer' being the suitable strategy over `Force Think'. When the input length is increased to $\widetilde{\mN}=\{6\}$, the $\tacc$ goes down even further across model sizes (see Figure~\ref{fig:cot_recipes_train_N_4_M_4_C_4_K_40_bulk_eval_N_6_M_4_C_4_K_40}), but we still observe that `Force Answer' is a preferred strategy over `Force Think'.

\paragraph{Chain Length.} For chain length generalization experiments, we create the evaluation datasets by setting $\widetilde{\mN}=\{4\},\widetilde{\mC}=\{5\}$ and keeping the rest of the parameter choices the same as Section~\ref{subsec:diversity_len_gen}. For models trained with $\mN=\mM=\mC=\{4\}$, we observe from Figure~\ref{fig:cot_recipes_train_N_4_M_4_C_4_K_40_bulk_eval_N_4_M_4_C_5_K_40} that preparing evaluation prompts with $K'=39$ results in the best $\tacc$ across model sizes and forcing strategies. In particular, the gap between peak $\tacc$ at $K'=0$ and $L'=39$ reduces as model size increases (see also Figure~\ref{fig:cot_recipes_train_N_3-4_M_3-4_C_3-4_K_40_bulk_eval_N_4_M_4_C_6_K_40} for the case with $\widetilde{\mC}=\{6\}$). Furthermore, the robustness of the trained models to varying $K'$ with $\alpha=\{0.5,1,2\}$ increases with model size as well. Similar observations can be made for models trained with $\mN=\mM=\mC=\{3, 4\}$ in Figure~\ref{fig:cot_recipes_train_N_3-4_M_3-4_C_3-4_K_40_bulk_eval_N_4_M_4_C_5_K_40}. However, unlike the length generalization case, the diversity via $\mN,\mM,\mC$ does not increase the peak $\tacc$ across $\alpha$ but profoundly impacts the choice of $\alpha$. For instance, $\alpha=2$ is significantly better than other choices for the \texttt{TF-8} model (Figure~\ref{fig:cot_recipes_train_N_3-4_M_3-4_C_3-4_K_40_bulk_eval_N_4_M_4_C_5_K_40_L_8}). Nonetheless, such gaps tend to minimize as model size increases (Figure~\ref{fig:cot_recipes_train_N_3-4_M_3-4_C_3-4_K_40_bulk_eval_N_4_M_4_C_5_K_40_L_12}).

\section{Symbolic Reasoning with \texttt{CIL-LangSym}}
\label{app:cil_langsym_details}

% \begin{figure*}[h!]
%     \centering
%     \includegraphics[width=\linewidth]{images/cil_langsym.png}
%     \caption{Chat template of a CoT/standard example in \texttt{CIL-LangSym} based on the \texttt{Qwen-2.5-1.5B-Instruct} tokenizer. Given $N=4, M=2, C=3$ and word length $W=8$, the DAG determines the ground truth causal dependencies and the \texttt{transform} function illustrates the string processing of the $M$ parent words. We apply the chat template to differentiate the question, thinking and final answer segments of the CoT example. The standard example does not contain intermediate steps. }
%     \label{fig:cil_symbolic}
% \end{figure*}

Data generation algorithms for \texttt{CIL-LangSym} are similar in nature to that of~\coticlnew, but instead are applied to string inputs rather than token embeddings. Algorithm~\ref{alg:cil_langsym_dataset} presents the pseudo-code for generating the entire dataset, Algorithm~\ref{alg:cil_langsym_single_prompt} generates a single prompt formatted using the chat template, and Algorithm~\ref{alg:cil_langsym_single_chain_word} generates a single chain word for an in-context example. The \texttt{system\_prompt} along with the formatted questions after applying the \texttt{Qwen-2.5-7B-Instruct} tokenizer chat template is shown in Figure~\ref{fig:cil_langsym_example}. We also illustrate the underlying DAG structure while creating an in-context example in Figure~\ref{fig:cil_symbolic}.

% \subsection{Generate the Dataset of Sequences}

\begin{algorithm}
\caption{Generate dataset with $T$ prompts}
\begin{algorithmic}[1]
\label{alg:cil_langsym_dataset}
\REQUIRE Parameter choices $\mN, \mM, \mC, K$, word length $W$, the \texttt{CoT-Recipe} parameters $\alpha, a, b$, size $T$, \texttt{system\_prompt},  \texttt{question\_template}.
\STATE Store all input params in \texttt{args}.
\STATE Initialize empty dataset $\mD = []$
\FOR{$j = 1$ to $T$}
\STATE $r_{CoT}^{(j)} = \texttt{CoT-Recipe}(\alpha, a, b)(j/T)$
\STATE $\vp$ = Algorithm~\ref{alg:cil_langsym_single_prompt}$(\texttt{*args})$.
\STATE $\mD.\texttt{append}(\vp)$
\ENDFOR
\RETURN $\mD$
\end{algorithmic}
\end{algorithm}

% \subsection{Generate Single Sequence}

\begin{algorithm}
\caption{Single prompt generation with index $j$ in the dataset.}
\begin{algorithmic}[1]
\label{alg:cil_langsym_single_prompt}
\REQUIRE Parameter choices $\mN, \mM, \mC, K$, word length $W$, and the CoT probability $r_{CoT}^{(j)}$, \texttt{system\_prompt},  \texttt{question\_template}, 

\STATE sample $N \sim \mN, M \sim \mM, C \sim \mC$
\STATE Limit $M = \min(M, N)$.
\STATE Messages $\vm = [\texttt{``system"}, \texttt{system\_prompt}]$ 
\STATE \texttt{CHARS = list(string.ascii\_lowercase)}
\FOR{$k = 1$ to $K$}

\STATE Initialize empty input words list $\vx$, chain words list $\vy$.
\FOR{$i = 1$ to $N$}
    \STATE $\vx[i] \leftarrow \texttt{``".join(sample(CHARS, W))}$
\ENDFOR
\STATE $\vt = \vx.\texttt{clone()}$
\FOR{$c = 1$ to $C$}
    \STATE $\texttt{parent\_words}=\texttt{rand.choice}(\vt, M)$
    \STATE $\vy[c]$ = Algorithm~\ref{alg:cil_langsym_single_chain_word}(\texttt{parent\_tokens})
    \STATE $\vt.\texttt{append}(\vy[c])$
\ENDFOR
\STATE \texttt{q = fmt\_q(question\_template,$\vx$)}
\STATE \texttt{m.extend([`user', q])}
\IF{$r^{(j)}_{CoT} \ge \gU(0,1)$}
\STATE \texttt{a = fmt\_a($\vy$, CoT=True)}
\ENDIF
\STATE \texttt{a = fmt\_a($\vy$, CoT=False)}
\STATE \texttt{m.extend([`assistant', a])}
\ENDFOR
\STATE \texttt{p = chat\_template(m)}
\RETURN \texttt{p}
\end{algorithmic}
\end{algorithm}

% \subsection{Generate Single Chain Token}
\begin{algorithm}
\caption{Single \textit{chain word} generation}
\begin{algorithmic}[1]
\label{alg:cil_langsym_single_chain_word}
\REQUIRE $M$ \texttt{parent\_words}.
\STATE Initialize temporary list of slices $\vh$
\FOR{$i = 1$ to $M$}
    \STATE $w \leftarrow \texttt{parent\_words}[i]$
    \STATE $\vh[i] \leftarrow \texttt{w[len(w) // 2 :]}$
\ENDFOR
\STATE $o \leftarrow \texttt{concat}(\vh)$
\STATE $y_c \leftarrow \texttt{char\_offset}(o, 1)$
\RETURN $y_c$
\end{algorithmic}
\end{algorithm}

\begin{figure}[h!]
    \centering
    % First subfigure
    \begin{subfigure}[b]{\linewidth}
        \centering
        \includegraphics[width=\linewidth]{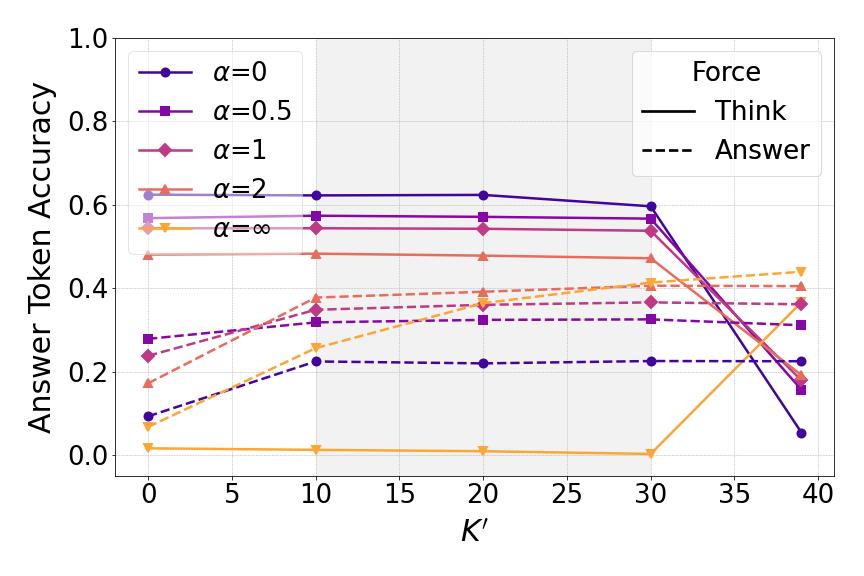}
        % \caption{$\widetilde{\mC}=\{4\}$}
        \caption{$\widetilde{\mN}=\{4\}, \widetilde{\mM}=\{2\}, \widetilde{\mC}=\{4\}$}
        \label{fig:cil_symbolic_cot_recipes_len_gen_7B_instruct_C_4}
    \end{subfigure}
    \hfill
    % Second subfigure
    \begin{subfigure}[b]{\linewidth}
        \centering
        \includegraphics[width=\linewidth]{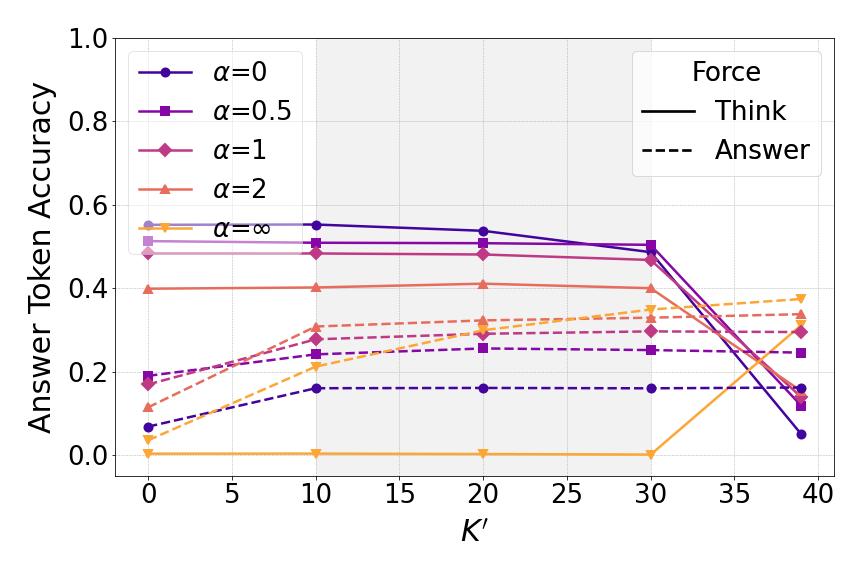}
        % \caption{$\widetilde{\mC}=\{5\}$}
        \caption{$\widetilde{\mN}=\{4\}, \widetilde{\mM}=\{2\}, \widetilde{\mC}=\{5\}$}
        \label{fig:cil_symbolic_cot_recipes_len_gen_7B_instruct_C_5}
    \end{subfigure}
    \caption{Chain length generalization: $\tacc$ of \texttt{Qwen-2.5-7B-Instruct} model trained with varying $\alpha$, $\mN=\{4\}, \mM=\{2\}, \mC=\{3\}$ and $K=40$ on evaluation datasets with longer chains. }
    \label{fig:cil_langsym_cot_recipes_chain_len_gen_7B_instruct}
    % \vspace{-4mm}
\end{figure}

\subsection{Measuring the Reliance on Intermediate `Thinking' Steps}
\label{app:cot_reliance}

As shown in Section~\ref{subsec:cil_langsym_exps}, there is a clear indication of the role of model size in attaining high $\tacc$ across various $\alpha$ and $K'$. In this section, we consider the model predictions for $K'=0$ with the `Force Think' strategy  and present a breakdown of the evaluation prompts based on the correctness of the intermediate `thinking' steps. Since \texttt{Qwen-2.5-7B-Instruct} with $\alpha=0$ was already analyzed in Section~\ref{subsec:cil_langsym_exps}, we focus on the remaining models and $\alpha$ below.

\paragraph{Qwen-2.5-0.5B-Instruct.} Based on the values plotted in Figure~\ref{fig:cil_langsym_qwen_0.5B_instruct_cot_recipes_N_4_M_2_C_3}, the $\tacc$ of the smaller SFT'ed \texttt{Qwen-2.5-0.5B-Instruct} model with $\alpha=0$ is $0.0044$. Out of the $44$ prompts for which the final answer was predicted correctly, Figure~\ref{fig:0.5B_instruct_alpha_0_step_breakdown} shows the model was able to correctly predict both the intermediate step outputs for only $4.5\%$ of those prompts. Surprisingly, in $72.7\%$ of the $44$ model predictions, both the intermediate step outputs were wrong. Similar observations can be made for $\alpha=0.5$ in Figure~\ref{fig:0.5B_instruct_alpha_0.5_step_breakdown}, $\alpha=1$ in Figure~\ref{fig:0.5B_instruct_alpha_1_step_breakdown}, $\alpha=2$ in Figure~\ref{fig:0.5B_instruct_alpha_2_step_breakdown}, and $\alpha=\infty$ in Figure~\ref{fig:0.5B_instruct_alpha_inf_step_breakdown}.

%=====0.5 B-=========

\begin{figure}[h!]
\centering
\begin{subfigure}{0.45\linewidth}
\centering
\begin{tikzpicture}[scale=1.2]
    % Left heatmap - Correct Final Answers
    \fill[low] (0,1) rectangle (1,2);    % 81.5%
    \fill[high] (1,1) rectangle (2,2);     % 7.8%
    \fill[low] (0,0) rectangle (1,1);     % 5.0%
    \fill[med] (1,0) rectangle (2,1);     % 5.6%

    \draw[thick] (0,0) rectangle (2,2);
    \draw[thick] (0,1) -- (2,1);
    \draw[thick] (1,0) -- (1,2);

   % fill here
\node at (0.5,0.5) {\textbf{4.5\%}}; % cc  
\node at (1.5,1.5) {\textbf{72.7\%}}; % ii  
\node at (1.5,0.5) {\textbf{13.6\%}}; % ic  
\node at (0.5,1.5) {\textbf{9.1\%}};  % ci

    % Axis labels
    \node[anchor=center] at (0.5,-0.25) {\footnotesize Correct};
    \node[anchor=center] at (1.5,-0.25) {\footnotesize Incorrect};
    \node[anchor=center, rotate=90] at (-0.25,0.5) {\footnotesize Correct};
    \node[anchor=center, rotate=90] at (-0.25,1.5) {\footnotesize Incorrect};

    % Step labels
    \node at (1,2.3) {\small Step1};
    \node[rotate=90] at (-0.6,1) {\small Step2};
\end{tikzpicture}
\caption{\checkmark Answer}
\end{subfigure}
\begin{subfigure}{0.45\linewidth}
\centering
\begin{tikzpicture}[scale=1.2]
    % Right heatmap - Incorrect Final Answers
    \fill[low] (0,1) rectangle (1,2);     % 30.5%
    \fill[high] (1,1) rectangle (2,2);     % 22.4%
    \fill[low] (0,0) rectangle (1,1);     % 11.6%
    \fill[low] (1,0) rectangle (2,1);    % 35.4%

    \draw[thick] (0,0) rectangle (2,2);
    \draw[thick] (0,1) -- (2,1);
    \draw[thick] (1,0) -- (1,2);

   %fill here    
\node at (0.5,0.5) {\textbf{0.0\%}};  % cc  
\node at (1.5,1.5) {\textbf{99.4\%}}; % ii  
\node at (1.5,0.5) {\textbf{0.4\%}};  % ic  
\node at (0.5,1.5) {\textbf{0.2\%}};  % ci

 % Axis labels
    \node[anchor=center] at (0.5,-0.25) {\footnotesize Correct};
    \node[anchor=center] at (1.5,-0.25) {\footnotesize Incorrect};
    \node[anchor=center, rotate=90] at (-0.25,0.5) {\footnotesize Correct};
    \node[anchor=center, rotate=90] at (-0.25,1.5) {\footnotesize Incorrect};

    % Step labels
    \node at (1,2.3) {\small Step1};
    \node[rotate=90] at (-0.6,1) {\small Step2};
\end{tikzpicture}
\caption{$\times$ Answer}
\end{subfigure}
\caption{Trained Qwen2.5 0.5 with $\alpha=0$ achieves only $0.44\%$ final answer accuracy. 
% For both correct and incorrect final answers, the model achieves as low as $4.5\%$ and $0\%$ accuracy for the intermediate tokens over both steps respectively.
}
\label{fig:0.5B_instruct_alpha_0_step_breakdown}
\end{figure}
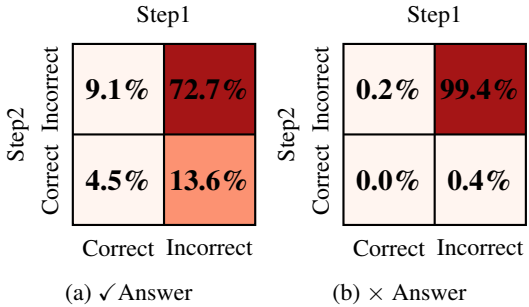

\begin{figure}[h!]
\centering
\begin{subfigure}{0.45\linewidth}
\centering
\begin{tikzpicture}[scale=1.2]
    % Left heatmap - Correct Final Answers
    \fill[low] (0,1) rectangle (1,2);    % 81.5%
    \fill[high] (1,1) rectangle (2,2);     % 7.8%
    \fill[med] (0,0) rectangle (1,1);     % 5.0%
    \fill[low] (1,0) rectangle (2,1);     % 5.6%

    \draw[thick] (0,0) rectangle (2,2);
    \draw[thick] (0,1) -- (2,1);
    \draw[thick] (1,0) -- (1,2);

   % fill here
\node at (0.5,0.5) {\textbf{0.0\%}};  % cc  
\node at (1.5,1.5) {\textbf{84.2\%}}; % ii  
\node at (1.5,0.5) {\textbf{5.3\%}};  % ic  
\node at (0.5,1.5) {\textbf{10.5\%}}; % ci

    % Axis labels
    \node[anchor=center] at (0.5,-0.25) {\footnotesize Correct};
    \node[anchor=center] at (1.5,-0.25) {\footnotesize Incorrect};
    \node[anchor=center, rotate=90] at (-0.25,0.5) {\footnotesize Correct};
    \node[anchor=center, rotate=90] at (-0.25,1.5) {\footnotesize Incorrect};

    % Step labels
    \node at (1,2.3) {\small Step1};
    \node[rotate=90] at (-0.6,1) {\small Step2};
\end{tikzpicture}
\caption{\checkmark Answer}
\end{subfigure}
\begin{subfigure}{0.45\linewidth}
\centering
\begin{tikzpicture}[scale=1.2]
    % Right heatmap - Incorrect Final Answers
    \fill[low] (0,1) rectangle (1,2);     % 30.5%
    \fill[high] (1,1) rectangle (2,2);     % 22.4%
    \fill[low] (0,0) rectangle (1,1);     % 11.6%
    \fill[low] (1,0) rectangle (2,1);    % 35.4%

    \draw[thick] (0,0) rectangle (2,2);
    \draw[thick] (0,1) -- (2,1);
    \draw[thick] (1,0) -- (1,2);

   %fill here    
\node at (0.5,0.5) {\textbf{0.0\%}};  % cc  
\node at (1.5,1.5) {\textbf{99.6\%}}; % ii  
\node at (1.5,0.5) {\textbf{0.2\%}};  % ic  
\node at (0.5,1.5) {\textbf{0.2\%}};  % ci

 % Axis labels
    \node[anchor=center] at (0.5,-0.25) {\footnotesize Correct};
    \node[anchor=center] at (1.5,-0.25) {\footnotesize Incorrect};
    \node[anchor=center, rotate=90] at (-0.25,0.5) {\footnotesize Correct};
    \node[anchor=center, rotate=90] at (-0.25,1.5) {\footnotesize Incorrect};

    % Step labels
    \node at (1,2.3) {\small Step1};
    \node[rotate=90] at (-0.6,1) {\small Step2};
\end{tikzpicture}
\caption{$\times$ Answer}
\end{subfigure}
\caption{Trained Qwen2.5 0.5 with $\alpha=0.5$ achieves only $0.19\%$ final answer accuracy. 
% For both correct and incorrect final answers, the model achieves $0.0\%$ accuracy for the intermediate tokens over both steps.
}
\label{fig:0.5B_instruct_alpha_0.5_step_breakdown}
\end{figure}

\begin{figure}[h!]
\centering
\begin{subfigure}{0.45\linewidth}
\centering
\begin{tikzpicture}[scale=1.2]
    % Left heatmap - Correct Final Answers
    \fill[med] (0,1) rectangle (1,2);    % 81.5%
    \fill[high] (1,1) rectangle (2,2);     % 7.8%
    \fill[low] (0,0) rectangle (1,1);     % 5.0%
    \fill[low] (1,0) rectangle (2,1);     % 5.6%

    \draw[thick] (0,0) rectangle (2,2);
    \draw[thick] (0,1) -- (2,1);
    \draw[thick] (1,0) -- (1,2);

   % fill here
\node at (0.5,0.5) {\textbf{5.9\%}};  % cc  
\node at (1.5,1.5) {\textbf{58.8\%}}; % ii  
\node at (1.5,0.5) {\textbf{29.4\%}}; % ic  
\node at (0.5,1.5) {\textbf{5.9\%}};  % ci
    % Axis labels
    \node[anchor=center] at (0.5,-0.25) {\footnotesize Correct};
    \node[anchor=center] at (1.5,-0.25) {\footnotesize Incorrect};
    \node[anchor=center, rotate=90] at (-0.25,0.5) {\footnotesize Correct};
    \node[anchor=center, rotate=90] at (-0.25,1.5) {\footnotesize Incorrect};

    % Step labels
    \node at (1,2.3) {\small Step1};
    \node[rotate=90] at (-0.6,1) {\small Step2};
\end{tikzpicture}
\caption{\checkmark Answer}
\end{subfigure}
\begin{subfigure}{0.45\linewidth}
\centering
\begin{tikzpicture}[scale=1.2]
    % Right heatmap - Incorrect Final Answers
    \fill[low] (0,1) rectangle (1,2);     % 30.5%
    \fill[high] (1,1) rectangle (2,2);     % 22.4%
    \fill[low] (0,0) rectangle (1,1);     % 11.6%
    \fill[low] (1,0) rectangle (2,1);    % 35.4%

    \draw[thick] (0,0) rectangle (2,2);
    \draw[thick] (0,1) -- (2,1);
    \draw[thick] (1,0) -- (1,2);

   %fill here    
\node at (0.5,0.5) {\textbf{0.0\%}};  % cc  
\node at (1.5,1.5) {\textbf{99.9\%}}; % ii  
\node at (1.5,0.5) {\textbf{0.1\%}};  % ic  
\node at (0.5,1.5) {\textbf{0.0\%}};  % ci

 % Axis labels
    \node[anchor=center] at (0.5,-0.25) {\footnotesize Correct};
    \node[anchor=center] at (1.5,-0.25) {\footnotesize Incorrect};
    \node[anchor=center, rotate=90] at (-0.25,0.5) {\footnotesize Correct};
    \node[anchor=center, rotate=90] at (-0.25,1.5) {\footnotesize Incorrect};

    % Step labels
    \node at (1,2.3) {\small Step1};
    \node[rotate=90] at (-0.6,1) {\small Step2};
\end{tikzpicture}
\caption{$\times$ Answer}
\end{subfigure}
\caption{Trained Qwen2.5 0.5 with $\alpha=1$ achieves only $0.17\%$ final answer accuracy. 
% For both correct and incorrect final answers, the model achieves as low as $5.9\%$ and $0.0\%$ accuracy for the intermediate tokens over both steps respectively.
}
\label{fig:0.5B_instruct_alpha_1_step_breakdown}
\end{figure}

\begin{figure}[h!]
\centering
\begin{subfigure}{0.45\linewidth}
\centering
\begin{tikzpicture}[scale=1.2]
    % Left heatmap - Correct Final Answers
    \fill[low] (0,1) rectangle (1,2);    % 81.5%
    \fill[high] (1,1) rectangle (2,2);     % 7.8%
    \fill[med] (0,0) rectangle (1,1);     % 5.0%
    \fill[low] (1,0) rectangle (2,1);     % 5.6%

    \draw[thick] (0,0) rectangle (2,2);
    \draw[thick] (0,1) -- (2,1);
    \draw[thick] (1,0) -- (1,2);

   % fill here
\node at (0.5,0.5) {\textbf{25.0\%}}; % cc  
\node at (1.5,1.5) {\textbf{75.0\%}}; % ii  
\node at (1.5,0.5) {\textbf{0.0\%}};  % ic  
\node at (0.5,1.5) {\textbf{0.0\%}};  % ci

    % Axis labels
    \node[anchor=center] at (0.5,-0.25) {\footnotesize Correct};
    \node[anchor=center] at (1.5,-0.25) {\footnotesize Incorrect};
    \node[anchor=center, rotate=90] at (-0.25,0.5) {\footnotesize Correct};
    \node[anchor=center, rotate=90] at (-0.25,1.5) {\footnotesize Incorrect};

    % Step labels
    \node at (1,2.3) {\small Step1};
    \node[rotate=90] at (-0.6,1) {\small Step2};
\end{tikzpicture}
\caption{\checkmark Answer}
\end{subfigure}
\begin{subfigure}{0.45\linewidth}
\centering
\begin{tikzpicture}[scale=1.2]
    % Right heatmap - Incorrect Final Answers
    \fill[low] (0,1) rectangle (1,2);     % 30.5%
    \fill[high] (1,1) rectangle (2,2);     % 22.4%
    \fill[low] (0,0) rectangle (1,1);     % 11.6%
    \fill[low] (1,0) rectangle (2,1);    % 35.4%

    \draw[thick] (0,0) rectangle (2,2);
    \draw[thick] (0,1) -- (2,1);
    \draw[thick] (1,0) -- (1,2);

   %fill here 
\node at (0.5,0.5) {\textbf{0.0\%}};  % cc  
\node at (1.5,1.5) {\textbf{99.9\%}}; % ii  
\node at (1.5,0.5) {\textbf{0.1\%}};  % ic  
\node at (0.5,1.5) {\textbf{0.0\%}};  % ci

 % Axis labels
    \node[anchor=center] at (0.5,-0.25) {\footnotesize Correct};
    \node[anchor=center] at (1.5,-0.25) {\footnotesize Incorrect};
    \node[anchor=center, rotate=90] at (-0.25,0.5) {\footnotesize Correct};
    \node[anchor=center, rotate=90] at (-0.25,1.5) {\footnotesize Incorrect};

    % Step labels
    \node at (1,2.3) {\small Step1};
    \node[rotate=90] at (-0.6,1) {\small Step2};
\end{tikzpicture}
\caption{$\times$ Answer}
\end{subfigure}
\caption{Trained Qwen2.5 0.5 with $\alpha=2$ achieves only $0.04\%$ final answer accuracy. 
% For both correct and incorrect final answers, the model achieves $25\%$ and $0\%$ accuracy for the intermediate tokens over both steps.
}
\label{fig:0.5B_instruct_alpha_2_step_breakdown}
\end{figure}

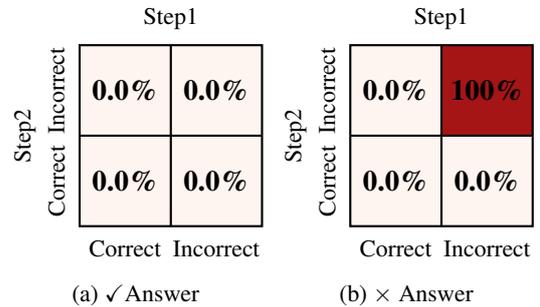
\begin{figure}[h!]
\centering
\begin{subfigure}{0.45\linewidth}
\centering
\begin{tikzpicture}[scale=1.2]
    % Left heatmap - Correct Final Answers
    \fill[low] (0,1) rectangle (1,2);    % 81.5%
    \fill[low] (1,1) rectangle (2,2);     % 7.8%
    \fill[low] (0,0) rectangle (1,1);     % 5.0%
    \fill[low] (1,0) rectangle (2,1);     % 5.6%

    \draw[thick] (0,0) rectangle (2,2);
    \draw[thick] (0,1) -- (2,1);
    \draw[thick] (1,0) -- (1,2);

   % fill here
\node at (0.5,0.5) {\textbf{0.0\%}}; % cc  
\node at (1.5,1.5) {\textbf{0.0\%}}; % ii  
\node at (1.5,0.5) {\textbf{0.0\%}}; % ic  
\node at (0.5,1.5) {\textbf{0.0\%}}; % ci

    % Axis labels
    \node[anchor=center] at (0.5,-0.25) {\footnotesize Correct};
    \node[anchor=center] at (1.5,-0.25) {\footnotesize Incorrect};
    \node[anchor=center, rotate=90] at (-0.25,0.5) {\footnotesize Correct};
    \node[anchor=center, rotate=90] at (-0.25,1.5) {\footnotesize Incorrect};

    % Step labels
    \node at (1,2.3) {\small Step1};
    \node[rotate=90] at (-0.6,1) {\small Step2};
\end{tikzpicture}
\caption{\checkmark Answer}
\end{subfigure}
\begin{subfigure}{0.45\linewidth}
\centering
\begin{tikzpicture}[scale=1.2]
    % Right heatmap - Incorrect Final Answers
    \fill[low] (0,1) rectangle (1,2);     % 30.5%
    \fill[high] (1,1) rectangle (2,2);     % 22.4%
    \fill[low] (0,0) rectangle (1,1);     % 11.6%
    \fill[low] (1,0) rectangle (2,1);    % 35.4%

    \draw[thick] (0,0) rectangle (2,2);
    \draw[thick] (0,1) -- (2,1);
    \draw[thick] (1,0) -- (1,2);

   %fill here 
\node at (0.5,0.5) {\textbf{0.0\%}};  % cc  
\node at (1.5,1.5) {\textbf{100\%}}; % ii  
\node at (1.5,0.5) {\textbf{0.0\%}};  % ic  
\node at (0.5,1.5) {\textbf{0.0\%}};  % ci

 % Axis labels
    \node[anchor=center] at (0.5,-0.25) {\footnotesize Correct};
    \node[anchor=center] at (1.5,-0.25) {\footnotesize Incorrect};
    \node[anchor=center, rotate=90] at (-0.25,0.5) {\footnotesize Correct};
    \node[anchor=center, rotate=90] at (-0.25,1.5) {\footnotesize Incorrect};

    % Step labels
    \node at (1,2.3) {\small Step1};
    \node[rotate=90] at (-0.6,1) {\small Step2};
\end{tikzpicture}
\caption{$\times$ Answer}
\end{subfigure}
\caption{Trained Qwen2.5 0.5B with $\alpha=\infty$ achieves  $0.0\%$ final answer accuracy.
% For both correct and incorrect final answers, the model fails at predicting any intermediate steps correctly.
}
\label{fig:0.5B_instruct_alpha_inf_step_breakdown}
\end{figure}

\paragraph{Qwen-2.5-1.5B-Instruct.} As the model size increases, Figure~\ref{fig:cil_langsym_qwen_1.5B_instruct_cot_recipes_N_4_M_2_C_3} shows that the $\tacc$ of the SFT'ed \texttt{Qwen-2.5-1.5B-Instruct} model with $\alpha=0$ is $0.1551$. Figure~\ref{fig:1.5B_instruct_alpha_0_step_breakdown} shows that only $36.5\%$ of the $1551$ prompts that have correct final answers have both the intermediate step predictions to be correct. Although this fraction is higher than the \texttt{0.5B-Instruct} case in Figure~\ref{fig:0.5B_instruct_alpha_0_step_breakdown}, it is still a considerably low fraction. On the other hand, when the model produces incorrect final answers, we can observe a relatively higher importance of the Step 2 predictions compared to Step 1. See also Figure~\ref{fig:1.5B_instruct_alpha_0.5_step_breakdown} ($\alpha=0.5$), Figure~\ref{fig:1.5B_instruct_alpha_1_step_breakdown} ($\alpha=1$), Figure~\ref{fig:1.5B_instruct_alpha_2_step_breakdown} ($\alpha=2$) and Figure~\ref{fig:1.5B_instruct_alpha_inf_step_breakdown} ($\alpha=\infty$).

\begin{figure}[h]
\centering
\begin{subfigure}{0.45\linewidth}
\centering
\begin{tikzpicture}[scale=1.2]
    % Left heatmap - Correct Final Answers
    \fill[med] (0,1) rectangle (1,2);    % 81.5%
    \fill[med] (1,1) rectangle (2,2);     % 7.8%
    \fill[high] (0,0) rectangle (1,1);     % 5.0%
    \fill[low] (1,0) rectangle (2,1);     % 5.6%

    \draw[thick] (0,0) rectangle (2,2);
    \draw[thick] (0,1) -- (2,1);
    \draw[thick] (1,0) -- (1,2);

   % fill here
\node at (0.5,0.5) {\textbf{36.5\%}}; % cc  
\node at (1.5,1.5) {\textbf{30.0\%}}; % ii  
\node at (1.5,0.5) {\textbf{13.7\%}}; % ic  
\node at (0.5,1.5) {\textbf{19.8\%}}; % ci

    % Axis labels
    \node[anchor=center] at (0.5,-0.25) {\footnotesize Correct};
    \node[anchor=center] at (1.5,-0.25) {\footnotesize Incorrect};
    \node[anchor=center, rotate=90] at (-0.25,0.5) {\footnotesize Correct};
    \node[anchor=center, rotate=90] at (-0.25,1.5) {\footnotesize Incorrect};

    % Step labels
    \node at (1,2.3) {\small Step1};
    \node[rotate=90] at (-0.6,1) {\small Step2};
\end{tikzpicture}
\caption{\checkmark Answer}
\end{subfigure}
\begin{subfigure}{0.45\linewidth}
\centering
\begin{tikzpicture}[scale=1.2]
    % Right heatmap - Incorrect Final Answers
    \fill[med] (0,1) rectangle (1,2);     % 30.5%
    \fill[high] (1,1) rectangle (2,2);     % 22.4%
    \fill[low] (0,0) rectangle (1,1);     % 11.6%
    \fill[low] (1,0) rectangle (2,1);    % 35.4%

    \draw[thick] (0,0) rectangle (2,2);
    \draw[thick] (0,1) -- (2,1);
    \draw[thick] (1,0) -- (1,2);

   %fill here    
\node at (0.5,0.5) {\textbf{6.5\%}};  % cc  
\node at (1.5,1.5) {\textbf{73.3\%}}; % ii  
\node at (1.5,0.5) {\textbf{7.7\%}};  % ic  
\node at (0.5,1.5) {\textbf{12.5\%}}; % ci

 % Axis labels
    \node[anchor=center] at (0.5,-0.25) {\footnotesize Correct};
    \node[anchor=center] at (1.5,-0.25) {\footnotesize Incorrect};
    \node[anchor=center, rotate=90] at (-0.25,0.5) {\footnotesize Correct};
    \node[anchor=center, rotate=90] at (-0.25,1.5) {\footnotesize Incorrect};

    % Step labels
    \node at (1,2.3) {\small Step1};
    \node[rotate=90] at (-0.6,1) {\small Step2};
\end{tikzpicture}
\caption{$\times$ Answer}
\end{subfigure}
\caption{Trained Qwen2.5 1.5B with $\alpha=0$ achieves $15.51\%$ final answer accuracy.
% For both correct and incorrect final answers, the model achieves $36.5\%$ and $6.5\%$ accuracy for the intermediate tokens over both steps respectively.
}
\label{fig:1.5B_instruct_alpha_0_step_breakdown}
\end{figure}
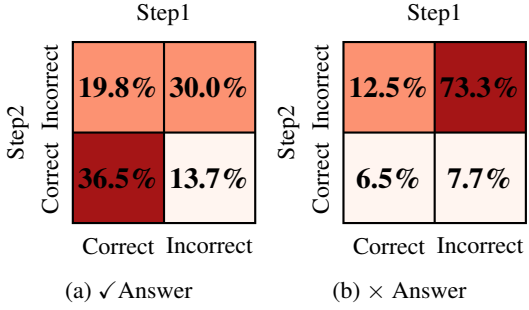

\begin{figure}[h]
\centering
\begin{subfigure}{0.45\linewidth}
\centering
\begin{tikzpicture}[scale=1.2]
    % Left heatmap - Correct Final Answers
    \fill[med] (0,1) rectangle (1,2);    % 81.5%
    \fill[med] (1,1) rectangle (2,2);     % 7.8%
    \fill[med] (0,0) rectangle (1,1);     % 5.0%
    \fill[low] (1,0) rectangle (2,1);     % 5.6%

    \draw[thick] (0,0) rectangle (2,2);
    \draw[thick] (0,1) -- (2,1);
    \draw[thick] (1,0) -- (1,2);

   % fill here
\node at (0.5,0.5) {\textbf{31.1\%}}; % cc  
\node at (1.5,1.5) {\textbf{33.4\%}}; % ii  
\node at (1.5,0.5) {\textbf{14.6\%}}; % ic  
\node at (0.5,1.5) {\textbf{20.9\%}}; % ci

    % Axis labels
    \node[anchor=center] at (0.5,-0.25) {\footnotesize Correct};
    \node[anchor=center] at (1.5,-0.25) {\footnotesize Incorrect};
    \node[anchor=center, rotate=90] at (-0.25,0.5) {\footnotesize Correct};
    \node[anchor=center, rotate=90] at (-0.25,1.5) {\footnotesize Incorrect};

    % Step labels
    \node at (1,2.3) {\small Step1};
    \node[rotate=90] at (-0.6,1) {\small Step2};
\end{tikzpicture}
\caption{\checkmark Answer}
\end{subfigure}
\begin{subfigure}{0.45\linewidth}
\centering
\begin{tikzpicture}[scale=1.2]
    % Right heatmap - Incorrect Final Answers
    \fill[med] (0,1) rectangle (1,2);     % 30.5%
    \fill[high] (1,1) rectangle (2,2);     % 22.4%
    \fill[low] (0,0) rectangle (1,1);     % 11.6%
    \fill[low] (1,0) rectangle (2,1);    % 35.4%

    \draw[thick] (0,0) rectangle (2,2);
    \draw[thick] (0,1) -- (2,1);
    \draw[thick] (1,0) -- (1,2);

   %fill here    
\node at (0.5,0.5) {\textbf{4.8\%}};  % cc  
\node at (1.5,1.5) {\textbf{78.0\%}}; % ii  
\node at (1.5,0.5) {\textbf{6.2\%}};  % ic  
\node at (0.5,1.5) {\textbf{11.0\%}}; % ci

 % Axis labels
    \node[anchor=center] at (0.5,-0.25) {\footnotesize Correct};
    \node[anchor=center] at (1.5,-0.25) {\footnotesize Incorrect};
    \node[anchor=center, rotate=90] at (-0.25,0.5) {\footnotesize Correct};
    \node[anchor=center, rotate=90] at (-0.25,1.5) {\footnotesize Incorrect};

    % Step labels
    \node at (1,2.3) {\small Step1};
    \node[rotate=90] at (-0.6,1) {\small Step2};
\end{tikzpicture}
\caption{$\times$ Answer}
\end{subfigure}
\caption{Trained Qwen2.5 1.5B with $\alpha=0.5$ achieves $11.75\%$ final answer accuracy.
% For both correct and incorrect final answers, the model achieves $31.1\%$ and $4.8\%$ accuracy for the intermediate tokens over both steps respectively.
}
\label{fig:1.5B_instruct_alpha_0.5_step_breakdown}
\end{figure}

\begin{figure}[h]
\centering
\begin{subfigure}{0.45\linewidth}
\centering
\begin{tikzpicture}[scale=1.2]
    % Left heatmap - Correct Final Answers
    \fill[med] (0,1) rectangle (1,2);    % 81.5%
    \fill[high] (1,1) rectangle (2,2);     % 7.8%
    \fill[med] (0,0) rectangle (1,1);     % 5.0%
    \fill[med] (1,0) rectangle (2,1);     % 5.6%

    \draw[thick] (0,0) rectangle (2,2);
    \draw[thick] (0,1) -- (2,1);
    \draw[thick] (1,0) -- (1,2);

   % fill here
\node at (0.5,0.5) {\textbf{15.6\%}}; % cc  
\node at (1.5,1.5) {\textbf{51.7\%}}; % ii  
\node at (1.5,0.5) {\textbf{15.6\%}}; % ic  
\node at (0.5,1.5) {\textbf{17.1\%}}; % ci

    % Axis labels
    \node[anchor=center] at (0.5,-0.25) {\footnotesize Correct};
    \node[anchor=center] at (1.5,-0.25) {\footnotesize Incorrect};
    \node[anchor=center, rotate=90] at (-0.25,0.5) {\footnotesize Correct};
    \node[anchor=center, rotate=90] at (-0.25,1.5) {\footnotesize Incorrect};

    % Step labels
    \node at (1,2.3) {\small Step1};
    \node[rotate=90] at (-0.6,1) {\small Step2};
\end{tikzpicture}
\caption{\checkmark Answer}
\end{subfigure}
\begin{subfigure}{0.45\linewidth}
\centering
\begin{tikzpicture}[scale=1.2]
    % Right heatmap - Incorrect Final Answers
    \fill[low] (0,1) rectangle (1,2);     % 30.5%
    \fill[high] (1,1) rectangle (2,2);     % 22.4%
    \fill[low] (0,0) rectangle (1,1);     % 11.6%
    \fill[low] (1,0) rectangle (2,1);    % 35.4%

    \draw[thick] (0,0) rectangle (2,2);
    \draw[thick] (0,1) -- (2,1);
    \draw[thick] (1,0) -- (1,2);

   %fill here    
\node at (0.5,0.5) {\textbf{1.2\%}};  % cc  
\node at (1.5,1.5) {\textbf{91.9\%}}; % ii  
\node at (1.5,0.5) {\textbf{2.9\%}};  % ic  
\node at (0.5,1.5) {\textbf{3.9\%}};  % ci

 % Axis labels
    \node[anchor=center] at (0.5,-0.25) {\footnotesize Correct};
    \node[anchor=center] at (1.5,-0.25) {\footnotesize Incorrect};
    \node[anchor=center, rotate=90] at (-0.25,0.5) {\footnotesize Correct};
    \node[anchor=center, rotate=90] at (-0.25,1.5) {\footnotesize Incorrect};

    % Step labels
    \node at (1,2.3) {\small Step1};
    \node[rotate=90] at (-0.6,1) {\small Step2};
\end{tikzpicture}
\caption{$\times$ Answer}
\end{subfigure}
\caption{Trained Qwen2.5 1.5B with $\alpha=1$ achieves $4.74\%$ final answer accuracy.
% For both correct and incorrect final answers, the model achieves $15.6\%$ and $1.2\%$ accuracy for the intermediate tokens over both steps respectively.
}
\label{fig:1.5B_instruct_alpha_1_step_breakdown}
\end{figure}

\begin{figure}[h]
\centering
\begin{subfigure}{0.45\linewidth}
\centering
\begin{tikzpicture}[scale=1.2]
    % Left heatmap - Correct Final Answers
    \fill[med] (0,1) rectangle (1,2);    % 81.5%
    \fill[high] (1,1) rectangle (2,2);     % 7.8%
    \fill[med] (0,0) rectangle (1,1);     % 5.0%
    \fill[med] (1,0) rectangle (2,1);     % 5.6%

    \draw[thick] (0,0) rectangle (2,2);
    \draw[thick] (0,1) -- (2,1);
    \draw[thick] (1,0) -- (1,2);

   % fill here
\node at (0.5,0.5) {\textbf{18.3\%}}; % cc  
\node at (1.5,1.5) {\textbf{48.7\%}}; % ii  
\node at (1.5,0.5) {\textbf{15.0\%}}; % ic  
\node at (0.5,1.5) {\textbf{17.9\%}}; % ci

    % Axis labels
    \node[anchor=center] at (0.5,-0.25) {\footnotesize Correct};
    \node[anchor=center] at (1.5,-0.25) {\footnotesize Incorrect};
    \node[anchor=center, rotate=90] at (-0.25,0.5) {\footnotesize Correct};
    \node[anchor=center, rotate=90] at (-0.25,1.5) {\footnotesize Incorrect};

    % Step labels
    \node at (1,2.3) {\small Step1};
    \node[rotate=90] at (-0.6,1) {\small Step2};
\end{tikzpicture}
\caption{\checkmark Answer}
\end{subfigure}
\begin{subfigure}{0.45\linewidth}
\centering
\begin{tikzpicture}[scale=1.2]
    % Right heatmap - Incorrect Final Answers
    \fill[low] (0,1) rectangle (1,2);     % 30.5%
    \fill[high] (1,1) rectangle (2,2);     % 22.4%
    \fill[low] (0,0) rectangle (1,1);     % 11.6%
    \fill[low] (1,0) rectangle (2,1);    % 35.4%

    \draw[thick] (0,0) rectangle (2,2);
    \draw[thick] (0,1) -- (2,1);
    \draw[thick] (1,0) -- (1,2);

   %fill here    
\node at (0.5,0.5) {\textbf{0.9\%}};  % cc  
\node at (1.5,1.5) {\textbf{93.7\%}}; % ii  
\node at (1.5,0.5) {\textbf{2.2\%}};  % ic  
\node at (0.5,1.5) {\textbf{3.2\%}};  % ci

 % Axis labels
    \node[anchor=center] at (0.5,-0.25) {\footnotesize Correct};
    \node[anchor=center] at (1.5,-0.25) {\footnotesize Incorrect};
    \node[anchor=center, rotate=90] at (-0.25,0.5) {\footnotesize Correct};
    \node[anchor=center, rotate=90] at (-0.25,1.5) {\footnotesize Incorrect};

    % Step labels
    \node at (1,2.3) {\small Step1};
    \node[rotate=90] at (-0.6,1) {\small Step2};
\end{tikzpicture}
\caption{$\times$ Answer}
\end{subfigure}
\caption{Trained Qwen2.5 1.5B with $\alpha=2$ achieves $2.73\%$ final answer accuracy.
% For both correct and incorrect final answers, the model achieves $18.3\%$ and $0.9\%$ accuracy for the intermediate tokens over both steps respectively.
}
\label{fig:1.5B_instruct_alpha_2_step_breakdown}
\end{figure}

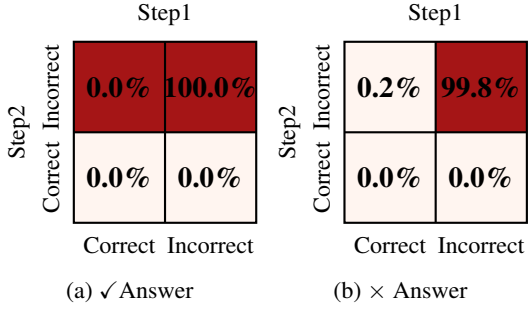
\begin{figure}[h]
\centering
\begin{subfigure}{0.45\linewidth}
\centering
\begin{tikzpicture}[scale=1.2]
    % Left heatmap - Correct Final Answers
    \fill[high] (0,1) rectangle (1,2);    % 81.5%
    \fill[high] (1,1) rectangle (2,2);     % 7.8%
    \fill[low] (0,0) rectangle (1,1);     % 5.0%
    \fill[low] (1,0) rectangle (2,1);     % 5.6%

    \draw[thick] (0,0) rectangle (2,2);
    \draw[thick] (0,1) -- (2,1);
    \draw[thick] (1,0) -- (1,2);

   % fill here
\node at (0.5,0.5) {\textbf{0.0\%}};   % cc  
\node at (1.5,1.5) {\textbf{100.0\%}}; % ii  
\node at (1.5,0.5) {\textbf{0.0\%}};   % ic  
\node at (0.5,1.5) {\textbf{0.0\%}};   % ci

    % Axis labels
    \node[anchor=center] at (0.5,-0.25) {\footnotesize Correct};
    \node[anchor=center] at (1.5,-0.25) {\footnotesize Incorrect};
    \node[anchor=center, rotate=90] at (-0.25,0.5) {\footnotesize Correct};
    \node[anchor=center, rotate=90] at (-0.25,1.5) {\footnotesize Incorrect};

    % Step labels
    \node at (1,2.3) {\small Step1};
    \node[rotate=90] at (-0.6,1) {\small Step2};
\end{tikzpicture}
\caption{\checkmark Answer}
\end{subfigure}
\begin{subfigure}{0.45\linewidth}
\centering
\begin{tikzpicture}[scale=1.2]
    % Right heatmap - Incorrect Final Answers
    \fill[low] (0,1) rectangle (1,2);     % 30.5%
    \fill[high] (1,1) rectangle (2,2);     % 22.4%
    \fill[low] (0,0) rectangle (1,1);     % 11.6%
    \fill[low] (1,0) rectangle (2,1);    % 35.4%

    \draw[thick] (0,0) rectangle (2,2);
    \draw[thick] (0,1) -- (2,1);
    \draw[thick] (1,0) -- (1,2);

   %fill here    
\node at (0.5,0.5) {\textbf{0.0\%}};   % cc  
\node at (1.5,1.5) {\textbf{99.8\%}};  % ii  
\node at (1.5,0.5) {\textbf{0.0\%}};   % ic  
\node at (0.5,1.5) {\textbf{0.2\%}};   % ci

 % Axis labels
    \node[anchor=center] at (0.5,-0.25) {\footnotesize Correct};
    \node[anchor=center] at (1.5,-0.25) {\footnotesize Incorrect};
    \node[anchor=center, rotate=90] at (-0.25,0.5) {\footnotesize Correct};
    \node[anchor=center, rotate=90] at (-0.25,1.5) {\footnotesize Incorrect};

    % Step labels
    \node at (1,2.3) {\small Step1};
    \node[rotate=90] at (-0.6,1) {\small Step2};
\end{tikzpicture}
\caption{$\times$ Answer}
\end{subfigure}
\caption{Trained Qwen2.5 1.5B with $\alpha=\infty$ achieves only $0.01\%$ final answer accuracy.
% For both correct and incorrect final answers, the model fails at predicting any correct intermediate tokens.
}
\label{fig:1.5B_instruct_alpha_inf_step_breakdown}
\end{figure}

\paragraph{Qwen-2.5-7B-Instruct.} In addition to the observations in Section~\ref{subsec:cil_langsym_exps} with $\alpha=0$, an interesting trend with increasing $\alpha$ is that incorrect Step 2 predictions tend to have a relatively higher role in the incorrect final answer prediction. For instance when $\alpha=0.5$ results in $\tacc=0.6296$ (Figure~\ref{fig:7B_instruct_alpha_0.5_step_breakdown}), then $22.9\%$ of the $3704$ incorrect final answers can be attributed to incorrect Step 2 predictions and $11.8\%$ to incorrect Step 1 predictions. As $\alpha$ increases to $\infty$ (Figure~\ref{fig:7B_instruct_alpha_inf_step_breakdown}), these numbers change to $40.3\%$ and $4.3\%$ respectively. The gradual change can be clearly observed in Figure~\ref{fig:7B_instruct_alpha_1_step_breakdown} for $\alpha=1$ and Figure~\ref{fig:7B_instruct_alpha_2_step_breakdown} for $\alpha=2$.

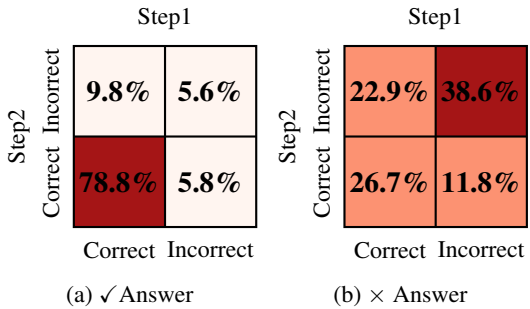
\begin{figure}[h]
\centering
\begin{subfigure}{0.45\linewidth}
\centering
\begin{tikzpicture}[scale=1.2]
    % Left heatmap - Correct Final Answers
    \fill[low] (0,1) rectangle (1,2);    % 81.5%
    \fill[low] (1,1) rectangle (2,2);     % 7.8%
    \fill[high] (0,0) rectangle (1,1);     % 5.0%
    \fill[low] (1,0) rectangle (2,1);     % 5.6%

    \draw[thick] (0,0) rectangle (2,2);
    \draw[thick] (0,1) -- (2,1);
    \draw[thick] (1,0) -- (1,2);

 \node at (0.5,0.5) {\textbf{78.8\%}}; % cc  
\node at (1.5,1.5) {\textbf{5.6\%}};  % ii  
\node at (1.5,0.5) {\textbf{5.8\%}};  % ic  
\node at (0.5,1.5) {\textbf{9.8\%}};  % ci

    % Axis labels
    \node[anchor=center] at (0.5,-0.25) {\footnotesize Correct};
    \node[anchor=center] at (1.5,-0.25) {\footnotesize Incorrect};
    \node[anchor=center, rotate=90] at (-0.25,0.5) {\footnotesize Correct};
    \node[anchor=center, rotate=90] at (-0.25,1.5) {\footnotesize Incorrect};

    % Step labels
    \node at (1,2.3) {\small Step1};
    \node[rotate=90] at (-0.6,1) {\small Step2};
\end{tikzpicture}
\caption{\checkmark Answer}
\end{subfigure}
\begin{subfigure}{0.45\linewidth}
\centering
\begin{tikzpicture}[scale=1.2]
    % Right heatmap - Incorrect Final Answers
    \fill[med] (0,1) rectangle (1,2);     % 30.5%
    \fill[high] (1,1) rectangle (2,2);     % 22.4%
    \fill[med] (0,0) rectangle (1,1);     % 11.6%
    \fill[med] (1,0) rectangle (2,1);    % 35.4%

    \draw[thick] (0,0) rectangle (2,2);
    \draw[thick] (0,1) -- (2,1);
    \draw[thick] (1,0) -- (1,2);

\node at (0.5,0.5) {\textbf{26.7\%}}; % cc  
\node at (1.5,1.5) {\textbf{38.6\%}}; % ii  
\node at (1.5,0.5) {\textbf{11.8\%}}; % ic  
\node at (0.5,1.5) {\textbf{22.9\%}}; % ci

 % Axis labels
    \node[anchor=center] at (0.5,-0.25) {\footnotesize Correct};
    \node[anchor=center] at (1.5,-0.25) {\footnotesize Incorrect};
    \node[anchor=center, rotate=90] at (-0.25,0.5) {\footnotesize Correct};
    \node[anchor=center, rotate=90] at (-0.25,1.5) {\footnotesize Incorrect};

    % Step labels
    \node at (1,2.3) {\small Step1};
    \node[rotate=90] at (-0.6,1) {\small Step2};
\end{tikzpicture}
\caption{$\times$ Answer}
\end{subfigure}
\caption{Trained Qwen2.5 7B with $\alpha=0.5$ achieves $62.96\%$ final answer accuracy.
% For both correct and incorrect final answers, the model achieves $78.8\%$ and $26.7\%$ accuracy for the intermediate tokens over both steps respectively.
}
\label{fig:7B_instruct_alpha_0.5_step_breakdown}
\end{figure}

\begin{figure}[h]
\centering
\begin{subfigure}{0.45\linewidth}
\centering
\begin{tikzpicture}[scale=1.2]
    % Left heatmap - Correct Final Answers
    \fill[low] (0,1) rectangle (1,2);    % 81.5%
    \fill[low] (1,1) rectangle (2,2);     % 7.8%
    \fill[high] (0,0) rectangle (1,1);     % 5.0%
    \fill[low] (1,0) rectangle (2,1);     % 5.6%

    \draw[thick] (0,0) rectangle (2,2);
    \draw[thick] (0,1) -- (2,1);
    \draw[thick] (1,0) -- (1,2);

\node at (0.5,0.5) {\textbf{77.8\%}}; % cc  
\node at (1.5,1.5) {\textbf{6.5\%}};  % ii  
\node at (1.5,0.5) {\textbf{5.8\%}};  % ic  
\node at (0.5,1.5) {\textbf{9.9\%}};  % ci

    % Axis labels
    \node[anchor=center] at (0.5,-0.25) {\footnotesize Correct};
    \node[anchor=center] at (1.5,-0.25) {\footnotesize Incorrect};
    \node[anchor=center, rotate=90] at (-0.25,0.5) {\footnotesize Correct};
    \node[anchor=center, rotate=90] at (-0.25,1.5) {\footnotesize Incorrect};

    % Step labels
    \node at (1,2.3) {\small Step1};
    \node[rotate=90] at (-0.6,1) {\small Step2};
\end{tikzpicture}
\caption{\checkmark Answer}
\end{subfigure}
\begin{subfigure}{0.45\linewidth}
\centering
\begin{tikzpicture}[scale=1.2]
    % Right heatmap - Incorrect Final Answers
    \fill[med] (0,1) rectangle (1,2);     % 30.5%
    \fill[high] (1,1) rectangle (2,2);     % 22.4%
    \fill[med] (0,0) rectangle (1,1);     % 11.6%
    \fill[med] (1,0) rectangle (2,1);    % 35.4%

    \draw[thick] (0,0) rectangle (2,2);
    \draw[thick] (0,1) -- (2,1);
    \draw[thick] (1,0) -- (1,2);

\node at (0.5,0.5) {\textbf{25.6\%}}; % cc  
\node at (1.5,1.5) {\textbf{39.8\%}}; % ii  
\node at (1.5,0.5) {\textbf{10.9\%}}; % ic  
\node at (0.5,1.5) {\textbf{23.7\%}}; % ci

 % Axis labels
    \node[anchor=center] at (0.5,-0.25) {\footnotesize Correct};
    \node[anchor=center] at (1.5,-0.25) {\footnotesize Incorrect};
    \node[anchor=center, rotate=90] at (-0.25,0.5) {\footnotesize Correct};
    \node[anchor=center, rotate=90] at (-0.25,1.5) {\footnotesize Incorrect};

    % Step labels
    \node at (1,2.3) {\small Step1};
    \node[rotate=90] at (-0.6,1) {\small Step2};
\end{tikzpicture}
\caption{$\times$ Answer}
\end{subfigure}
\caption{Trained Qwen2.5 7B with $\alpha=1$ achieves $58.85\%$ final answer accuracy.
% For both correct and incorrect final answers, the model achieves $77.8\%$ and $25.6\%$ accuracy for the intermediate tokens over both steps respectively.
}
\label{fig:7B_instruct_alpha_1_step_breakdown}
\end{figure}

\begin{figure}[h]
\centering
\begin{subfigure}{0.45\linewidth}
\centering
\begin{tikzpicture}[scale=1.2]
    % Left heatmap - Correct Final Answers
    \fill[low] (0,1) rectangle (1,2);    % 81.5%
    \fill[low] (1,1) rectangle (2,2);     % 7.8%
    \fill[high] (0,0) rectangle (1,1);     % 5.0%
    \fill[low] (1,0) rectangle (2,1);     % 5.6%

    \draw[thick] (0,0) rectangle (2,2);
    \draw[thick] (0,1) -- (2,1);
    \draw[thick] (1,0) -- (1,2);

   % fill here
\node at (0.5,0.5) {\textbf{74.2\%}}; % cc  
\node at (1.5,1.5) {\textbf{7.9\%}};  % ii  
\node at (1.5,0.5) {\textbf{6.7\%}};  % ic  
\node at (0.5,1.5) {\textbf{11.2\%}}; % ci

    % Axis labels
    \node[anchor=center] at (0.5,-0.25) {\footnotesize Correct};
    \node[anchor=center] at (1.5,-0.25) {\footnotesize Incorrect};
    \node[anchor=center, rotate=90] at (-0.25,0.5) {\footnotesize Correct};
    \node[anchor=center, rotate=90] at (-0.25,1.5) {\footnotesize Incorrect};

    % Step labels
    \node at (1,2.3) {\small Step1};
    \node[rotate=90] at (-0.6,1) {\small Step2};
\end{tikzpicture}
\caption{\checkmark Answer}
\end{subfigure}
\begin{subfigure}{0.45\linewidth}
\centering
\begin{tikzpicture}[scale=1.2]
    % Right heatmap - Incorrect Final Answers
    \fill[med] (0,1) rectangle (1,2);     % 30.5%
    \fill[high] (1,1) rectangle (2,2);     % 22.4%
    \fill[med] (0,0) rectangle (1,1);     % 11.6%
    \fill[low] (1,0) rectangle (2,1);    % 35.4%

    \draw[thick] (0,0) rectangle (2,2);
    \draw[thick] (0,1) -- (2,1);
    \draw[thick] (1,0) -- (1,2);

   %fill here    
\node at (0.5,0.5) {\textbf{24.5\%}}; % cc  
\node at (1.5,1.5) {\textbf{40.6\%}}; % ii  
\node at (1.5,0.5) {\textbf{11.1\%}}; % ic  
\node at (0.5,1.5) {\textbf{23.8\%}}; % ci

 % Axis labels
    \node[anchor=center] at (0.5,-0.25) {\footnotesize Correct};
    \node[anchor=center] at (1.5,-0.25) {\footnotesize Incorrect};
    \node[anchor=center, rotate=90] at (-0.25,0.5) {\footnotesize Correct};
    \node[anchor=center, rotate=90] at (-0.25,1.5) {\footnotesize Incorrect};

    % Step labels
    \node at (1,2.3) {\small Step1};
    \node[rotate=90] at (-0.6,1) {\small Step2};
\end{tikzpicture}
\caption{$\times$ Answer}
\end{subfigure}
\caption{Trained Qwen2.5 7B with $\alpha=2$ achieves $53.86\%$ final answer accuracy.
% For both correct and incorrect final answers, the model achieves $74.2\%$ and $24.5\%$ accuracy for the intermediate tokens over both steps respectively.
}
\label{fig:7B_instruct_alpha_2_step_breakdown}
\end{figure}

\begin{figure}[h]
\centering
\begin{subfigure}{0.45\linewidth}
\centering
\begin{tikzpicture}[scale=1.2]
    % Left heatmap - Correct Final Answers
    \fill[high] (0,1) rectangle (1,2);    % 81.5%
    \fill[med] (1,1) rectangle (2,2);     % 7.8%
    \fill[high] (0,0) rectangle (1,1);     % 5.0%
    \fill[low] (1,0) rectangle (2,1);     % 5.6%

    \draw[thick] (0,0) rectangle (2,2);
    \draw[thick] (0,1) -- (2,1);
    \draw[thick] (1,0) -- (1,2);

   % fill here
\node at (0.5,0.5) {\textbf{38.5\%}}; % cc  
\node at (1.5,1.5) {\textbf{27.3\%}}; % ii  
\node at (1.5,0.5) {\textbf{3.2\%}};  % ic  
\node at (0.5,1.5) {\textbf{31.0\%}}; % ci

    % Axis labels
    \node[anchor=center] at (0.5,-0.25) {\footnotesize Correct};
    \node[anchor=center] at (1.5,-0.25) {\footnotesize Incorrect};
    \node[anchor=center, rotate=90] at (-0.25,0.5) {\footnotesize Correct};
    \node[anchor=center, rotate=90] at (-0.25,1.5) {\footnotesize Incorrect};

    % Step labels
    \node at (1,2.3) {\small Step1};
    \node[rotate=90] at (-0.6,1) {\small Step2};
\end{tikzpicture}
\caption{\checkmark Answer}
\end{subfigure}
\begin{subfigure}{0.45\linewidth}
\centering
\begin{tikzpicture}[scale=1.2]
    % Right heatmap - Incorrect Final Answers
    \fill[high] (0,1) rectangle (1,2);     % 30.5%
    \fill[high] (1,1) rectangle (2,2);     % 22.4%
    \fill[med] (0,0) rectangle (1,1);     % 11.6%
    \fill[low] (1,0) rectangle (2,1);    % 35.4%

    \draw[thick] (0,0) rectangle (2,2);
    \draw[thick] (0,1) -- (2,1);
    \draw[thick] (1,0) -- (1,2);

   %fill here    
\node at (0.5,0.5) {\textbf{11.4\%}}; % cc  
\node at (1.5,1.5) {\textbf{43.9\%}}; % ii  
\node at (1.5,0.5) {\textbf{4.3\%}};  % ic  
\node at (0.5,1.5) {\textbf{40.3\%}}; % ci

 % Axis labels
    \node[anchor=center] at (0.5,-0.25) {\footnotesize Correct};
    \node[anchor=center] at (1.5,-0.25) {\footnotesize Incorrect};
    \node[anchor=center, rotate=90] at (-0.25,0.5) {\footnotesize Correct};
    \node[anchor=center, rotate=90] at (-0.25,1.5) {\footnotesize Incorrect};

    % Step labels
    \node at (1,2.3) {\small Step1};
    \node[rotate=90] at (-0.6,1) {\small Step2};
\end{tikzpicture}
\caption{$\times$ Answer}
\end{subfigure}
\caption{Trained Qwen2.5 7B with $\alpha=\infty$ achieves $5.06\%$ final answer accuracy.
% For both correct and incorrect final answers, the model achieves $38.5\%$ and $11.4\%$ accuracy for the intermediate tokens over both steps respectively.
}
\label{fig:7B_instruct_alpha_inf_step_breakdown}
\end{figure}
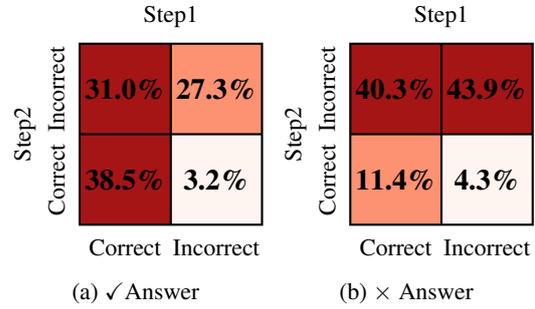

\paragraph{Reliance on the underlying DAG.} In addition to verifying the correctness of intermediate steps, we also analyze their relevance by checking if the final answer is causally dependent on them via the ground truth DAG. The results for \texttt{Qwen-2.5-0.5B-Instruct} are presented in Figure~\ref{fig:0.5B_instruct_alpha_0_dag_breakdown} ($\alpha=0$), Figure~\ref{fig:0.5B_instruct_alpha_0.5_dag_breakdown} ($\alpha=0.5$), Figure~\ref{fig:0.5B_instruct_alpha_1_dag_breakdown} ($\alpha=1$), Figure~\ref{fig:0.5B_instruct_alpha_2_dag_breakdown} ($\alpha=2$) and Figure~\ref{fig:0.5B_instruct_alpha_inf_dag_breakdown} ($\alpha=\infty$). In particular, Figure~\ref{fig:0.5B_instruct_alpha_0_dag_breakdown:cor_ans_step_1} for $\alpha=0$ illustrates that out of the $44$ prompts with correct final answer predictions, $0$ prompts had correct intermediate Step 1 predictions when the final answer had a causal dependency on this intermediate word (i.e Step 1 output). Similarly, when there was no underlying causal dependency, $6$ prompts had correct whereas $34$ of them had incorrect Step 1 predictions.

The results for \texttt{Qwen-2.5-1.5B-Instruct} are presented in Figure~\ref{fig:1.5B_instruct_alpha_0_dag_breakdown} ($\alpha=0$), Figure~\ref{fig:1.5B_instruct_alpha_0.5_dag_breakdown} ($\alpha=0.5$), Figure~\ref{fig:1.5B_instruct_alpha_1_dag_breakdown} ($\alpha=1$), Figure~\ref{fig:1.5B_instruct_alpha_2_dag_breakdown} ($\alpha=2$) and Figure~\ref{fig:1.5B_instruct_alpha_inf_dag_breakdown} ($\alpha=\infty$) and for \texttt{Qwen-2.5-7B-Instruct} are presented in Figure~\ref{fig:7B_instruct_alpha_0_dag_breakdown} ($\alpha=0$), Figure~\ref{fig:7B_instruct_alpha_0.5_dag_breakdown} ($\alpha=0.5$), Figure~\ref{fig:7B_instruct_alpha_1_dag_breakdown} ($\alpha=1$), Figure~\ref{fig:7B_instruct_alpha_2_dag_breakdown} ($\alpha=2$) and Figure~\ref{fig:7B_instruct_alpha_inf_dag_breakdown} ($\alpha=\infty$). A key observation is that: the final answer is causally dependent on the Step 2 intermediate word in (relatively) more number of prompts than the Step 1 intermediate word. This inherent preference for Step 2 outputs in the DAG (Figure~\ref{fig:7B_instruct_alpha_0_dag_breakdown}) explains the results in Figure~\ref{fig:7b_instruct_alpha_0_step_analysis} (Section~\ref{subsec:cil_langsym_exps}) and the extended results presented above where incorrect Step 2 outputs resulted in relatively more incorrect final answers when compared to Step 1. In essence, as the model continues to learn the underlying DAG to predict the final answer, its sensitivity to important intermediate steps also increases. Thus, mistakes in important intermediate steps can lead to incorrect final answers.

%----- 0.5B -------
\begin{figure}[h]
\centering
% ---- correct answer
\begin{subfigure}{0.45\linewidth}
\centering
% [inline block 0: 60 envs, 67886 chars -> data_tex | \begin{tikzpicture}[scale=1.2]     % Left heatmap - Correct Final Answers...]

\caption{$\times$ Ans: Step 2}
\end{subfigure}

\caption{DAG breakdown for Trained Qwen2.5 7B with $\alpha=\infty$ .}
\label{fig:7B_instruct_alpha_inf_dag_breakdown}
\end{figure}

%======================== END of DAG figures

% \begin{figure*}[h!]
% \begin{tcolorbox}[colback=black!10, colframe=black!80, width=\textwidth, arc=0pt, outer arc=0pt, boxsep=0pt, left=5pt, right=5pt, top=0pt, bottom=0pt]
% {\tiny\begin{lstlisting}[basicstyle=\tiny\ttfamily\color{black}\bfseries,
%   keywordstyle=\color{codeblue}\bfseries, % Keywords in bold blue
%   commentstyle=\color{codegreen}\itshape, % Comments in italic green
%   stringstyle=\color{black}            % Strings in gray
% ]
% <|im_start|>system
% You are a helpful problem solver who follows instructions.

% Given some examples of string transformations, you must figure out the underlying transformation rule and apply it to the
% set of words in the last example.

% Instructions for the response:
% a. If you choose to think out loud, you must start your response with: <|im_start|>think
% .
% b. The final answer must always be provided in the: <|im_start|>final answer
% \\boxed{} format.
% c. You should not think or explain after providing the final answer.
% <|im_end|>
% <|im_start|>user
% Given the input words: bkojbeez, mbjyklav, bjtorqfo, zrfvpvlf, what is the final answer after the string transformations?
% <|im_end|>
% <|im_start|>assistant
% <|im_start|>think
% Step 1: cffasrgp
% Step 2: srgptshq
% <|im_start|>final answer
% \\boxed{tshqutir}<|im_end|>
% <|im_start|>user
% Given the input words: jpnyxmlp, lwsxxunz, kvhjsnmt, orzzxgkh, what is the final answer after the string transformations?
% <|im_end|>
% <|im_start|>assistant
% <|im_start|>think
% Step 1: ynmqtonu
% Step 2: tonuupov
% <|im_start|>final answer
% \\boxed{upovvqpw}<|im_end|>
% <|im_start|>user
% Given the input words: mspqqgzv, ccjhnzlo, bxtnupxi, wocqkibx, what is the final answer after the string transformations?
% <|im_end|>
% <|im_start|>assistant
% <|im_start|>think
% Step 1: rhawvqyj
% Step 2: vqyjwrzk
% <|im_start|>final answer
% \\boxed{wrzkxsal}<|im_end|>
% <|im_start|>user
% Given the input words: kbpelmlu, gytncuaq, tznpzbgg, msguzcfh, what is the final answer after the string transformations?
% <|im_end|>
% <|im_start|>assistant
% <|im_start|>think
% Step 1: mnmvachh
% Step 2: achhbdii
% <|im_start|>final answer
% \\boxed{bdiicejj}<|im_end|>
% <|im_start|>user
% Given the input words: rllpfxwr, huinchtx, tlayygbi, uquhilgx, what is the final answer after the string transformations?
% <|im_end|>
% <|im_start|>assistant
% <|im_start|>think
% Step 1: gyxszhcj
% Step 2: zhcjaidk
% <|im_start|>final answer
% \\boxed{aidkbjel}<|im_end|>
% <|im_start|>user
% Given the input words: vltyssyq, vbxfuast, fihdspyz, cxaujrvg, what is the final answer after the string transformations?
% <|im_end|>
% <|im_start|>assistant
% <|im_start|>think
% Step 1: ttzrtqza
% Step 2: tqzaurab
% <|im_start|>final answer
% \\boxed{urabvsbc}<|im_end|>

% ...
% <|im_start|>user
% Given the input words: vvbrhzqu, ycjowjhl, jdiaxycl, yvsszuan, what is the final answer after the string transformations?
% <|im_end|>
% <|im_start|>assistant
% <|im_start|>think
% Step 1: iarvyzdm
% Step 2: yzdmzaen
% <|im_start|>final answer
% \\boxed{zaenabfo}<|im_end|>
% <|im_start|>user
% Given the input words: beuczfgl, oeonqlks, paeqzqha, lfzflisd, what is the final answer after the string transformations?
% <|im_end|>
% <|im_start|>assistant

% \end{lstlisting}}
% \end{tcolorbox}
% \caption{Example evaluation prompt from the \texttt{CIL-LangSym} dataset with $K=39$ CoT examples and the query formatted using the \texttt{Qwen-2.5-7B-Instruct} tokenizer chat template.}
% \label{fig:cil_langsym_example}
% \end{figure*}

\begin{figure*}[h!]
    \centering
    \includegraphics[width=\linewidth]{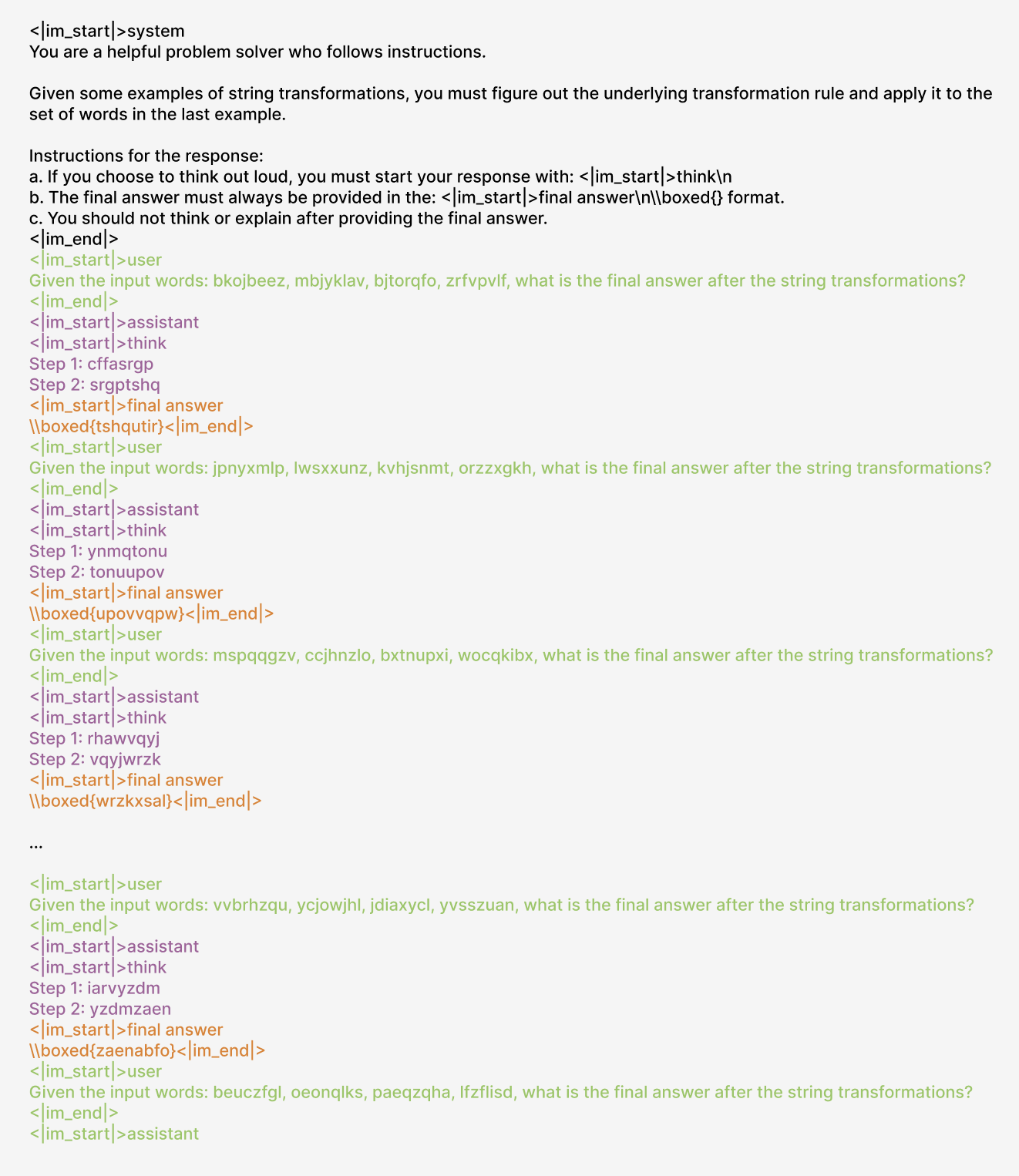}
    \caption{Example evaluation prompt from the \texttt{CIL-LangSym} dataset with $K=39$ CoT examples (truncated in the image for brevity) and the query formatted using the \texttt{Qwen-2.5-7B-Instruct} tokenizer chat template.}
    \label{fig:cil_langsym_example}
\end{figure*}

\clearpage

%% 7b 
\begin{table*}[htbp!]
\centering
\begin{tabular}{|c|c|c|}
\hline
\textbf{Model} & \textbf{`Force Think' Response} & \textbf{`Force Answer' Response} \\
\hline
\textbf{Ground Truth} & \begin{tcolorbox}[colback=black!10, colframe=black!80, width=0.3\linewidth, arc=0pt, outer arc=0pt, boxsep=0pt, left=5pt, right=5pt, top=0pt, bottom=0pt]
{\tiny\begin{lstlisting}[basicstyle=\tiny\ttfamily\color{black}\bfseries,
  keywordstyle=\color{codeblue}\bfseries, % Keywords in bold blue
  commentstyle=\color{codegreen}\itshape, % Comments in italic green
  stringstyle=\color{black}            % Strings in gray
]
Step 1: aghmarib
Step 2: aribbsjc
<|im_start|>final answer
\\boxed{bsjcctkd}
\end{lstlisting}}
\end{tcolorbox} & \begin{tcolorbox}[colback=black!10, colframe=black!80, width=0.3\linewidth, arc=0pt, outer arc=0pt, boxsep=0pt, left=5pt, right=5pt, top=0pt, bottom=0pt]
{\tiny\begin{lstlisting}[basicstyle=\tiny\ttfamily\color{black}\bfseries,
  keywordstyle=\color{codeblue}\bfseries, % Keywords in bold blue
  commentstyle=\color{codegreen}\itshape, % Comments in italic green
  stringstyle=\color{black}            % Strings in gray
]
\\boxed{bsjcctkd}
\end{lstlisting}}
\end{tcolorbox} \\
\hline

\textbf{Baseline} & \begin{tcolorbox}[colback=black!10, colframe=black!80, width=0.3\linewidth, arc=0pt, outer arc=0pt, boxsep=0pt, left=5pt, right=5pt, top=0pt, bottom=0pt]
{\tiny\begin{lstlisting}[basicstyle=\tiny\ttfamily\color{black}\bfseries,
  keywordstyle=\color{codeblue}\bfseries, % Keywords in bold blue
  commentstyle=\color{codegreen}\itshape, % Comments in italic green
  stringstyle=\color{black}            % Strings in gray
]
Step 1: cfdwgrjm
Step 2: grjmhnko
Step 3: hnkoiplp
...
Step 90: jjjjjjjj
\end{lstlisting}}
\end{tcolorbox} & \begin{tcolorbox}[colback=black!10, colframe=black!80, width=0.3\linewidth, arc=0pt, outer arc=0pt, boxsep=0pt, left=5pt, right=5pt, top=0pt, bottom=0pt]
{\tiny\begin{lstlisting}[basicstyle=\tiny\ttfamily\color{black}\bfseries,
  keywordstyle=\color{codeblue}\bfseries, % Keywords in bold blue
  commentstyle=\color{codegreen}\itshape, % Comments in italic green
  stringstyle=\color{black}            % Strings in gray
]
\\boxed{mndyehjt}
\end{lstlisting}}
\end{tcolorbox} \\
\hline

\textbf{Trained ($\alpha = 0$)} & \begin{tcolorbox}[colback=black!10, colframe=black!80, width=0.3\linewidth, arc=0pt, outer arc=0pt, boxsep=0pt, left=5pt, right=5pt, top=0pt, bottom=0pt]
{\tiny\begin{lstlisting}[basicstyle=\tiny\ttfamily\color{black}\bfseries,
  keywordstyle=\color{codeblue}\bfseries, % Keywords in bold blue
  commentstyle=\color{codegreen}\itshape, % Comments in italic green
  stringstyle=\color{black}            % Strings in gray
]
Step 1: aghmarib
Step 2: aribbsjc
<|im_start|>final answer
\\{bsjcttkd}
\end{lstlisting}}
\end{tcolorbox} &  \begin{tcolorbox}[colback=black!10, colframe=black!80, width=0.3\linewidth, arc=0pt, outer arc=0pt, boxsep=0pt, left=5pt, right=5pt, top=0pt, bottom=0pt]
{\tiny\begin{lstlisting}[basicstyle=\tiny\ttfamily\color{black}\bfseries,
  keywordstyle=\color{codeblue}\bfseries, % Keywords in bold blue
  commentstyle=\color{codegreen}\itshape, % Comments in italic green
  stringstyle=\color{black}            % Strings in gray
]
\\boxed{mhmbairo}
\end{lstlisting}}
\end{tcolorbox} \\
\hline

\textbf{Trained ($\alpha = 0.5$)} & \begin{tcolorbox}[colback=black!10, colframe=black!80, width=0.3\linewidth, arc=0pt, outer arc=0pt, boxsep=0pt, left=5pt, right=5pt, top=0pt, bottom=0pt]
{\tiny\begin{lstlisting}[basicstyle=\tiny\ttfamily\color{black}\bfseries,
  keywordstyle=\color{codeblue}\bfseries, % Keywords in bold blue
  commentstyle=\color{codegreen}\itshape, % Comments in italic green
  stringstyle=\color{black}            % Strings in gray
]
Step 1: aghmarib
Step 2: aribbsjc
<|im_start|>final answer
\\boxed{bsjcctkd}
\end{lstlisting}}
\end{tcolorbox} & \begin{tcolorbox}[colback=black!10, colframe=black!80, width=0.3\linewidth, arc=0pt, outer arc=0pt, boxsep=0pt, left=5pt, right=5pt, top=0pt, bottom=0pt]
{\tiny\begin{lstlisting}[basicstyle=\tiny\ttfamily\color{black}\bfseries,
  keywordstyle=\color{codeblue}\bfseries, % Keywords in bold blue
  commentstyle=\color{codegreen}\itshape, % Comments in italic green
  stringstyle=\color{black}            % Strings in gray
]
\\boxed{btfjcugh}
\end{lstlisting}}
\end{tcolorbox} \\
\hline

\textbf{Trained ($\alpha = 1$)} & \begin{tcolorbox}[colback=black!10, colframe=black!80, width=0.3\linewidth, arc=0pt, outer arc=0pt, boxsep=0pt, left=5pt, right=5pt, top=0pt, bottom=0pt]
{\tiny\begin{lstlisting}[basicstyle=\tiny\ttfamily\color{black}\bfseries,
  keywordstyle=\color{codeblue}\bfseries, % Keywords in bold blue
  commentstyle=\color{codegreen}\itshape, % Comments in italic green
  stringstyle=\color{black}            % Strings in gray
]
Step 1: aghmairb
Step 2: airbbjsc
<|im_start|>final answer
\\boxed{bjsccktn}
\end{lstlisting}}
\end{tcolorbox} & \begin{tcolorbox}[colback=black!10, colframe=black!80, width=0.3\linewidth, arc=0pt, outer arc=0pt, boxsep=0pt, left=5pt, right=5pt, top=0pt, bottom=0pt]
{\tiny\begin{lstlisting}[basicstyle=\tiny\ttfamily\color{black}\bfseries,
  keywordstyle=\color{codeblue}\bfseries, % Keywords in bold blue
  commentstyle=\color{codegreen}\itshape, % Comments in italic green
  stringstyle=\color{black}            % Strings in gray
]
\\boxed{btfccugh}
\end{lstlisting}}
\end{tcolorbox} \\
\hline

\textbf{Trained ($\alpha = 2$)} & \begin{tcolorbox}[colback=black!10, colframe=black!80, width=0.3\linewidth, arc=0pt, outer arc=0pt, boxsep=0pt, left=5pt, right=5pt, top=0pt, bottom=0pt]
{\tiny\begin{lstlisting}[basicstyle=\tiny\ttfamily\color{black}\bfseries,
  keywordstyle=\color{codeblue}\bfseries, % Keywords in bold blue
  commentstyle=\color{codegreen}\itshape, % Comments in italic green
  stringstyle=\color{black}            % Strings in gray
]
Step 1: aghmarih
Step 2: arihbsji
<|im_start|>final answer
\\boxed{bsjictkj}
\end{lstlisting}}
\end{tcolorbox} & \begin{tcolorbox}[colback=black!10, colframe=black!80, width=0.3\linewidth, arc=0pt, outer arc=0pt, boxsep=0pt, left=5pt, right=5pt, top=0pt, bottom=0pt]
{\tiny\begin{lstlisting}[basicstyle=\tiny\ttfamily\color{black}\bfseries,
  keywordstyle=\color{codeblue}\bfseries, % Keywords in bold blue
  commentstyle=\color{codegreen}\itshape, % Comments in italic green
  stringstyle=\color{black}            % Strings in gray
]
\\boxed{tmbuuncv}
\end{lstlisting}}
\end{tcolorbox} \\
\hline

\textbf{Trained ($\alpha = \infty$)} & \begin{tcolorbox}[colback=black!10, colframe=black!80, width=0.3\linewidth, arc=0pt, outer arc=0pt, boxsep=0pt, left=5pt, right=5pt, top=0pt, bottom=0pt]
{\tiny\begin{lstlisting}[basicstyle=\tiny\ttfamily\color{black}\bfseries,
  keywordstyle=\color{codeblue}\bfseries, % Keywords in bold blue
  commentstyle=\color{codegreen}\itshape, % Comments in italic green
  stringstyle=\color{black}            % Strings in gray
]
Step 1: aghmarib
Step 2: nbsjbstk
<|im_start|>final answer
\\boxed{cbtkcdul}
\end{lstlisting}}
\end{tcolorbox} & \begin{tcolorbox}[colback=black!10, colframe=black!80, width=0.3\linewidth, arc=0pt, outer arc=0pt, boxsep=0pt, left=5pt, right=5pt, top=0pt, bottom=0pt]
{\tiny\begin{lstlisting}[basicstyle=\tiny\ttfamily\color{black}\bfseries,
  keywordstyle=\color{codeblue}\bfseries, % Keywords in bold blue
  commentstyle=\color{codegreen}\itshape, % Comments in italic green
  stringstyle=\color{black}            % Strings in gray
]
\\boxed{aghmbnhc}
\end{lstlisting}}
\end{tcolorbox} \\
\hline

\end{tabular}
\caption{Ground truth output for the evaluation prompt in Figure~\ref{fig:cil_langsym_example} and responses from baseline and SFT'ed \texttt{Qwen-2.5-7B-Instruct} models using `Force Think' strategy with prompt suffix: \texttt{<|im\_start|>think\textbackslash n} and `Force Answer' strategy with prompt suffix: \texttt{<|im\_start|>final answer\textbackslash n}. Notice that only the $\alpha=0.5$ model gives the correct answer with `Force Think' strategy whereas none of the responses with `Force Answer' are correct. }
\label{tab:cil_ft_fa}
\end{table*}
\clearpage

% \section{Summary of Notations}
% \label{app:notation_summary}

\begin{table*}[!htbp]%[ht]
\centering
\begin{tabular}{|c|c|}
\hline
\textbf{Notation} & \textbf{Description}  \\ \hline
$t_{\texttt{pad}}$ & The padding token \\ \hline
$t_{\texttt{bos}}$ & The begin-of-sequence token \\ \hline
$t_{\texttt{eos}}$ & The end-of-sequence token \\ \hline
$t_{\texttt{inp\_start}}$ & The token to indicate the start of $N$  \textit{input} tokens. \\ \hline
$t_{\texttt{inp\_end}}$ & The token to indicate the end of $N$  \textit{input} tokens. \\ \hline
$t_{\texttt{think\_start}}$ & The token to indicate the start of $C-1$  \textit{intermediate} tokens. \\ \hline
$t_{\texttt{think\_end}}$ & The token to indicate the end of $C-1$  \textit{intermdiate} tokens. \\ \hline
$t_{\texttt{ans\_start}}$ & The token to indicate the start of the  \textit{answer} token. \\ \hline
$t_{\texttt{ans\_end}}$ & The token to indicate the end of the  \textit{answer} token. \\ \hline
$\gV_{\texttt{normal}}$ & A vocabulary of abstract tokens used as inputs and chain tokens. \\ \hline
$\gV_{\texttt{special}}$ & A vocabulary of abstract tokens used as delimiters \\ \hline
$\gV$ & A unified vocabulary of abstract tokens ($\gV = \gV_{\texttt{normal}} \cup \gV_{\texttt{special}}$) \\ \hline
$\mE_{\texttt{data}}$ & The `unknown' data embedding matrix corresponding to $\gV$ \\ \hline
$d$ & Embedding dimension for abstract tokens based on $\mE_{\texttt{data}} \in \sR^{|\gV| \times d}$ \\ \hline
$\gG$ & Function class to filter tokens \\ \hline
$\gH$ & Function class to process abstract token embeddings (i.e, rows in $\mE_{\texttt{data}}$) \\ \hline
$\gF$ & Function class composed of $\gG, \gH$ \\ \hline
$l$ & MLP depth in $\gH$ \\ \hline
$\phi$ & MLP activation function in $\gH$ \\ \hline
 $N$   & Number of input tokens per example   \\ \hline  
 $M$   & Number of tokens selected by $\gG$   \\ \hline  
 $C$   & Number of chain tokens (Chain length)  \\ \hline 
 $K$   & Number of examples per sequence   \\ \hline  
 \texttt{TF}   & A decoder only transformer model   \\ \hline  
 $\vx = (x_1, \cdots, x_N) \in \gV^N$ & Input tokens in an example \\ \hline  
 $\vy = (y_1, \cdots, y_C) \in \gV^C$ & Chain tokens in an example \\ \hline  
  $\vp^K(f, r_{CoT})$ & Tokenized sequence generated using $f \in \gF$ and $r_{CoT}$ with $K$ examples. \\ \hline
  $\tTF(\cdot)$ & A single auto-regressive greedy token generation by the $\tTF$ model. \\ \hline
   $\tTF^{\circ C_{eos}}(\cdot)$ & The auto-regressive greedy token generation by the $\tTF$ model until $t_{eos}$. \\ \hline
\end{tabular}
\caption{A summary of notations used throughout the paper.}
\label{tab:notations}
\end{table*}

%-----------End of heatmaps-----
\end{document}